\pdfoutput=1 %
\documentclass{article}
\ifdefined\pdfsuppressptexinfo\pdfsuppressptexinfo=-1\fi

\usepackage{arxiv}
\usepackage{times}
\usepackage{natbib}
\setcitestyle{authoryear,round,citesep={;},aysep={,},yysep={;}}
\makeatletter
\renewenvironment{table}{\@float{table}}{\end@float}
\makeatother
\makeatletter
\newcommand{\staveneedspace}[1]{\par\penalty-100\begingroup\setlength{\dimen@}{#1}\dimen@ii\pagegoal\advance\dimen@ii-\pagetotal
  \ifdim\dimen@>\dimen@ii\ifdim\dimen@ii>\z@\vfil\fi\break\fi\endgroup}
\makeatother
\usepackage[T1]{fontenc}
\usepackage[utf8]{inputenc}
\usepackage{amsmath,amssymb,amsthm,mathtools}
\usepackage{graphicx}
\usepackage{wrapfig}
\usepackage{pifont}
\usepackage[table]{xcolor}
\usepackage{booktabs,array,tabularx,multirow}
\IfFileExists{pdflscape.sty}{\usepackage{pdflscape}}{\usepackage{lscape}}
\usepackage{microtype}
\usepackage[most]{tcolorbox}
\usepackage{hyperref}
\usepackage{url}

\graphicspath{{./}{figures/}}
\newcommand{\grp}[1]{\textsc{#1}}
\definecolor{stave}{HTML}{EA580C}
\definecolor{staveink}{HTML}{9A3B08}
\definecolor{stavehighlight}{HTML}{FFF7ED}
\definecolor{paramsmall}{HTML}{E5F2EA}
\definecolor{parammedium}{HTML}{F1E3B5}
\definecolor{paramlarge}{HTML}{F5DDDA}
\definecolor{tableheader}{HTML}{E8E8E8}
\definecolor{promptback}{HTML}{F2F6FA}
\definecolor{promptframe}{HTML}{7893AD}
\definecolor{deltaup}{HTML}{15803D}
\definecolor{deltadown}{HTML}{B91C1C}
\newcommand{\dup}[1]{\textcolor{deltaup}{$+#1$}}
\newcommand{\ddown}[1]{\textcolor{deltadown}{$-#1$}}
\newcommand{\stavelabel}{\textbf{STAVE}}
\newcommand{\tablekey}[1]{\par\vspace{3pt}{\footnotesize\raggedright #1\par}\vspace{4pt}}
\newcolumntype{N}{>{\raggedleft\arraybackslash}X}
\newcommand{\tok}[1]{{\fontfamily{zi4}\selectfont #1}}

\newtheoremstyle{staveplain}{6pt}{6pt}{\normalfont}{}{\bfseries}{.}{0.5em}{\thmname{#1}\thmnumber{ #2}\thmnote{ (#3)}}
\theoremstyle{staveplain}
\newtheorem{theorem}{Theorem}[section]
\newtheorem{proposition}[theorem]{Proposition}

\newtheorem{corollary}[theorem]{Corollary}
\newtheorem{assumption}[theorem]{Assumption}

\newenvironment{stavebox}{%
  \begin{tcolorbox}[enhanced,breakable,colback=stavehighlight,
    colframe=stave!55,boxrule=0.45pt,arc=1pt,
    left=7pt,right=7pt,top=1pt,bottom=1pt,before skip=8pt,after skip=8pt]
}{\end{tcolorbox}}
\newenvironment{datasetexample}[1]{%
  \begin{tcolorbox}[enhanced,breakable,colback=promptback,
    colframe=promptframe,boxrule=0.55pt,arc=1pt,
    title={#1},fonttitle=\bfseries,coltitle=black,
    left=7pt,right=7pt,top=5pt,bottom=5pt,before skip=8pt,after skip=8pt]
}{\end{tcolorbox}}

\DeclareMathOperator{\diag}{diag}

\DeclareMathOperator{\softmax}{softmax}
\DeclareMathOperator{\col}{col}
\DeclareMathOperator*{\argmin}{arg\,min}
\newcommand{\R}{\mathbb{R}}
\newcommand{\E}{\mathbb{E}}
\newcommand{\ind}{\mathbf{1}}

\title{Two Vectors Replace In-Context Demos: Structured Task Adaptation via Embeddings}
\newcommand{\affwisc}{\textsuperscript{$\spadesuit$}}
\newcommand{\affnu}{\textsuperscript{$\heartsuit$}}
\author{Xi Ding\affwisc\quad Naichen Shi\affnu\quad Jiawei Zhang\affwisc\\[5pt]
\normalfont \affwisc University of Wisconsin--Madison\qquad \affnu Northwestern University}
\date{}
\renewcommand{\shorttitle}{Structured Task Adaptation via Embeddings}
\renewcommand{\headeright}{}
\makeatletter
\renewcommand{\@maketitle}{%
  \vbox{%
    \hsize\textwidth
    \linewidth\hsize
    \vskip 0.1in
    \@toptitlebar
    \centering
    {\LARGE\sc \@title\par}
    \@bottomtitlebar
    \begin{tabular}[t]{c}\bf\rule{\z@}{24\p@}\@author\end{tabular}%
    \vskip 0.22in
  }%
}
\makeatother
\hypersetup{pdftitle={Two Vectors Replace In-Context Demos: Structured Task Adaptation via Embeddings},pdfauthor={Xi Ding, Naichen Shi, Jiawei Zhang}}

\begin{document}
\addtocontents{toc}{\protect\setcounter{tocdepth}{-1}}
\maketitle

\begin{abstract}
In-context learning (ICL) adapts frozen large multimodal models (LMMs) to new tasks from a few demonstrations (demos), but re-encodes them at every query, where each demo image adds up to hundreds of visual tokens. Demo-free methods remove this cost with a compact task state. However, they add it at locations searched per task or at every decoder layer, where task parameters grow with depth. Moreover, inserted tokens or keys cannot change how the original prompt divides its attention within a layer. To address these issues, we propose Structured Task Adaptation via Embeddings (\textbf{STAVE}), which replaces demos with two task-specific vectors added to existing input embeddings. Specifically, a readout vector updates the answer-producing tokens and a context vector updates the other structural token groups. Both are trained with answer labels on prompts with and without demos. We justify these design choices theoretically using a first-order analysis of the loss and a margin bound. Extensive experiments on six LMMs and five large language models show that STAVE matches or outperforms state-of-the-art methods on multimodal tasks with far fewer task parameters and surpasses 15-shot ICL and prior task vectors on 18 text tasks, all at zero-shot inference cost.
\end{abstract}

\section{Introduction}
\label{sec:intro}

In-context learning (ICL) lets a frozen large language model (LLM) perform a new task from a few input--output demonstrations (demos) placed in its prompt \citep{brown2020language,dong2024survey}. Large multimodal models (LMMs) inherit this ability \citep{alayrac2022flamingo,laurencon2024matters,li2024llava}, so one frozen model can handle visual question answering (VQA) or image captioning without training. This flexibility is costly, since every query re-encodes its demos \citep{liu2022few} and each demo image expands into up to hundreds of visual tokens. Its accuracy also depends on which demos are chosen and in what order they appear \citep{zhao2021calibrate}. We study how to adapt a frozen model to a task at the prompt length and cost of zero-shot inference.

Demo-free methods store the task in a compact state added to the frozen model at inference. Extracted task vectors take this state from demo activations. The task vector (TV) \citep{hendel-etal-2023-context} and the function vector (FV) \citep{todd2024function} add one vector at an intermediate layer, while MTV \citep{huang2024multimodal} and STV \citep{ma2026sensitivity} place it at attention heads selected for each task. Trained interventions instead optimize the state with supervision. LIVE \citep{peng2024live} learns a shift vector for every decoder layer, and MimIC \citep{jiang2025mimic} and HiFICL \citep{li2026hificl} learn per-head shifts or virtual keys and values in every layer to approximate the effect of demos on the query. Prompt tuning (PT) \citep{lester-etal-2021-power} inserts learned virtual tokens into the input prompt.

These methods remove the demos at inference, yet two limitations remain. \textbf{\textit{First}, the task state enters either at a location searched for each task or at every decoder layer.} A vector added at an intermediate layer cannot act through the blocks before it, and its layer or heads must be chosen for each task. Trained interventions avoid this choice by adding a shift to every layer, so their task parameters and per-query computation grow with model depth. \textbf{\textit{Second}, methods that insert tokens or keys, such as PT and HiFICL, cannot change how the original prompt divides its attention within a layer.} For fixed queries and keys of the original tokens, added keys scale all their attention weights by one factor, so the relative attention between the image and the question stays unchanged \citep{petrov2024prompting}. PT must also choose where to insert its tokens, before the prompt, after it, or before its final token, and each inserted token lengthens the prompt and its computation.

To address these limitations, we ask two questions: where a demo-free update should enter a frozen model, and how it can be shared across tokens without losing answer-relevant directions. For the first, only an update at the input embeddings acts through every decoder block, and to first order an unrestricted input update can reproduce the first-answer effect of shifts in every layer. Added to existing tokens, the update keeps the prompt length and changes their attention queries and keys, so it can reweight the attention between the image and the question. For the second, structural token groups, such as the question text and the query image, recur in every prompt of a task, so one vector per group serves every query. Sharing has a cost: the final prompt token predicts the first answer token, whereas the other tokens reach it only through attention and still hold query-specific evidence, and a vector shared with many weak-gradient tokens, or with tokens whose gradients point in other directions, has a diluted first-order loss decrease.

\begin{wrapfigure}{r}{0.42\textwidth}
\vspace{-\intextsep}
\centering
\includegraphics[width=\linewidth,trim=0 14 0 15,clip]{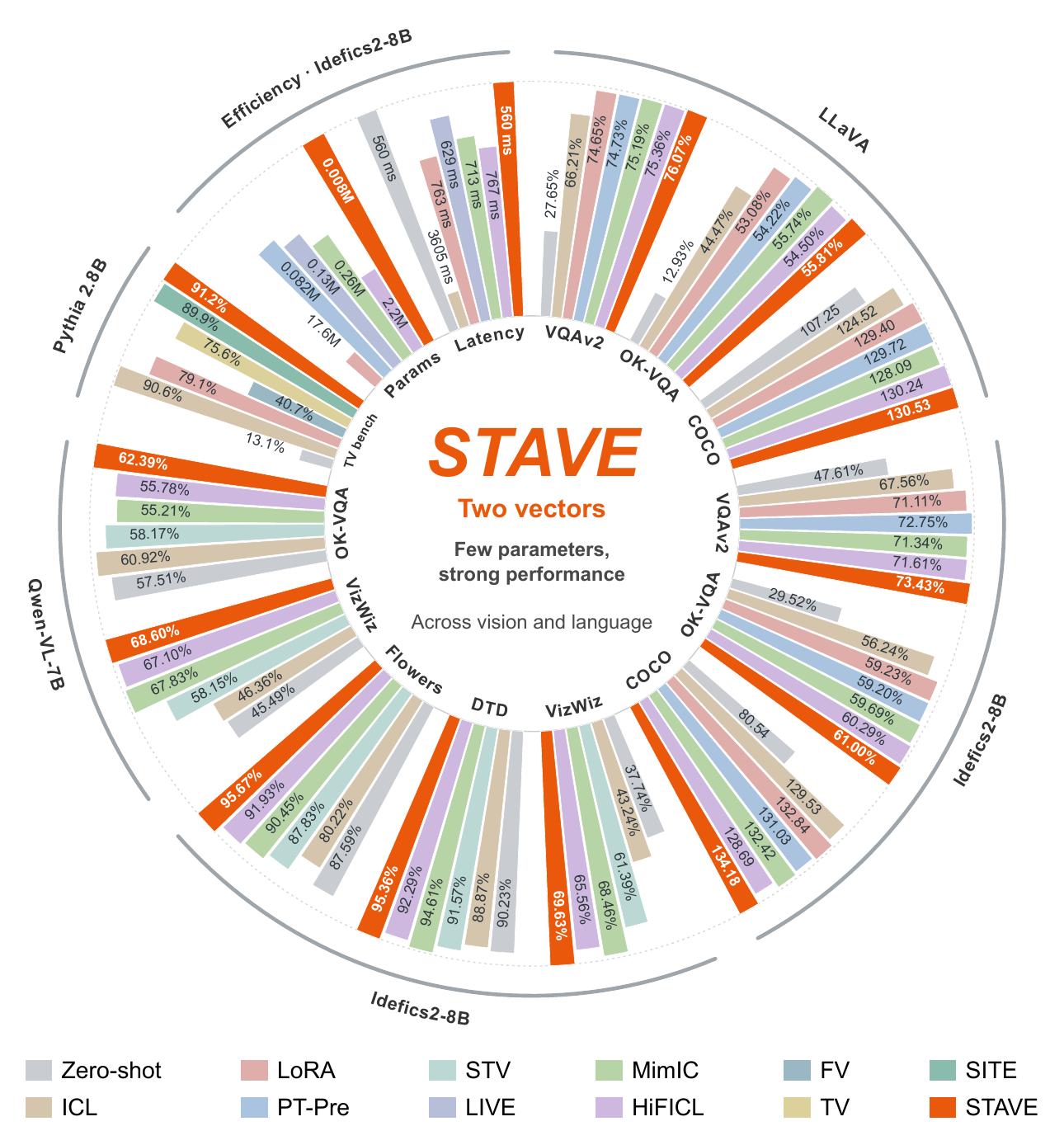}\par
\vspace{-13pt}
\caption{STAVE performance and efficiency. Bars are relative to STAVE, and params use a log scale (Appendix~\ref{app:overview-figure}).}
\label{fig:performance-overview}
\vspace{-\intextsep}
\end{wrapfigure}
Based on this analysis, we propose \textbf{S}tructured \textbf{T}ask \textbf{A}daptation \textbf{v}ia \textbf{E}mbeddings (STAVE), which adds two task-specific vectors to the input embeddings of a frozen LLM or LMM. A readout vector updates only the final prompt token and the answer cue, and a context vector covers the remaining groups. Both are trained with answer labels instead of an ICL teacher, on queries with and without demos, and added once to the demo-free prompt.

\textbf{Contributions.} \ding{182}~\textbf{\underline{Framework}.} We propose STAVE, which replaces in-context demos with a readout vector and a context vector added to the input embeddings of a frozen model. Its task parameters do not grow with model depth, and it runs at zero-shot inference cost. \ding{183}~\mbox{\textbf{\underline{Theoretical Explanation}.}} We show that only an input update reaches every decoder block, but one vector shared there is limited in norm and direction. Weak-gradient tokens such as image tokens dilute its norm, and its single direction cannot follow readout and context gradients that differ. At equal total norm, two separately normalized vectors never give a smaller first-order loss decrease, and their advantage grows as these gradients diverge. We also bound the margin cost of tying groups and show when training with and without demos lowers the demo-free loss. \ding{184}~\mbox{\textbf{\underline{Extensive Evaluations}.}} On multimodal and text tasks, STAVE matches or outperforms per-layer interventions with far fewer task parameters and surpasses ICL and task vectors without demos at inference (Figure~\ref{fig:performance-overview}). \ding{185}~\textbf{\underline{Insight}.} The gain comes from the readout/context separation, not a modality split or a second vector. The readout vector sets a shared answer format, the context vector adds query-specific evidence, and together they mainly elicit answers the model already holds.

\section{Motivating Analysis}
\label{sec:motivating-analysis}

We answer the two questions of Section~\ref{sec:intro} for an additive prompt update, which changes a frozen decoder's output without demos. Its \emph{reach} decides where it should enter the model, and the \emph{first-order cost of sharing} decides how to share it across tokens without losing answer-relevant directions.

\begin{stavebox}
\begin{proposition}[Reach of a prompt update]
\label{prop:depth-reach}
\label{prop:final-depth}
Fix a prompt and teacher-forced answer prefixes in a differentiable causal decoder with $N_{\mathrm{layer}}$ blocks and a positionwise final head. Depth zero denotes input embeddings and depth $\ell$ the output of block $\ell$.
\textbf{(i)} A prompt-only update at depth $\ell$ can act through later blocks and the head. After the last block, with stored keys and values unchanged, \emph{only the final prompt token can affect answer logits, and only the first prediction}.
\textbf{(ii)} If all intervening own-token Jacobians are invertible, any infinitesimal change of all depth-$\ell$ prompt states has a unique unrestricted per-token realization at an earlier depth.
\end{proposition}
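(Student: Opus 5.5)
I will fix notation and then treat the two parts separately; both follow from causal masking and the chain rule. Let the prompt occupy positions $1,\dots,T$ and the teacher-forced answer prefix positions $T+1,\dots,T+A$. Write $h^{(\ell)}_t$ for the depth-$\ell$ state at position $t$, and $h^{(\ell)}_t = f_\ell\bigl(h^{(\ell-1)}_t;\{h^{(\ell-1)}_s\}_{s<t}\bigr)$ for block $\ell$. Here the first argument is the token's own residual stream, and the second enters only through the keys and values of earlier positions. The final head maps $h^{(N_{\mathrm{layer}})}_t$ to the logits predicting token $t+1$.

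For \textbf{(i)}, I first note that a perturbation $\delta h^{(\ell)}_s$ at prompt positions $s\le T$ with $\ell<N_{\mathrm{layer}}$ propagates by composition. It acts through block $\ell+1$ on its own position via $f_{\ell+1}$'s first argument, and on every later position $t>s$ (including answer positions) via keys and values. Induction over blocks then shows it can affect every later state and hence, through the head, every answer logit. This is the "can act" half, an existence claim that only needs the dependency graph, not a nonvanishing derivative.

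The substantive claim is the case $\ell=N_{\mathrm{layer}}$. No block remains, so the answer-position states $h^{(N_{\mathrm{layer}})}_{T+j}$ are computed from stored keys and values that are held fixed, and they do not change. Since the head is positionwise, the logits at position $t$ depend only on $h^{(N_{\mathrm{layer}})}_t$. Prompt positions $t<T$ produce logits that predict prompt tokens, not answer tokens. Position $T$ produces the logits of the first answer token, and answer positions $T+j$ produce the later predictions, which are unchanged. Hence only $h_T$ matters, and only for the first prediction.

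For \textbf{(ii)}, I will prove the claim one block at a time and then compose. At block $\ell$, linearize: $\delta h^{(\ell)}_t = J_t\,\delta h^{(\ell-1)}_t + \sum_{s<t} K_{t,s}\,\delta h^{(\ell-1)}_s$, where $J_t$ is the own-token Jacobian and $K_{t,s}$ are the cross-token Jacobians through attention. This linear map on the stacked prompt states is block lower triangular in position, with diagonal blocks $J_t$. If every $J_t$ is invertible, the map is invertible, and forward substitution over $t=1,\dots,T$ gives the unique preimage $\delta h^{(\ell-1)}_t = J_t^{-1}\bigl(\delta h^{(\ell)}_t-\sum_{s<t}K_{t,s}\delta h^{(\ell-1)}_s\bigr)$. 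Causality guarantees that prompt states do not depend on answer positions, so restricting to the prompt is legitimate. Iterating down from depth $\ell$ to the target depth gives existence and uniqueness of the realization, since a composition of invertible maps is invertible. "Unrestricted per-token" means each position gets its own vector, which is exactly what this solution requires.

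The main obstacle is making precise the "stored keys and values unchanged" convention in (i) and checking that the triangular structure really has the $J_t$ on its diagonal. Within block $\ell$, position $t$'s own key and value also depend on $h^{(\ell-1)}_t$, so these terms must be absorbed into $J_t$ and not counted in the off-diagonal $K_{t,t}$. I would handle this by defining $J_t$ as the full partial derivative of $h^{(\ell)}_t$ with respect to $h^{(\ell-1)}_t$, including the self-attention path.
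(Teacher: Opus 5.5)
Your proposal is correct and follows essentially the same route as the paper. For (i) you use causality, so a depth-$\ell$ change cannot touch keys or values of blocks $\le\ell$, together with the positionwise head. For (ii) you use the block lower-triangular prompt-state Jacobian with invertible own-token diagonal blocks, composed over the intervening blocks; your forward substitution and your point about absorbing the self-attention path into $J_t$ just spell out what the paper states directly. The paper also adds one line you could include: the first-answer logits factor through the depth-$\ell$ prompt states, so the earlier-depth realization reproduces their first-order effect.
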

\end{stavebox}

\textbf{Reach.}
Thus \emph{input injection can affect every block}. By part (ii), an unrestricted per-prompt input update matches the first-answer effect of simultaneous per-layer shifts to first order (Appendix~\ref{app:depth-reach-proof}).

Adding to existing tokens also keeps the sequence length. At a fixed attention layer, added keys scale all weights on the original tokens by one common factor when their queries, keys, and positional scores stay fixed, which preserves the ratios between them \citep{petrov2024prompting}. An embedding update can change those queries and keys and hence reorder the weights (Appendix~\ref{app:prompt-comparison}). However, a per-prompt update does not transfer to new queries, so reuse requires shared vectors.

\begin{wrapfigure}{R}{0.5\textwidth}
\vspace{-\intextsep}
\centering
\includegraphics[width=0.93\linewidth,trim=0 5 0 3,clip]{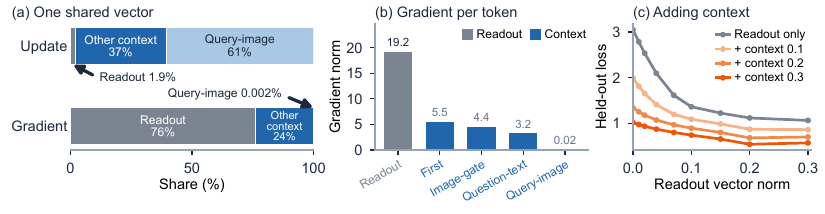}\par
\vspace{-9pt}
\caption{One vector falls short and two do better.}
\label{fig:motivation-chain}
\vspace{-\intextsep}
\end{wrapfigure}
\textbf{Sharing.}
Fix a differentiable loss and let $g_t\in\R^d$ be its gradient at input embedding $t$, where $d$ is the hidden dimension. For a nonempty token set $\mathcal S$ and budget $\rho>0$, one vector shared as $w/\sqrt{|\mathcal S|}$ has total update norm $\|w\|_2$. Its largest first-order loss decrease is
\begin{equation}
 \Gamma(\mathcal S)=\frac{\rho}{\sqrt{|\mathcal S|}}\left\|\sum_{t\in\mathcal S}g_t\right\|_2
 \leq\rho\left(\sum_{t\in\mathcal S}\|g_t\|_2^2\right)^{1/2}.
\label{eq:sharing-bound}
\end{equation}
Free per-token updates of equal total norm attain the right side. Equality holds only when all $g_t$ agree. For disjoint nonempty $\mathcal A,\mathcal B$, two vectors at equal total budget attain $\sqrt{\Gamma(\mathcal A)^2+\Gamma(\mathcal B)^2}\geq\Gamma(\mathcal A\cup\mathcal B)$. The gap splits into a norm term, zero when the mean gradients of $\mathcal A$ and $\mathcal B$ have equal norm, and a direction term that grows as the cosine between their summed gradients falls (Proposition~\ref{prop:sharing-cost}(iii)). If the gradients on $\mathcal B$ vanish, one shared vector attains only $\sqrt{|\mathcal A|/|\mathcal A\cup\mathcal B|}\,\Gamma(\mathcal A)$ (Proposition~\ref{prop:sharing-cost}). Figure~\ref{fig:motivation-chain} measures this in Idefics2 on Flowers before training (Appendix~\ref{app:motivation-chain}). One vector on all tokens splits its update by token count, so image tokens, with almost no gradient, receive most of it, and the final token and answer cue, with 76\% of the gradient, receive under 2\% (Figure~\ref{fig:motivation-chain}(a)).

\textbf{Design.}
STAVE therefore separates a \emph{readout set} $\mathcal A$, the final prompt token and answer cue, from the remaining \emph{context tokens} $\mathcal B$. The final token supplies the first-token readout (Proposition~\ref{prop:depth-reach}(i)), and $\mathcal A$ has the largest mean gradient per token (Figure~\ref{fig:motivation-chain}(b)), so a vector on $\mathcal A$ alone is not diluted by $\mathcal B$. Context tokens reach the readout only through attention and have a descent direction nearly orthogonal to that of $\mathcal A$ (Appendix~\ref{app:motivation-chain}), so one shared vector cannot point along both. Along these directions, the held-out loss levels off with the readout vector alone, whereas each added context vector norm lowers the whole curve (Figure~\ref{fig:motivation-chain}(c)). Section~\ref{sec:method} builds both vectors from reusable structural token groups and bounds the margin cost of tying groups within each vector (Theorem~\ref{thm:tying}).

\section{Method}
\label{sec:method}

STAVE adds two task-specific vectors to existing embeddings of a frozen autoregressive model. A \emph{readout vector} updates the final prompt token and answer cue, and a \emph{context vector} updates the other eligible tokens. Both are trained with and without demos and used without them (Figure~\ref{fig:method}).

\subsection{Structural embedding updates}
\label{sec:method-groups}
\label{sec:method-updates}

Let $f_\theta$ have frozen parameters $\theta$ and hidden dimension $d$. Prompt $p$ has input $E(p)\in\R^{T(p)\times d}$, where $T(p)$ counts tokens after modality expansion. A fixed, label-free map $s_t(p)\in\{0,\ldots,R\}$ assigns each token to a \emph{structural token group} or to zero for exclusion. The $R=7$ groups are \grp{Last} (final prompt token), \grp{Answer-cue} (template tokens introducing the answer before it), \grp{First} (designated first token), \grp{Image-gate} (query-image delimiters), \grp{Query-image} (expanded query-image embeddings), \grp{Question-text} (remaining query text), and \grp{Prefix-text} (instruction prefix).

Beginning-of-sequence (BOS) tokens, padding, answer tokens, and unmatched tokens are excluded. Demo tokens are also excluded, except \grp{First} when the template places it in a demo. \grp{First} is the first attended non-BOS token, or the first query-text token in captioning. Appendix~\ref{app:partition} gives the precedence and template conventions. Remaining tokens are eligible, and groups may be empty.

For $1\leq q\leq R$ learned vectors, let $a(k)\in\{1,\ldots,q\}$ assign group $k$ to a vector and set $a(0)=0$. Let $n_k(p)$ count group $k$ and $N_r(p)=\sum_{k:a(k)=r}n_k(p)$ count all tokens assigned to vector $r$. Define
\begin{equation}
 \widetilde E(p,W)=E(p)+H(p)W,\qquad
 H_{tr}(p)=
 \begin{cases}
 \ind\{a(s_t(p))=r\}/\sqrt{N_r(p)},&N_r(p)>0,\\
 0,&N_r(p)=0.
 \end{cases}
\label{eq:stave-injection}
\end{equation}
Here $W\in\R^{q\times d}$ holds one task's vectors as rows, $H(p)\in\R^{T(p)\times q}$ distributes them, and $\ind$ is the indicator. STAVE sets $q=2$: \grp{Last} and \grp{Answer-cue} share the readout vector, and the other five groups share the context vector. Each vector is normalized once by its merged token count $N_r(p)$.

\begin{figure}[!t]
\centering
\includegraphics[
    width=\textwidth,
    trim=1.8cm 1.82cm 1.8cm 1.45cm,
    clip
]{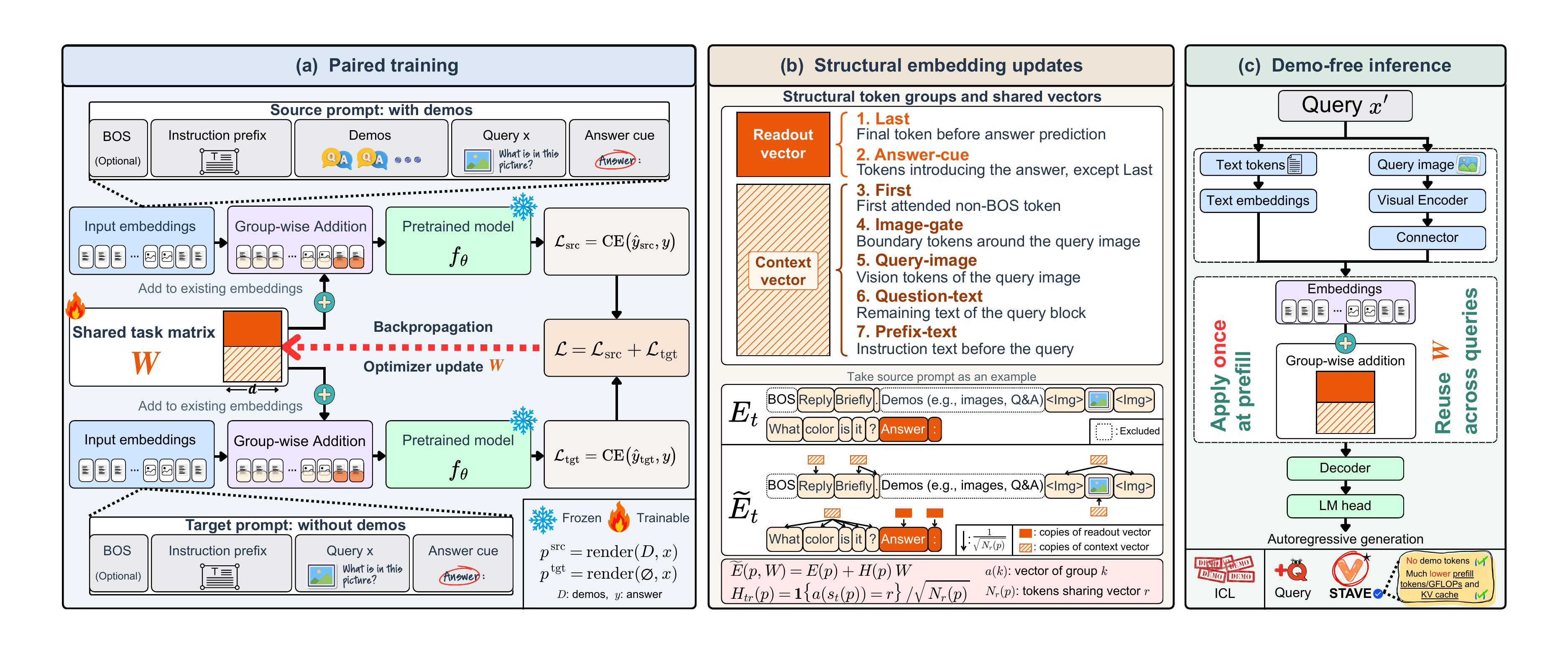}\par
\vspace{-3.2pt}
\caption{Illustration of the proposed framework STAVE in the multimodal setting.}
\label{fig:method}
\end{figure}

Nonempty columns of $H(p)$ have disjoint supports and unit norm. Hence $\|H(p)W\|_F\leq\|W\|_F$, with equality when $N_r(p)>0$ for every $r$. Replication thus cannot grow the injected energy. At $W=0$, \emph{the original computation is recovered exactly}. For a differentiable loss $F$, the gradient of vector $r$ is $N_r(p)^{-1/2}\sum_{t:a(s_t(p))=r}\nabla_{\widetilde E_t}F$, where $\widetilde E_t$ is the updated embedding of token $t$. At $W=0$, $\rho$ times its norm equals $\Gamma$ of that vector's token set in Equation~\ref{eq:sharing-bound} (Appendix~\ref{app:geometry-proof}).

\subsection{Margin cost of tying groups}
\label{sec:method-tying}

To compare tying with per-group updates, fix finitely many prompts and correct answer prefixes with equal group counts $n_k$. Scale each group's vector by $1/\sqrt{n_k}$ and stack them into $v\in\R^{Rd}$, where empty groups have no effect. Equation~\ref{eq:stave-injection} restricts $v$ to a count-weighted subspace $\mathcal U_q$ with orthogonal projector $P_q$. With zero empty-row parameters, the learned-row norm is $\|v\|_2$ (Appendix~\ref{app:tying}).

Let $h_j(v)$ be correct-token minus competing-token logit margin $j$, for a fixed collection of $N\geq1$ comparisons across these contexts. For radius $\rho>0$, define
\begin{equation}
 m_q(\rho)=\max_{v\in\mathcal U_q,\,\|v\|_2\leq\rho}\min_{1\leq j\leq N}h_j(v),
 \qquad
 \delta_q=\max_j\|(I-P_q)\nabla h_j(0)\|_2.
\label{eq:tying-diagnostic}
\end{equation}
Here $m_q$ is the \emph{best worst-case margin} in the tied class, $I$ is the identity, and $\delta_q$ measures sensitivity to discarded group contrasts. Define $m_R$ using the unrestricted group-coordinate ball.

\begin{stavebox}
\begin{theorem}[Cost of tying structural token groups]
\label{thm:tying}
Under the common-count setup above, suppose each $h_j$ is differentiable on a neighborhood of $\{v:\|v\|_2\leq\rho\}$ and its gradient is $\kappa$-Lipschitz on that ball, with $\kappa\geq0$. Then
\begin{equation}
 0\leq m_R(\rho)-m_q(\rho)\leq\rho\delta_q+\kappa\rho^2.
\label{eq:tying-guarantee}
\end{equation}
For affine margins, $\delta_q=0$ suffices for equal attainable margins.
\end{theorem}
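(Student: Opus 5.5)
The plan is to compare the two maximin problems by projecting an optimal unrestricted perturbation onto the tied subspace and controlling the resulting margin change with a first-order expansion. The lower bound $m_R(\rho)\geq m_q(\rho)$ is immediate, because $\mathcal U_q$ is a subspace of the group-coordinate space $\R^{Rd}$ and both problems use the same Euclidean ball. The feasible set of the tied problem is therefore contained in that of the unrestricted problem, and the maximum of the same objective $\min_j h_j$ cannot decrease. Existence of the maxima follows from continuity of each $h_j$ on the compact ball.

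For the upper bound, I will pick an unrestricted maximizer $v^\star$ with $\|v^\star\|_2\leq\rho$ and set $u=P_q v^\star$. Since $P_q$ is an orthogonal projector, $\|u\|_2\leq\|v^\star\|_2\leq\rho$, so $u$ is feasible for the tied problem and $m_q(\rho)\geq\min_j h_j(u)$. It then suffices to show $h_j(v^\star)-h_j(u)\leq\rho\delta_q+\kappa\rho^2$ for every $j$. Taking the minimum over $j$ gives $\min_j h_j(u)\geq m_R(\rho)-\rho\delta_q-\kappa\rho^2$, since $\min_j h_j(v^\star)=m_R(\rho)$.

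The key step is this per-margin estimate. Write $r=v^\star-u=(I-P_q)v^\star$, with $\|r\|_2\leq\rho$. The segment from $u$ to $v^\star$ lies in the convex ball, so by the fundamental theorem of calculus $h_j(v^\star)-h_j(u)=\int_0^1\nabla h_j(u+sr)^\top r\,ds$. I split $\nabla h_j(u+sr)=\nabla h_j(0)+[\nabla h_j(u+sr)-\nabla h_j(0)]$.

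\textbf{Zeroth-order term.} Because $r$ lies in the range of $I-P_q$ and the projector is symmetric, $\nabla h_j(0)^\top r=((I-P_q)\nabla h_j(0))^\top r\leq\delta_q\rho$.

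\textbf{Lipschitz term.} The gradient is $\kappa$-Lipschitz and $u+sr$ stays in the ball, so the bracket has norm at most $\kappa\|u+sr\|_2\leq\kappa\rho$. After integration this contributes at most $\kappa\rho\cdot\rho$.

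The main obstacle is getting the constant $\kappa\rho^2$ rather than something larger, such as $2\kappa\rho^2$. That is why I expand around $0$ and bound $\|u+sr\|_2\leq\rho$ through convexity of the ball, rather than bounding $\|u\|_2+s\|r\|_2$ separately. I also need the identity $\|u+sr\|_2^2=\|u\|_2^2+s^2\|r\|_2^2\leq\|v^\star\|_2^2$, which uses orthogonality of $u$ and $r$.

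For the affine case, $\kappa=0$ and each gradient $\nabla h_j$ is constant. If $\delta_q=0$, then $\nabla h_j^\top r=0$ for every $j$, so $h_j(v^\star)=h_j(P_qv^\star)$ exactly. Combined with the lower bound this gives $m_q=m_R$.
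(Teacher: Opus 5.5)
Your proof is correct and follows the paper's argument. Like the paper, you project an unrestricted maximizer onto $\mathcal U_q$, bound the discarded linear response by $\rho\delta_q$ using orthogonality of $(I-P_q)v^\star$, and bound the curvature contribution by $\kappa\rho^2$. The only difference is cosmetic: the paper Taylor-expands $h_j$ about $0$ at both $v^\star$ and $P_qv^\star$, with remainders of at most $\tfrac{\kappa}{2}\|v\|_2^2$ each, whereas you integrate along the segment between the two points, where convexity of the ball alone already gives $\|u+sr\|_2\le\rho$.
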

\end{stavebox}

The proof projects an unrestricted update onto $\mathcal U_q$ and bounds the omitted linear response and curvature. For readout/context sharing, $\delta_q$ measures count-weighted gradient variation within the two sets, and their mutual contrast is kept. \emph{Small $\delta_q$ and $\kappa$ therefore give a small margin cost.}

Equation~\ref{eq:stave-injection} and the objective are defined for any group counts. When group proportions vary across prompts, the theorem applies within each common-count collection.

\subsection{Paired supervision}
\label{sec:method-training}

For each training query $x_i$ with answer $y_i$, render $p_i^{\mathrm{src}}=\operatorname{render}(D_i,x_i)$ with demos $D_i$ and $p_i^{\mathrm{tgt}}=\operatorname{render}(\varnothing,x_i)$ without them. Both retain the query modality. For answer $y=(a_1,\ldots,a_L)$ of length $L\geq1$, use
\begin{align}
 \ell(y\mid p,W)&=-\frac{1}{L}\sum_{l=1}^{L}\log p_\theta(a_l\mid\widetilde E(p,W),a_{<l}),
\label{eq:stave-ce}\displaybreak[1]\\
 \mathcal L(W)&=\mathcal L_{\mathrm{src}}(W)+\mathcal L_{\mathrm{tgt}}(W),\qquad
 \mathcal L_b(W)=\frac{1}{n}\sum_{i=1}^{n}\ell(y_i\mid p_i^b,W).
\label{eq:stave-objective}
\end{align}
Here $p_\theta$ is the frozen next-token distribution, $a_{<l}$ is the correct prefix, $n$ counts training query--answer pairs, and $b\in\{\mathrm{src},\mathrm{tgt}\}$ selects a branch. The two sequence cross-entropy (CE) losses are equally weighted. Both branches of each paired minibatch contribute before an optimizer step. The vectors start at zero, and backbone parameters and answer-token input embeddings get no direct update.

The \emph{target branch} supervises deployment inputs. The \emph{source branch} fits the same vectors with demos present, using answer labels rather than soft labels or hidden-state targets. This adds a \emph{compatibility requirement}, and Proposition~\ref{prop:paired-descent} bounds its effect on the target loss after one gradient step.

\begin{stavebox}
\begin{proposition}[Target descent under paired supervision]
\label{prop:paired-descent}
Stack the learned vectors into $w\in\R^{qd}$ and write $g_b=\nabla_w\mathcal L_b(w)$ and $g=g_{\mathrm{src}}+g_{\mathrm{tgt}}$. If $\nabla\mathcal L_{\mathrm{tgt}}$ is $\beta$-Lipschitz on the segment from $w$ to $w-\eta g$, where $\beta\geq0$ and $\eta>0$, then
\begin{equation}
 \mathcal L_{\mathrm{tgt}}(w-\eta g)-\mathcal L_{\mathrm{tgt}}(w)
 \leq-\eta\langle g_{\mathrm{tgt}},g\rangle+\frac{\beta\eta^2}{2}\|g\|_2^2.
\label{eq:target-descent}
\end{equation}
\end{proposition}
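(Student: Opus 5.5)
This is the standard descent lemma applied to $\mathcal L_{\mathrm{tgt}}$ along the direction $-g$. The only hypothesis needed is Lipschitz continuity of $\nabla\mathcal L_{\mathrm{tgt}}$ on the segment, so I will not assume global smoothness.

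Set $\phi(s)=\mathcal L_{\mathrm{tgt}}(w-s\eta g)$ for $s\in[0,1]$. Because $\nabla\mathcal L_{\mathrm{tgt}}$ is Lipschitz on the segment, $\mathcal L_{\mathrm{tgt}}$ is differentiable there with a continuous gradient. Hence $\phi$ is $C^1$ with
\[
\phi'(s)=-\eta\langle\nabla\mathcal L_{\mathrm{tgt}}(w-s\eta g),g\rangle .
\]
The fundamental theorem of calculus gives $\mathcal L_{\mathrm{tgt}}(w-\eta g)-\mathcal L_{\mathrm{tgt}}(w)=\int_0^1\phi'(s)\,ds$. I will add and subtract $\nabla\mathcal L_{\mathrm{tgt}}(w)=g_{\mathrm{tgt}}$ inside the inner product. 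This splits the difference into
\[
-\eta\langle g_{\mathrm{tgt}},g\rangle-\eta\int_0^1\langle\nabla\mathcal L_{\mathrm{tgt}}(w-s\eta g)-\nabla\mathcal L_{\mathrm{tgt}}(w),g\rangle\,ds .
\]

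For the remainder term I use Cauchy--Schwarz and the $\beta$-Lipschitz bound on the segment:
\[
\bigl|\langle\nabla\mathcal L_{\mathrm{tgt}}(w-s\eta g)-\nabla\mathcal L_{\mathrm{tgt}}(w),g\rangle\bigr|\le\beta s\eta\|g\|_2^2 .
\]
Integrating, $\int_0^1 s\,ds=1/2$, so the remainder is at most $\beta\eta^2\|g\|_2^2/2$. Combining the two terms gives Equation~\ref{eq:target-descent}.

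There is no real obstacle. The one point needing care is that differentiability of $\mathcal L_{\mathrm{tgt}}$ along the segment has to come from the stated hypothesis, namely that $\nabla\mathcal L_{\mathrm{tgt}}$ exists and is Lipschitz there, so that the integral form of the fundamental theorem of calculus is valid. After the proof I would note the interpretation: $\langle g_{\mathrm{tgt}},g\rangle=\|g_{\mathrm{tgt}}\|_2^2+\langle g_{\mathrm{tgt}},g_{\mathrm{src}}\rangle$. So the paired step decreases the demo-free loss for small $\eta$ whenever the source gradient does not conflict too strongly with the target gradient, that is, whenever this inner product is positive.
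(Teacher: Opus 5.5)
Your proof is correct and is essentially the paper's argument. The paper also writes the difference as $-\eta\int_0^1\langle\nabla\mathcal L_{\mathrm{tgt}}(w-t\eta g),g\rangle\,dt$ and bounds it with the $\beta$-Lipschitz hypothesis; your add-and-subtract of $g_{\mathrm{tgt}}$, Cauchy--Schwarz, and $\int_0^1 t\,dt=1/2$ spell out what the paper compresses into one inequality, and your closing remark matches its condition $\|g_{\mathrm{tgt}}\|_2^2+\langle g_{\mathrm{tgt}},g_{\mathrm{src}}\rangle>0$.
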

\end{stavebox}

Thus $\|g_{\mathrm{tgt}}\|_2^2+\langle g_{\mathrm{tgt}},g_{\mathrm{src}}\rangle>0$ gives target descent for sufficiently small gradient steps within the smoothness region. \emph{Positive source--target alignment} also improves the first-order decrease relative to a target-only step. Appendix~\ref{app:target-descent} proves both statements and gives finite-step conditions.

\looseness=-1
For the whole objective, restrict $W$ to a Frobenius ball and define the \emph{source--target compatibility gap} $\mathcal C_{\mathrm{src}\mid\mathrm{tgt}}$ as the smallest excess source loss among target minimizers. Any $\widehat W$ within $\varepsilon_{\mathrm{opt}}\geq0$ of $\min\mathcal L$ then has excess target loss at most $\mathcal C_{\mathrm{src}\mid\mathrm{tgt}}+\varepsilon_{\mathrm{opt}}$, and the gap is zero when the two branches share a minimizer (Appendix~\ref{app:paired}). Under independent target sampling and bounded-loss and Lipschitz assumptions, Appendix~\ref{app:generalization} adds a uniform estimation term with covering dimension $qd=2d$. This separates representation size from compatibility, optimization, and statistical error.

\subsection{Demo-free inference and cost}
\label{sec:method-inference}

In LLMs, STAVE updates frozen input embeddings. In LMMs, it updates merged text and image embeddings after the frozen visual encoder and connector. At inference, add both vectors once during demo-free prefill. Generated embeddings receive no update. Storage is $2d$ task-specific scalars, independent of query count and demo length. Group counts are built from template metadata in $O(T(p))$ time, and adding the scaled vectors costs $O(T(p)d)$ operations. Prompt length and decoder prefill dimensions \emph{equal those of zero-shot inference}, as does key--value (KV) cache shape at a fixed generated length. Training is a one-time cost: both branches run the frozen model and backpropagate to the two vectors, with no backbone optimizer states or weight updates (Appendix~\ref{app:inference}).

\section{Experiments}
\label{sec:experiments}

\subsection{Setup}
\label{sec:exp-setup}
\newsavebox{\stavetabone}\newlength{\stavetabonew}\newlength{\stavequalw}
\newcount\stavecA\newcount\stavecB\newcount\stavecC

\textbf{Datasets, backbones, and protocols.}
We evaluate six frozen LMMs, LLaVA-Interleave-7B~\citep{li2024llava}, Idefics2-8B~\citep{laurencon2024matters}, Qwen-VL-7B~\citep{bai2023qwenvlversatilevisionlanguagemodel}, InternVL3.5-8B~\citep{wang2025internvl3}, Qwen2.5-VL-7B~\citep{bai2025qwen25vltechnicalreport}, and Idefics3-8B~\citep{laurenccon2024building}. The tasks are VQA with VQAv2~\citep{goyal2017vqav2}, OK-VQA~\citep{marino2019okvqa}, and VizWiz~\citep{gurari2018vizwiz}, image captioning with the COCO Karpathy split~\citep{lin2014microsoft,karpathy2015deep}, and fine-grained classification with DTD~\citep{cimpoi2014describing}, Flowers~\citep{nilsback2008automated}, and CUB~\citep{wah2011caltech}. We also evaluate five frozen LLMs, Pythia 2.8B, 6.9B, and 12B~\citep{biderman2023pythia}, GPT-J 6B~\citep{wang2021gptj}, and LLaMA 7B~\citep{touvron2023llama}, on the TV benchmark, the 18 algorithmic, translation, linguistic, and knowledge tasks of \citet{hendel-etal-2023-context}.
\begin{table*}[!t]
\centering
\caption{Comparison on VQAv2, OK-VQA, and COCO (left), with qualitative examples (right). \#~Params: \protect\colorbox{paramsmall}{low}, \protect\colorbox{parammedium}{medium}, \protect\colorbox{paramlarge}{high}. Best: \textbf{bold}, second: \underline{underlined}.}
\label{tab:vqav2-coco}
\sbox{\stavetabone}{\footnotesize\setlength{\tabcolsep}{6.5pt}%
\begin{tabular}{l|c|ccc}
\toprule
\rowcolor{tableheader}
\textbf{LLaVA} & \# Params (M) & VQAv2 & OK-VQA & COCO \\
\midrule
Zero-shot & -- & 27.65$_{\pm 0.00}$ & 12.93$_{\pm 0.00}$ & 107.25$_{\pm 0.00}$ \\
8-shot ICL & -- & 68.61$_{\pm 0.20}$ & 48.43$_{\pm 0.42}$ & 118.88$_{\pm 0.77}$ \\
16-shot ICL & -- & 66.21$_{\pm 0.48}$ & 44.47$_{\pm 0.46}$ & 124.52$_{\pm 1.05}$ \\
32-shot ICL & -- & 62.60$_{\pm 1.01}$ & 39.39$_{\pm 0.52}$ & 124.56$_{\pm 0.73}$ \\
\midrule
LoRA & \cellcolor{paramlarge}{19.7 ($\times$2399.00)} & 74.65$_{\pm 0.47}$ & 53.08$_{\pm 0.41}$ & 129.40$_{\pm 1.01}$ \\
PT-Pre & \cellcolor{parammedium}{0.082 ($\times$10.00)} & 74.73$_{\pm 0.09}$ & 54.22$_{\pm 0.17}$ & 129.72$_{\pm 0.08}$ \\
PT-App & \cellcolor{parammedium}{0.082 ($\times$10.00)} & 74.68$_{\pm 0.24}$ & 53.46$_{\pm 0.33}$ & 127.98$_{\pm 0.80}$ \\
PT-BL & \cellcolor{parammedium}{0.082 ($\times$10.00)} & 74.47$_{\pm 0.21}$ & 53.91$_{\pm 0.50}$ & 128.33$_{\pm 0.47}$ \\
\midrule
LIVE & \cellcolor{parammedium}{0.13 ($\times$16.00)} & 75.15$_{\pm 0.03}$ & 53.69$_{\pm 0.19}$ & 129.75$_{\pm 0.40}$ \\
MimIC & \cellcolor{paramlarge}{17.0 ($\times$2080.13)} & 75.19$_{\pm 0.16}$ & \underline{55.74}$_{\pm 0.25}$ & 128.09$_{\pm 0.25}$ \\
HiFICL & \cellcolor{paramlarge}{2.2 ($\times$272.00)} & \underline{75.36}$_{\pm 0.30}$ & 54.50$_{\pm 0.52}$ & \underline{130.24}$_{\pm 0.32}$ \\
\midrule
\rowcolor{stavehighlight}
\stavelabel & \cellcolor{paramsmall}\textbf{0.008 ($\times$1.00)} & \textbf{76.07}$_{\pm 0.10}$ & \textbf{55.81}$_{\pm 0.10}$ & \textbf{130.53}$_{\pm 1.17}$ \\
\midrule
\rowcolor{tableheader}
\textbf{Idefics2-8B} & \# Params (M) & VQAv2 & OK-VQA & COCO \\
\midrule
Zero-shot & -- & 47.61$_{\pm 0.00}$ & 29.52$_{\pm 0.00}$ & 80.54$_{\pm 0.00}$ \\
8-shot ICL & -- & 67.44$_{\pm 0.30}$ & 57.35$_{\pm 0.20}$ & 119.25$_{\pm 0.42}$ \\
16-shot ICL & -- & 67.56$_{\pm 0.74}$ & 56.24$_{\pm 0.46}$ & 129.53$_{\pm 0.44}$ \\
32-shot ICL & -- & 53.77$_{\pm 0.76}$ & 52.05$_{\pm 1.17}$ & 122.36$_{\pm 0.32}$ \\
\midrule
LoRA & \cellcolor{paramlarge}{17.6 ($\times$2147.00)} & 71.11$_{\pm 0.87}$ & 59.23$_{\pm 0.21}$ & \underline{132.84}$_{\pm 0.57}$ \\
PT-Pre & \cellcolor{parammedium}{0.082 ($\times$10.00)} & \underline{72.75}$_{\pm 0.14}$ & 59.20$_{\pm 0.44}$ & 131.03$_{\pm 1.51}$ \\
PT-App & \cellcolor{parammedium}{0.082 ($\times$10.00)} & 70.75$_{\pm 0.56}$ & 55.83$_{\pm 1.78}$ & 131.20$_{\pm 1.59}$ \\
PT-BL & \cellcolor{parammedium}{0.082 ($\times$10.00)} & 71.04$_{\pm 0.38}$ & 57.82$_{\pm 0.84}$ & 131.13$_{\pm 1.12}$ \\
\midrule
LIVE & \cellcolor{parammedium}{0.13 ($\times$16.00)} & 69.73$_{\pm 1.30}$ & 58.22$_{\pm 0.40}$ & 128.20$_{\pm 0.84}$ \\
MimIC & \cellcolor{parammedium}{0.26 ($\times$32.13)} & 71.34$_{\pm 0.05}$ & 59.69$_{\pm 0.37}$ & 132.42$_{\pm 0.13}$ \\
HiFICL & \cellcolor{paramlarge}{2.2 ($\times$272.00)} & 71.61$_{\pm 0.68}$ & \underline{60.29}$_{\pm 0.53}$ & 128.69$_{\pm 0.14}$ \\
\midrule
\rowcolor{stavehighlight}
\stavelabel & \cellcolor{paramsmall}\textbf{0.008 ($\times$1.00)} & \textbf{73.43}$_{\pm 0.06}$ & \textbf{61.00}$_{\pm 0.41}$ & \textbf{134.18}$_{\pm 0.82}$ \\
\bottomrule
\end{tabular}}%
\stavecA=\dimexpr\textwidth-8pt\relax
\stavecB=\numexpr\wd\stavetabone*1343604/1000000\relax
\stavecC=\numexpr\stavecB+\ht\stavetabone+\dp\stavetabone\relax
\setlength{\stavetabonew}{\numexpr\stavecA*\stavecB/\stavecC\relax sp}%
\setlength{\stavequalw}{\dimexpr\textwidth-8pt-\stavetabonew\relax}%
\begin{minipage}[c]{\stavetabonew}\resizebox{\linewidth}{!}{\usebox{\stavetabone}}\end{minipage}\hfill
\begin{minipage}[c]{\stavequalw}\includegraphics[width=\linewidth]{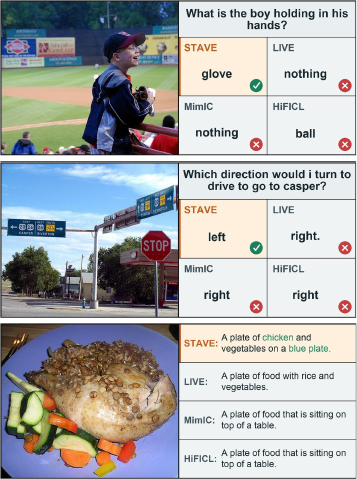}\end{minipage}

\end{table*}%
Each comparison follows the most common protocol for its benchmark. Table~\ref{tab:vqav2-coco} follows MimIC~\citep{jiang2025mimic} and HiFICL~\citep{li2026hificl}, with 1,000 training samples that also form the demo pool, evaluation on the validation split, and results from the best epoch. Tables~\ref{tab:vqa+finegrained} and~\ref{tab:vl-modern} in the appendix follow MTV~\citep{huang2024multimodal} and STV~\citep{ma2026sensitivity}, with a held-out test split. Table~\ref{tab:text} follows \citet{hendel-etal-2023-context}, with at most 256 training, 64 development, and 200 test queries per task. All trained methods answer without demos at test time. COCO is scored by CIDEr~\citep{vedantam2015cider} and all other tasks by accuracy.

\textbf{Compared methods.}
Besides zero-shot and $n$-shot ICL, we compare STAVE with three families of methods. Parameter-efficient fine-tuning includes LoRA~\citep{hu2022lora}, prefix tuning~\citep{li-liang-2021-prefix}, and PT~\citep{lester-etal-2021-power} in three modes, with its tokens before the prompt (PT-Pre), after it (PT-App), or before the final token (PT-BL). Task vectors include TV~\citep{hendel-etal-2023-context}, FV~\citep{todd2024function}, ICV~\citep{liu2024incontext}, I2CL~\citep{li2025implicit}, MTV, STV, and SITE~\citep{park2026soft}, whose values come from demo activations, and the last four learn which heads, values, or scales to use. Trained per-layer interventions include LIVE~\citep{peng2024live}, MimIC, and HiFICL. The three families differ in where they act. Fine-tuning updates weights or adds virtual tokens, task vectors inject demo activations at one layer, in every layer, or at selected heads, and trained interventions learn parameters inside every decoder layer. STAVE instead trains two vectors that are added once to the input embeddings, without changing the prompt length or adding computation inside the decoder layers.

\subsection{Implementation details}
\label{sec:exp-implementation}

STAVE learns only the two rows of $W$, which start at zero in single precision, while the backbone stays frozen in half precision. Each source prompt holds one to eight demos. We optimize the vectors with AdamW~\citep{loshchilov2017decoupled} under a cosine learning-rate schedule. Generally, each training step uses $K=4$ paired examples and a learning rate of $10^{-3}$ on LMM tasks, and $K=8$ and $10^{-1}$ on LLM tasks. We run all compared methods from their released code with the evaluation data, prompts, metrics, and decoding of each protocol. All main experiments are run three times, and we report the mean and standard deviation of the three runs. Additional details, including backbones, protocol settings, hyperparameter sensitivity, and prompt examples, are reported in Appendix~\ref{app:implementation}.
\begin{table*}[!t]
\centering
\caption{Comparison on VQA and classification datasets.}
\label{tab:vqa+finegrained}
\footnotesize
\setlength{\tabcolsep}{3.5pt}
\resizebox{\textwidth}{!}{%
\begin{tabular}{l|c|ccccc|c|ccccc|c}
\toprule
 & & \multicolumn{6}{c|}{\textbf{Idefics2-8B}} & \multicolumn{6}{c}{\textbf{Qwen-VL-7B}} \\
\cmidrule(lr){3-8}\cmidrule(lr){9-14}
\rowcolor{tableheader}
\textbf{Method} & \# Params (M) & VizWiz & OK-VQA & DTD & Flowers & CUB & \textbf{Avg.} & VizWiz & OK-VQA & DTD & Flowers & CUB & \textbf{Avg.} \\
\midrule
Zero-shot & -- & 37.74$_{\pm 0.00}$ & 49.44$_{\pm 0.00}$ & 90.23$_{\pm 0.00}$ & 87.59$_{\pm 0.00}$ & 87.84$_{\pm 0.00}$ & 70.57$_{\pm 0.00}$ & 45.49$_{\pm 0.00}$ & 57.51$_{\pm 0.00}$ & 83.77$_{\pm 0.00}$ & 73.88$_{\pm 0.00}$ & 91.10$_{\pm 0.00}$ & 70.35$_{\pm 0.00}$ \\
4-shot ICL & -- & 43.24$_{\pm 0.11}$ & 50.38$_{\pm 0.19}$ & 88.87$_{\pm 0.12}$ & 80.22$_{\pm 0.40}$ & 85.91$_{\pm 0.12}$ & 69.72$_{\pm 0.05}$ & 46.36$_{\pm 0.32}$ & \underline{60.92}$_{\pm 0.07}$ & 84.44$_{\pm 0.25}$ & 87.17$_{\pm 0.05}$ & 88.73$_{\pm 0.25}$ & 73.53$_{\pm 0.12}$ \\
\midrule
PT-Pre & \cellcolor{parammedium}{0.082 ($\times$10.00)} & 66.19$_{\pm 1.26}$ & 54.49$_{\pm 2.57}$ & \underline{94.64}$_{\pm 0.35}$ & 87.50$_{\pm 0.41}$ & 94.25$_{\pm 0.53}$ & 79.41$_{\pm 0.77}$ & 63.83$_{\pm 0.82}$ & 50.56$_{\pm 0.73}$ & 94.44$_{\pm 0.85}$ & 93.83$_{\pm 0.26}$ & 95.65$_{\pm 0.71}$ & 79.66$_{\pm 0.29}$ \\
PT-App & \cellcolor{parammedium}{0.082 ($\times$10.00)} & 64.56$_{\pm 2.43}$ & 55.10$_{\pm 0.56}$ & 94.14$_{\pm 0.07}$ & 89.60$_{\pm 0.10}$ & \underline{95.10}$_{\pm 0.15}$ & 79.70$_{\pm 0.63}$ & 60.19$_{\pm 3.26}$ & 37.84$_{\pm 4.11}$ & 93.10$_{\pm 0.96}$ & 93.56$_{\pm 0.28}$ & 94.14$_{\pm 0.43}$ & 75.77$_{\pm 1.03}$ \\
PT-BL & \cellcolor{parammedium}{0.082 ($\times$10.00)} & 66.35$_{\pm 0.21}$ & 56.58$_{\pm 0.98}$ & 94.13$_{\pm 0.19}$ & 88.97$_{\pm 0.52}$ & 94.55$_{\pm 1.54}$ & 80.12$_{\pm 0.22}$ & 60.30$_{\pm 4.64}$ & 44.35$_{\pm 3.61}$ & 93.44$_{\pm 0.28}$ & 93.49$_{\pm 0.29}$ & 94.73$_{\pm 0.50}$ & 77.26$_{\pm 0.48}$ \\
\midrule
TV & \cellcolor{paramsmall}\textbf{0.004 ($\times$0.50)} & 46.29$_{\pm 6.90}$ & 50.23$_{\pm 0.90}$ & 90.58$_{\pm 0.49}$ & 87.52$_{\pm 0.02}$ & 87.99$_{\pm 0.26}$ & 72.52$_{\pm 1.41}$ & 49.37$_{\pm 2.41}$ & 59.03$_{\pm 0.68}$ & 91.00$_{\pm 1.80}$ & 93.59$_{\pm 0.06}$ & 92.69$_{\pm 1.36}$ & 77.14$_{\pm 0.83}$ \\
ICV & \cellcolor{parammedium}{0.13 ($\times$16.00)} & 37.75$_{\pm 0.02}$ & 49.49$_{\pm 0.03}$ & 90.30$_{\pm 0.00}$ & 87.80$_{\pm 0.35}$ & 87.86$_{\pm 0.02}$ & 70.64$_{\pm 0.07}$ & 45.57$_{\pm 0.10}$ & 57.81$_{\pm 0.39}$ & 90.07$_{\pm 0.12}$ & 92.22$_{\pm 0.02}$ & 91.51$_{\pm 0.01}$ & 75.44$_{\pm 0.08}$ \\
I2CL & \cellcolor{parammedium}{0.26 ($\times$32.02)} & 42.15$_{\pm 5.87}$ & 47.41$_{\pm 1.99}$ & 92.59$_{\pm 0.86}$ & 89.25$_{\pm 0.32}$ & 93.40$_{\pm 1.73}$ & 72.96$_{\pm 1.01}$ & 53.22$_{\pm 2.04}$ & 51.55$_{\pm 3.02}$ & 90.77$_{\pm 1.62}$ & 93.75$_{\pm 0.25}$ & 93.36$_{\pm 0.97}$ & 76.53$_{\pm 0.26}$ \\
MTV & \cellcolor{parammedium}{0.13 ($\times$16.00)} & 50.85$_{\pm 4.94}$ & 50.30$_{\pm 0.24}$ & 93.73$_{\pm 0.26}$ & 89.46$_{\pm 0.17}$ & 94.15$_{\pm 0.72}$ & 75.70$_{\pm 1.06}$ & 52.53$_{\pm 6.36}$ & 57.68$_{\pm 1.17}$ & 94.27$_{\pm 0.21}$ & 93.73$_{\pm 0.30}$ & 94.66$_{\pm 0.15}$ & 78.57$_{\pm 1.35}$ \\
STV & \cellcolor{paramsmall}\textbf{0.008 ($\times$1.00)} & 61.39$_{\pm 0.20}$ & 49.44$_{\pm 0.00}$ & 91.57$_{\pm 1.53}$ & 87.83$_{\pm 0.41}$ & 93.38$_{\pm 1.01}$ & 76.72$_{\pm 0.35}$ & 58.15$_{\pm 0.90}$ & 58.17$_{\pm 0.75}$ & 76.32$_{\pm 8.90}$ & 88.64$_{\pm 5.47}$ & 91.10$_{\pm 0.00}$ & 74.48$_{\pm 3.13}$ \\
LIVE & \cellcolor{parammedium}{0.13 ($\times$16.00)} & 59.74$_{\pm 0.45}$ & 54.12$_{\pm 0.17}$ & 93.69$_{\pm 0.11}$ & 87.80$_{\pm 0.98}$ & 93.36$_{\pm 0.49}$ & 77.74$_{\pm 0.25}$ & 62.40$_{\pm 0.44}$ & 53.85$_{\pm 0.27}$ & 78.91$_{\pm 0.10}$ & 86.33$_{\pm 0.22}$ & 86.87$_{\pm 0.41}$ & 73.67$_{\pm 0.14}$ \\
MimIC & \cellcolor{parammedium}{0.26 ($\times$32.13)} & \underline{68.46}$_{\pm 1.72}$ & \underline{59.30}$_{\pm 0.70}$ & 94.61$_{\pm 0.60}$ & 90.45$_{\pm 3.54}$ & 93.30$_{\pm 1.22}$ & \underline{81.22}$_{\pm 0.35}$ & \underline{67.83}$_{\pm 0.89}$ & 55.21$_{\pm 0.95}$ & 95.16$_{\pm 0.52}$ & 95.79$_{\pm 1.00}$ & \underline{96.35}$_{\pm 0.47}$ & 82.07$_{\pm 0.54}$ \\
HiFICL & \cellcolor{paramlarge}{2.2 ($\times$272.00)} & 65.56$_{\pm 2.05}$ & 59.24$_{\pm 1.19}$ & 92.29$_{\pm 0.75}$ & \underline{91.93}$_{\pm 0.73}$ & \textbf{95.18}$_{\pm 0.39}$ & 80.84$_{\pm 0.74}$ & 67.10$_{\pm 1.44}$ & 55.78$_{\pm 1.27}$ & \textbf{95.47}$_{\pm 0.27}$ & \textbf{97.29}$_{\pm 0.56}$ & \textbf{97.05}$_{\pm 0.34}$ & \underline{82.54}$_{\pm 0.25}$ \\
\midrule
\rowcolor{stavehighlight}
\stavelabel & \cellcolor{paramsmall}\textbf{0.008 ($\times$1.00)} & \textbf{69.63}$_{\pm 0.37}$ & \textbf{59.59}$_{\pm 0.50}$ & \textbf{95.36}$_{\pm 0.47}$ & \textbf{95.67}$_{\pm 0.42}$ & 93.50$_{\pm 0.72}$ & \textbf{82.75}$_{\pm 0.19}$ & \textbf{68.60}$_{\pm 0.55}$ & \textbf{62.39}$_{\pm 0.12}$ & \underline{95.20}$_{\pm 0.32}$ & \underline{96.30}$_{\pm 0.23}$ & 95.56$_{\pm 0.26}$ & \textbf{83.61}$_{\pm 0.14}$ \\
\bottomrule
\end{tabular}%
}
\end{table*}

\subsection{Main results}
\label{sec:exp-main}

\textbf{Comparison with trained methods on LMMs.}
Tables~\ref{tab:vqav2-coco} and~\ref{tab:vqa+finegrained} test whether an update at the input embeddings alone can match methods that learn a shift in every decoder layer. STAVE has the best score on every column of Table~\ref{tab:vqav2-coco} and the best average on both backbones of Table~\ref{tab:vqa+finegrained}, and it leads on five of six columns with three more recent LMMs (Appendix~\ref{app:recent-lmms}). It stores 14 to 2,080 times fewer parameters than LIVE, MimIC, and HiFICL and keeps zero-shot latency (Figure~\ref{fig:performance-overview}), since an input update already acts through every block (Proposition~\ref{prop:depth-reach}). PT also trains at the input, but it inserts virtual tokens and its best mode varies with the backbone and dataset. STAVE outperforms all three PT modes on average with 10 times fewer parameters and without lengthening the prompt. With over 2,000 times more parameters, LoRA still trails STAVE on every column of Table~\ref{tab:vqav2-coco}.

\begin{table*}[!t]
\centering
\caption{Comparison on the TV benchmark (18 text tasks).}
\label{tab:text}
\footnotesize
\renewcommand{\arraystretch}{1.2}
\resizebox{\textwidth}{!}{%
\begin{tabular}{l|c|ccc|c|c|ccc|ccc|>{\columncolor{stavehighlight}}c}
\toprule
 & & \multicolumn{3}{c|}{Vanilla ICL} & & & \multicolumn{3}{c|}{PT} & & & & \multicolumn{1}{c}{} \\
\cmidrule(lr){3-5}\cmidrule(lr){8-10}
\rowcolor{tableheader}
Model & Zero-shot & $n{=}1$ & $n{=}5$ & $n{=}15$ & LoRA & Prefix & Pre & App & BL & FV & TV & SITE & \multicolumn{1}{c}{\textbf{STAVE}} \\
\midrule
Pythia 2.8B & 13.1$_{\pm 0.0}$ & 60.4$_{\pm 0.6}$ & 85.6$_{\pm 0.4}$ & \underline{90.6}$_{\pm 0.2}$ & 79.1$_{\pm 5.1}$ & 61.8$_{\pm 1.6}$ & 70.4$_{\pm 5.7}$ & 79.4$_{\pm 6.7}$ & 78.4$_{\pm 5.2}$ & 40.7$_{\pm 2.1}$ & 75.6$_{\pm 0.1}$ & 89.9$_{\pm 0.2}$ & \textbf{91.2}$_{\pm 0.3}$ \\
Pythia 6.9B & 10.9$_{\pm 0.0}$ & 65.0$_{\pm 0.9}$ & 86.6$_{\pm 0.5}$ & \underline{90.6}$_{\pm 0.1}$ & 78.1$_{\pm 2.4}$ & 65.7$_{\pm 2.6}$ & 70.5$_{\pm 6.2}$ & 83.5$_{\pm 1.8}$ & 76.1$_{\pm 8.2}$ & 42.4$_{\pm 3.0}$ & 75.5$_{\pm 0.2}$ & 90.2$_{\pm 0.1}$ & \textbf{91.4}$_{\pm 0.2}$ \\
Pythia 12B & 11.4$_{\pm 0.0}$ & 64.5$_{\pm 0.3}$ & 87.0$_{\pm 0.4}$ & 91.0$_{\pm 0.1}$ & 81.5$_{\pm 4.6}$ & 55.2$_{\pm 3.8}$ & 71.8$_{\pm 4.1}$ & 82.9$_{\pm 1.0}$ & 73.9$_{\pm 3.9}$ & 42.2$_{\pm 0.8}$ & 75.0$_{\pm 0.4}$ & \underline{91.1}$_{\pm 0.4}$ & \textbf{91.2}$_{\pm 0.5}$ \\
LLaMA 7B & 17.1$_{\pm 0.0}$ & 67.8$_{\pm 0.5}$ & 88.4$_{\pm 0.2}$ & 90.6$_{\pm 0.3}$ & 76.1$_{\pm 2.7}$ & 70.2$_{\pm 6.8}$ & 59.9$_{\pm 3.8}$ & 87.5$_{\pm 2.5}$ & 82.5$_{\pm 4.7}$ & 59.4$_{\pm 0.5}$ & 80.4$_{\pm 0.9}$ & \textbf{92.0}$_{\pm 0.2}$ & \underline{91.9}$_{\pm 0.4}$ \\
GPT-J 6B & 10.2$_{\pm 0.0}$ & 63.6$_{\pm 0.4}$ & 86.3$_{\pm 0.3}$ & 91.0$_{\pm 0.2}$ & 73.7$_{\pm 4.0}$ & 66.5$_{\pm 6.1}$ & 77.8$_{\pm 3.6}$ & 86.0$_{\pm 0.8}$ & 84.9$_{\pm 3.2}$ & 56.3$_{\pm 1.7}$ & 72.4$_{\pm 0.6}$ & \underline{91.8}$_{\pm 0.2}$ & \textbf{91.9}$_{\pm 0.4}$ \\
\midrule
\textbf{Avg.} & 12.5$_{\pm 0.0}$ & 64.3$_{\pm 0.3}$ & 86.8$_{\pm 0.3}$ & 90.8$_{\pm 0.1}$ & 77.7$_{\pm 0.4}$ & 63.9$_{\pm 0.3}$ & 70.1$_{\pm 3.4}$ & 83.9$_{\pm 1.3}$ & 79.1$_{\pm 2.6}$ & 48.2$_{\pm 0.9}$ & 75.8$_{\pm 0.3}$ & \underline{91.0}$_{\pm 0.1}$ & \textbf{91.5}$_{\pm 0.3}$ \\
\bottomrule
\end{tabular}
}
\end{table*}
\textbf{Comparison with ICL and task vectors.}
STAVE outperforms ICL with up to 32 demos on every column of Table~\ref{tab:vqav2-coco} while using none, and ICL peaks at 8 or 16 demos on VQA. In Table~\ref{tab:vqa+finegrained}, TV and ICV add demo activations without training and stay near zero-shot on Idefics2. I2CL learns their scales, and MTV and STV learn which heads receive them, yet all trail STAVE on average, even STV at the same size, so fitting the values to the answer matters more than choosing where to add them.

\textbf{Comparison on LLMs.}
Table~\ref{tab:text} tests whether the design transfers to LLMs. STAVE has the highest average on the TV benchmark and exceeds 15-shot ICL on every LLM without demos at inference. SITE, which edits head activations in every layer, comes closest, yet STAVE is slightly higher with an input update alone. TV and FV remain far below. LoRA, prefix tuning, and PT train more parameters yet stay below 5-shot ICL on every LLM, and in Tables~\ref{tab:vqa+finegrained} and~\ref{tab:text} PT varies far more across runs than STAVE, whose zero-initialized vectors start at the exact zero-shot computation.

\subsection{Ablation studies}
\label{sec:exp-ablation}

\begin{wrapfigure}{R}{0.4\textwidth}
\vspace{-\intextsep}
\centering
\includegraphics[width=\linewidth,trim=0 7 0 7,clip]{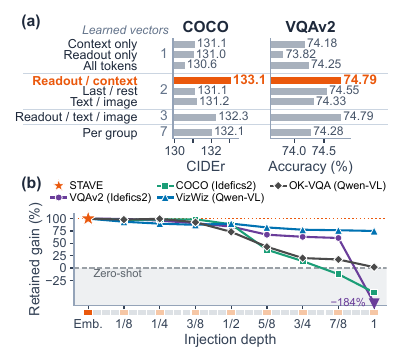}\par
\vspace{-9pt}
\caption{Ablations. (a) Groups split into 1, 2, 3, or 7 vectors, mean of Idefics2 and LLaVA. (b) Injection depth.}
\label{fig:ablation}\label{fig:groups-main}\label{fig:depth-ablation}
\vspace{-\intextsep}
\end{wrapfigure}
\textbf{Structural token groups.}
Figure~\ref{fig:ablation}(a) changes only how groups are assigned to vectors. The readout/context split is best on COCO and tied for best on VQAv2. One vector is never enough, and one for all tokens is weakest on COCO, as the dilution in Equation~\ref{eq:sharing-bound} predicts. Two vectors also fall short if the answer cue joins the context vector (Last / rest) or they split by modality (Text / image). Both put readout tokens in one vector with context tokens of smaller gradient per token (Figure~\ref{fig:motivation-chain}(b)), so the gain comes from separating answer-producing tokens from their context. Three or seven vectors add parameters without a gain, consistent with Theorem~\ref{thm:tying}. Appendix~\ref{app:ablations} adds equal-norm random splits and count normalization variants.

\textbf{Injection depth.}
Figure~\ref{fig:ablation}(b) adds the same vectors after a decoder block instead of at the input (Appendix~\ref{app:depth}). The embedding injection is best or tied for best on all four curves, and the retained gain shrinks with depth, falling below zero-shot after the last block on two of them. This matches Proposition~\ref{prop:depth-reach}: a later update reaches fewer blocks, and after the last block it can change only the first answer token, and only through the final token.
\newsavebox{\stavefigcol}
\begin{table}[!t]
\sbox{\stavefigcol}{\begin{minipage}[t]{0.50\textwidth}
\centering
\expandafter\def\csname @captype\endcsname{figure}
\addtocounter{figure}{2}%
\vspace{7pt}%
\includegraphics[width=\linewidth]{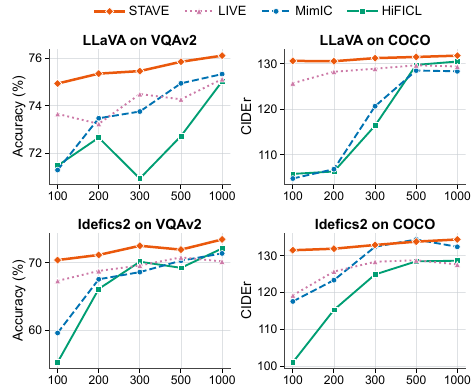}
\par\vspace{-7.2pt}
\caption{Data efficiency across training set sizes.}
\label{fig:sample-size}
\end{minipage}}%
\newsavebox{\stavetabcol}
\sbox{\stavetabcol}{\begin{minipage}[t]{0.405\textwidth}
\centering
\caption{Training-objective ablation.}
\label{tab:training-objective-ablation}
\footnotesize
\setlength{\tabcolsep}{4.2pt}
\renewcommand{\arraystretch}{1.0}
\resizebox{\linewidth}{!}{%
\begin{tabular}{lccc}
\toprule
\rowcolor{tableheader}
Backbone / Dataset & Paired & Target-only & Source-only \\
\midrule
Idefics2 VQAv2 & \textbf{73.46\%} & \underline{73.32\%} & 72.41\% \\
LLaVA VQAv2    & \textbf{76.12\%} & \underline{76.01\%} & 74.98\% \\
Idefics2 COCO  & \textbf{134.38} & \underline{134.01} & 131.34 \\
LLaVA COCO     & \textbf{131.78} & \underline{129.80} & 129.11 \\
\midrule
TV bench       & \textbf{92.07\%} & \underline{91.53\%} & 79.67\% \\
\quad w/ 1 demo & \textbf{92.23\%} & 88.90\% & \underline{92.10\%} \\
\bottomrule
\end{tabular}%
}

\vspace{3pt}
\caption{Hallucination on COCO.}
\label{tab:hallucination}
\vspace{2pt}
\footnotesize
\setlength{\tabcolsep}{3pt}
\renewcommand{\arraystretch}{1.05}
\resizebox{\linewidth}{!}{%
\begin{tabular}{l|cccc}
\toprule
\rowcolor{tableheader}
Idefics2 COCO & CIDEr $\uparrow$ & CHAIRs $\downarrow$ & CHAIRi $\downarrow$ & Recall $\uparrow$ \\
\midrule
Zero-shot & 80.54 & 3.44 & 2.84 & 35.73 \\
8-shot ICL & 119.07 & 3.06 & 2.26 & 41.76 \\
\midrule
LoRA & 132.37 & \underline{2.46} & \underline{1.68} & 43.64 \\
LIVE & 127.62 & 2.56 & 1.83 & 43.15 \\
MimIC & \underline{132.38} & 3.48 & 2.28 & \underline{45.62} \\
HiFICL & 128.62 & 3.30 & 2.10 & \textbf{45.98} \\
\midrule
\rowcolor{stavehighlight}
\stavelabel & \textbf{133.27} & \textbf{2.24} & \textbf{1.53} & 44.20 \\
\bottomrule
\end{tabular}%
}
\par\kern0pt
\end{minipage}}%
\usebox{\stavetabcol}\hfill\usebox{\stavefigcol}
\end{table}
\addtocounter{figure}{-3}%

\textbf{Paired supervision.}
Table~\ref{tab:training-objective-ablation} drops one branch of Equation~\ref{eq:stave-objective}. Paired training scores highest in every row, so demos in training also help the demo-free prompt, as Proposition~\ref{prop:paired-descent} allows for aligned branch gradients. The source branch reuses each labeled query in a second prompt, and since paired vectors exceed target-only vectors on the demo-free prompt in every row, this extra signal costs no target performance, consistent with a small compatibility gap $\mathcal C_{\mathrm{src}\mid\mathrm{tgt}}$ (Section~\ref{sec:method-training}). Each single branch fits only the prompt format it sees. Source-only vectors are lowest for every LMM and dataset. Target-only vectors lose 2.6 points on the TV benchmark with one demo, lose most of their LLaVA COCO score with demos, and hallucinate more on Idefics2 COCO (Appendices~\ref{app:demo-robustness-coco} and~\ref{app:objective-hallucination}). MimIC and HiFICL, which also train only on demo-free prompts, fall below 5 CIDEr on LLaVA COCO once a demo is present. Paired vectors hold in both formats, so one checkpoint serves prompts with and without demos.

\staveneedspace{0.4\textheight}
\subsection{Analysis}
\label{sec:exp-analysis}

\begin{wrapfigure}{R}{0.50\textwidth}
\vspace{-\intextsep}
\centering
\includegraphics[width=\linewidth,trim=11.5 13.1 7.2 9.6,clip]{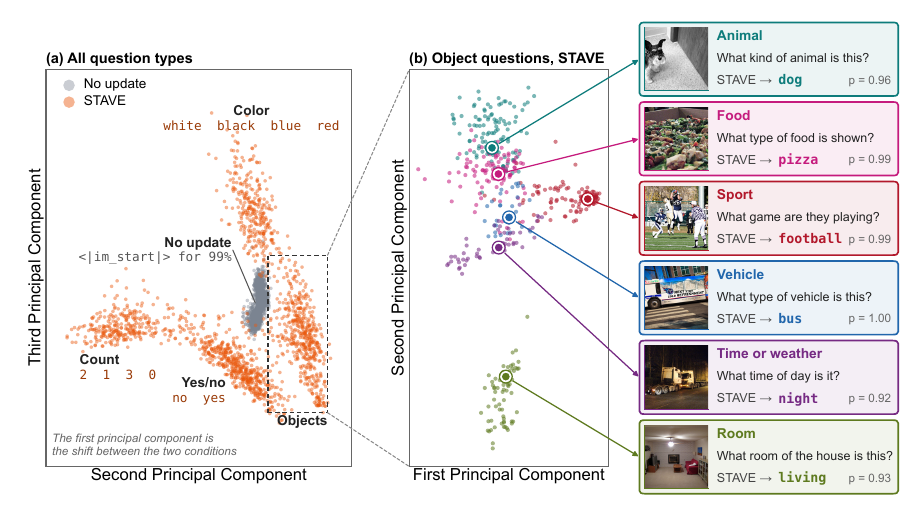}\par
\vspace{-9pt}
\caption{Final-token states of LLaVA on VQAv2 validation questions, without and with STAVE.}
\label{fig:representation-map}
\par\vspace{2pt}
\includegraphics[width=0.92\linewidth,trim=0 5.9 0 0,clip]{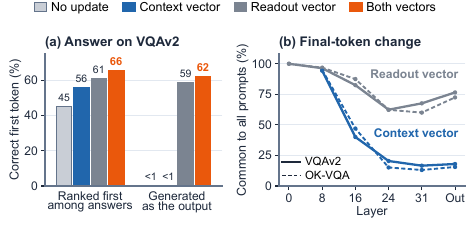}\par
\vspace{-10pt}
\caption{Effect of each learned vector.}
\label{fig:row-roles-main}
\vspace{-\intextsep}
\end{wrapfigure}
\addtocounter{figure}{1}%
\textbf{Query-specific answers from one \mbox{update}.}
The same two vectors are added to every prompt, yet the answer changes with the query (Figure~\ref{fig:representation-map}). Without the update, LLaVA predicts a template token for almost every VQAv2 question. STAVE moves the states of each answer type in a different direction, and within object questions it separates the categories they ask about, because the frozen blocks process the vectors together with each query. Table~\ref{tab:vqav2-coco} shows examples.

\textbf{What each vector contributes.}
Figure~\ref{fig:row-roles-main}(a) adds the Idefics2 VQAv2 vectors one at a time to held-out prompts without retraining and reads the reference first token. Without the update, this token often already ranks first among the task's answers, yet the model almost never generates it. The context vector alone ranks it first more often but rarely changes the output, whereas the readout vector alone makes the model answer. Both vectors score highest. Figure~\ref{fig:row-roles-main}(b) shows why the vectors play different roles. On VQAv2 and OK-VQA, most of the change that the readout vector makes to the final-token state is common to all prompts, whereas the change that the context vector adds becomes specific to each prompt from the middle layers on. The readout vector thus sets a shared answer format, and the context vector supplies query-specific evidence for choosing the answer. Appendix~\ref{app:analysis} adds more tests of each vector.

\textbf{Why a small update is sufficient.}
STAVE's $2d$ parameters draw on what the frozen backbone already holds (Figure~\ref{fig:row-roles-main}(a)) and leave it intact. On Qwen-VL OK-VQA, PT and all per-layer interventions in Table~\ref{tab:vqa+finegrained} fall below zero-shot, while STAVE exceeds it (Appendix~\ref{app:impl-table2}). The small update makes STAVE data efficient (Figure~\ref{fig:sample-size}). With a tenth of the training samples it loses a few points, whereas MimIC and HiFICL lose far more on Idefics2, consistent with the risk bound of Section~\ref{sec:method-training}, whose estimation term depends on the covering dimension $2d$, not on the number of decoder layers.

\begin{figure}[t]
\centering
\includegraphics[width=0.6\textwidth]{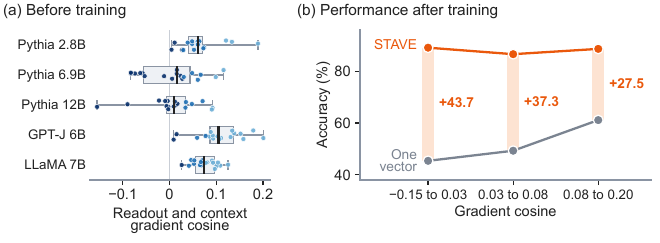}
\caption{Gradient cosine before training (a) and performance after training (b) on the TV benchmark.}
\label{fig:direction-gap}
\end{figure}
\textbf{Separate vectors help most when gradients disagree.}
Proposition~\ref{prop:sharing-cost}(iii) predicts when STAVE's two vectors matter. For the readout set $\mathcal A$ and the context tokens $\mathcal B$, the first-order decrease of one shared vector grows with the cosine between the two sets' summed gradients, whereas that of two vectors does not depend on it, so their gap widens as the cosine falls. Figure~\ref{fig:direction-gap} tests this prediction on the 18 tasks of the TV benchmark with each of the five LLMs. Before training, we measure this cosine for each task and LLM, and after training we compare STAVE with a single vector shared by all eligible tokens. The cosines lie between $-0.15$ and $0.20$ (Figure~\ref{fig:direction-gap}(a)), so the readout and context tokens ask for nearly orthogonal updates in every LLM. As predicted, the shared vector performs better as the cosine grows, whereas STAVE performs similarly across the whole range (Figure~\ref{fig:direction-gap}(b)). STAVE's advantage is thus largest where the two gradients disagree most, 43.7 points in the lowest third of cosines against 27.5 points in the highest, and the cosine is negatively rank-correlated with this advantage for each of the five LLMs. Theorem~\ref{thm:tying} gives the same picture for margins. Its bound grows with the gradient component that a tied class discards, and one vector also discards the readout--context contrast that STAVE keeps. Because even the highest cosines stay far below one, the condition under which one vector suffices does not arise in these models, and the second vector lets STAVE perform well whether or not the two token sets agree.

\textbf{Fine-grained classification.}
STAVE leads on Idefics2 DTD and Flowers, by 3.7 points on Flowers, and stays within 1.7 points of the best method on the other four columns, all led by HiFICL. The test classes of these two-choice tasks never appear as training queries, so correcting a zero-shot error means separating similar classes that no training query contains. HiFICL can add query-specific visual evidence in every layer with 272 times more parameters, whereas the two vectors of STAVE act only through the frozen blocks and draw on the classes the backbone already recognizes.

\begin{wraptable}{R}{0.44\textwidth}
\vspace{-\intextsep}
\centering
\caption{Inference cost on Idefics2 VQAv2.}
\label{tab:cost-main}
\footnotesize
\setlength{\tabcolsep}{3pt}
\resizebox{0.98\linewidth}{!}{%
\begin{tabular}{l|rrr}
\toprule
\rowcolor{tableheader}
vs.\ ICL & Tokens & TTFT (ms) & Mem. (GiB) \\
\midrule
Zero-shot & \textbf{128} & \textbf{98} & \textbf{16.5} \\
8-shot ICL & 1{,}026 & 1{,}430 & 29.8 \\
16-shot ICL & 1{,}922 & 2{,}854 & 43.0 \\
32-shot ICL & 3{,}717 & 5{,}877 & 68.7 \\
\rowcolor{stavehighlight}
\stavelabel & \textbf{128} & \textbf{98} & \textbf{16.5} \\
\midrule
\rowcolor{tableheader}
vs.\ trained & State (KiB) & TTFT (ms) & Decode (ms/tok) \\
\midrule
LoRA & 68{,}704 & 116 & 33.6 \\
LIVE & 512 & 101 & 27.4 \\
MimIC & 1{,}028 & 106 & 31.6 \\
HiFICL & 8{,}704 & 108 & 34.3 \\
\rowcolor{stavehighlight}
\stavelabel & \textbf{32} & \textbf{98} & \textbf{23.9} \\
\bottomrule
\end{tabular}%
}
\vspace{-\intextsep}
\end{wraptable}
\textbf{Object hallucination.}
Table~\ref{tab:hallucination} checks what drives the CIDEr gain on Idefics2 COCO. STAVE has the highest CIDEr and lowest CHAIRs and CHAIRi~\citep{rohrbach2018object}, the shares of captions and of mentions that name absent objects. MimIC and HiFICL recall more objects, but their CHAIRs stays near the zero-shot level, while STAVE lowers it by a third, so its gain comes from more accurate mentions. In the COCO example of Table~\ref{tab:vqav2-coco}, STAVE names the chicken, whereas MimIC and HiFICL describe only a plate of food.

\textbf{Inference cost.}
Table~\ref{tab:cost-main} measures the per-query cost of the Idefics2 VQAv2 checkpoints of Table~\ref{tab:vqav2-coco} on one NVIDIA H200. STAVE adds its vectors before the first decoder block, so its decoder computation is that of zero-shot inference (Appendix~\ref{app:cost-measured}). ICL re-encodes its demos at every query, and with 32 demos its time to first token (TTFT) is 60 times that of STAVE, and its memory grows with the demo count. The trained methods keep the prompt length but add computation inside the decoder layers, which slows every decoded token. STAVE has the lowest cost in every column of both panels.

\section{Related Work}
\label{sec:related-work}

ICL puts demos in every query's prompt \citep{brown2020language}, and in LMMs each demo image adds many visual tokens \citep{laurencon2024matters}.
Extracted task vectors reuse demo activations as one vector at one layer \citep{hendel-etal-2023-context,todd2024function}, at heads selected per task \citep{huang2024multimodal,ma2026sensitivity,park2026soft}, or in every layer with fixed or learned scales \citep{liu2024incontext,li2025implicit}, and STAVE beats all seven on average, even STV at equal size.
Learned task vectors instead train one vector from the answer at a chosen layer and token \citep{yang2026task}.
STAVE also learns from the answer, but on tokens set by prompt structure, so it needs no layer, head, or token search.
Trained interventions add per-layer vectors \citep{peng2024live}, per-head shifts \citep{jiang2025mimic}, attention-logit biases \citep{talemi2026hyper}, or virtual keys and values \citep{li2026hificl} to every decoder layer, and all four train only on demo-free prompts, three against an ICL teacher.
STAVE stores $2d$ input scalars regardless of depth, adds no per-layer computation, and scores higher on average.
Its teacher-free training with and without demos beats demo-free training (Table~\ref{tab:training-objective-ablation}).
PT and prefix tuning insert new tokens or attention memory \citep{lester-etal-2021-power,li-liang-2021-prefix,gupta2026meta}, and within a layer added keys scale a token's original attention weights by one factor, so its relative attention over image and text stays fixed \citep{petrov2024prompting}.
STAVE writes onto existing tokens, so it can change this relative attention, keeps the zero-shot prompt length, and outperforms 20-token PT with 10 times fewer parameters.
Steering vectors and ReFT also edit existing hidden states, but at chosen layers and tokens \citep{turner2023activation,wu2024reft}.
Adapters and LoRA modify decoder modules per task \citep{houlsby2019parameter,hu2022lora,liu2022few}, whereas STAVE swaps two input vectors, shares every module across tasks, and outperforms LoRA with orders of magnitude fewer parameters.
Appendix~\ref{app:related-work} extends this discussion.

\section{Conclusion}
\label{sec:conclusion}

We presented STAVE, which replaces in-context demos with two task-specific vectors added to the input embeddings of a frozen LLM or LMM. This is the only depth that reaches every decoder block, and the vectors are added to existing tokens, one for the answer-producing tokens and one for the other five groups. Trained with and without demos, they match or outperform per-layer interventions on six LMMs with 14 to 2,080 times fewer task parameters and exceed 15-shot ICL on five LLMs, while keeping the prompt length, latency, and memory of zero-shot inference. The ablations show that the gain comes from separating the answer-producing tokens from their context, not from more vectors or a modality split, and that training with demos also helps the demo-free prompt. Our analysis suggests that the vectors mainly make the frozen model output answers it already represents, and their captions name fewer absent objects. Training is a one-time cost but backpropagates through the frozen decoder, so its memory grows with the backbone, and it requires access to the weights. Since each task adds only two vectors and no prompt tokens, one deployed backbone can serve many tasks. Extending structural token groups to multi-image and video prompts is a natural next step.

\bibliography{references}

\begin{thebibliography}{74}
\providecommand{\natexlab}[1]{#1}
\providecommand{\url}[1]{\texttt{#1}}
\expandafter\ifx\csname urlstyle\endcsname\relax
  \providecommand{\doi}[1]{doi: #1}\else
  \providecommand{\doi}{doi: \begingroup \urlstyle{rm}\Url}\fi

\bibitem[Alayrac et~al.(2022)Alayrac, Donahue, Luc, Miech, Barr, Hasson, Lenc, Mensch, Millican, Reynolds, et~al.]{alayrac2022flamingo}
Jean-Baptiste Alayrac, Jeff Donahue, Pauline Luc, Antoine Miech, Iain Barr, Yana Hasson, Karel Lenc, Arthur Mensch, Katherine Millican, Malcolm Reynolds, et~al.
\newblock Flamingo: a visual language model for few-shot learning.
\newblock \emph{Advances in neural information processing systems}, 35:\penalty0 23716--23736, 2022.

\bibitem[Bai et~al.(2023)Bai, Bai, Yang, Wang, Tan, Wang, Lin, Zhou, and Zhou]{bai2023qwenvlversatilevisionlanguagemodel}
Jinze Bai, Shuai Bai, Shusheng Yang, Shijie Wang, Sinan Tan, Peng Wang, Junyang Lin, Chang Zhou, and Jingren Zhou.
\newblock Qwen-vl: A versatile vision-language model for understanding, localization, text reading, and beyond, 2023.
\newblock URL \url{https://arxiv.org/abs/2308.12966}.

\bibitem[Bai et~al.(2025)Bai, Chen, Liu, Wang, Ge, Song, Dang, Wang, Wang, Tang, Zhong, Zhu, Yang, Li, Wan, Wang, Ding, Fu, Xu, Ye, Zhang, Xie, Cheng, Zhang, Yang, Xu, and Lin]{bai2025qwen25vltechnicalreport}
Shuai Bai, Keqin Chen, Xuejing Liu, Jialin Wang, Wenbin Ge, Sibo Song, Kai Dang, Peng Wang, Shijie Wang, Jun Tang, Humen Zhong, Yuanzhi Zhu, Mingkun Yang, Zhaohai Li, Jianqiang Wan, Pengfei Wang, Wei Ding, Zheren Fu, Yiheng Xu, Jiabo Ye, Xi~Zhang, Tianbao Xie, Zesen Cheng, Hang Zhang, Zhibo Yang, Haiyang Xu, and Junyang Lin.
\newblock Qwen2.5-vl technical report, 2025.
\newblock URL \url{https://arxiv.org/abs/2502.13923}.

\bibitem[Biderman et~al.(2023)Biderman, Schoelkopf, Anthony, Bradley, O’Brien, Hallahan, Khan, Purohit, Prashanth, Raff, et~al.]{biderman2023pythia}
Stella Biderman, Hailey Schoelkopf, Quentin~Gregory Anthony, Herbie Bradley, Kyle O’Brien, Eric Hallahan, Mohammad~Aflah Khan, Shivanshu Purohit, USVSN~Sai Prashanth, Edward Raff, et~al.
\newblock Pythia: A suite for analyzing large language models across training and scaling.
\newblock In \emph{International conference on machine learning}, pp.\  2397--2430. PMLR, 2023.

\bibitem[Brown et~al.(2020)Brown, Mann, Ryder, Subbiah, Kaplan, Dhariwal, Neelakantan, Shyam, Sastry, Askell, et~al.]{brown2020language}
Tom Brown, Benjamin Mann, Nick Ryder, Melanie Subbiah, Jared~D Kaplan, Prafulla Dhariwal, Arvind Neelakantan, Pranav Shyam, Girish Sastry, Amanda Askell, et~al.
\newblock Language models are few-shot learners.
\newblock \emph{Advances in neural information processing systems}, 33:\penalty0 1877--1901, 2020.

\bibitem[Cai et~al.(2025)Cai, Huang, Wang, and Wu]{cai2025beyond}
Wang Cai, Hsiu-Yuan Huang, Zhixiang Wang, and Yunfang Wu.
\newblock Beyond demonstrations: Dynamic vector construction from latent representations.
\newblock In \emph{Proceedings of the 2025 Conference on Empirical Methods in Natural Language Processing}, pp.\  5853--5868, 2025.

\bibitem[Cimpoi et~al.(2014)Cimpoi, Maji, Kokkinos, Mohamed, and Vedaldi]{cimpoi2014describing}
Mircea Cimpoi, Subhransu Maji, Iasonas Kokkinos, Sammy Mohamed, and Andrea Vedaldi.
\newblock Describing textures in the wild.
\newblock In \emph{Proceedings of the IEEE conference on computer vision and pattern recognition}, pp.\  3606--3613, 2014.

\bibitem[Ding \& Wang(2025)Ding and Wang]{ding2025language}
Xi~Ding and Lei Wang.
\newblock Do language models understand time?
\newblock In \emph{Companion Proceedings of the ACM on Web Conference 2025}, pp.\  1855--1868, 2025.

\bibitem[Dong et~al.(2024)Dong, Li, Dai, Zheng, Ma, Li, Xia, Xu, Wu, Chang, et~al.]{dong2024survey}
Qingxiu Dong, Lei Li, Damai Dai, Ce~Zheng, Jingyuan Ma, Rui Li, Heming Xia, Jingjing Xu, Zhiyong Wu, Baobao Chang, et~al.
\newblock A survey on in-context learning.
\newblock In \emph{Proceedings of the 2024 conference on empirical methods in natural language processing}, pp.\  1107--1128, 2024.

\bibitem[Dong et~al.(2026)Dong, Jiang, Zhu, and Ning]{dong2026understanding}
Yuxin Dong, Jiachen Jiang, Zhihui Zhu, and Xia Ning.
\newblock Understanding task vectors in in-context learning: Emergence, functionality, and limitations.
\newblock In \emph{International Conference on Learning Representations}, volume 2026, pp.\  19393--19428, 2026.

\bibitem[Goyal et~al.(2017)Goyal, Khot, Summers-Stay, Batra, and Parikh]{goyal2017vqav2}
Yash Goyal, Tejas Khot, Douglas Summers-Stay, Dhruv Batra, and Devi Parikh.
\newblock Making the v in vqa matter: Elevating the role of image understanding in visual question answering.
\newblock In \emph{Proceedings of the IEEE conference on computer vision and pattern recognition}, pp.\  6904--6913, 2017.

\bibitem[Gupta et~al.(2026)Gupta, Storkey, and Lapata]{gupta2026meta}
Akash Gupta, Amos Storkey, and Mirella Lapata.
\newblock Meta-adaptive prompt distillation for few-shot visual question answering.
\newblock In \emph{International Conference on Learning Representations}, volume 2026, pp.\  120800--120837, 2026.

\bibitem[Gurari et~al.(2018)Gurari, Li, Stangl, Guo, Lin, Grauman, Luo, and Bigham]{gurari2018vizwiz}
Danna Gurari, Qing Li, Abigale~J Stangl, Anhong Guo, Chi Lin, Kristen Grauman, Jiebo Luo, and Jeffrey~P Bigham.
\newblock Vizwiz grand challenge: Answering visual questions from blind people.
\newblock In \emph{Proceedings of the IEEE conference on computer vision and pattern recognition}, pp.\  3608--3617, 2018.

\bibitem[Hendel et~al.(2023)Hendel, Geva, and Globerson]{hendel-etal-2023-context}
Roee Hendel, Mor Geva, and Amir Globerson.
\newblock In-context learning creates task vectors.
\newblock In \emph{Findings of the Association for Computational Linguistics: EMNLP 2023}, pp.\  9318--9333, 2023.

\bibitem[Hojel et~al.(2024)Hojel, Bai, Darrell, Globerson, and Bar]{hojel2024finding}
Alberto Hojel, Yutong Bai, Trevor Darrell, Amir Globerson, and Amir Bar.
\newblock Finding visual task vectors.
\newblock In \emph{European Conference on Computer Vision}, pp.\  257--273. Springer, 2024.

\bibitem[Houlsby et~al.(2019)Houlsby, Giurgiu, Jastrzebski, Morrone, De~Laroussilhe, Gesmundo, Attariyan, and Gelly]{houlsby2019parameter}
Neil Houlsby, Andrei Giurgiu, Stanislaw Jastrzebski, Bruna Morrone, Quentin De~Laroussilhe, Andrea Gesmundo, Mona Attariyan, and Sylvain Gelly.
\newblock Parameter-efficient transfer learning for nlp.
\newblock In \emph{International conference on machine learning}, pp.\  2790--2799. PMLR, 2019.

\bibitem[Hu et~al.(2021)Hu, Shen, Wallis, Allen-Zhu, Li, Wang, Wang, and Chen]{hu2022lora}
Edward~J Hu, Yelong Shen, Phillip Wallis, Zeyuan Allen-Zhu, Yuanzhi Li, Shean Wang, Lu~Wang, and Weizhu Chen.
\newblock Lora: Low-rank adaptation of large language models.
\newblock \emph{arXiv preprint arXiv:2106.09685}, 2021.

\bibitem[Huang et~al.(2024)Huang, Mitra, Arbelle, Karlinsky, Darrell, and Herzig]{huang2024multimodal}
Brandon Huang, Chancharik Mitra, Assaf Arbelle, Leonid Karlinsky, Trevor Darrell, and Roei Herzig.
\newblock Multimodal task vectors enable many-shot multimodal in-context learning.
\newblock \emph{Advances in Neural Information Processing Systems}, 37:\penalty0 22124--22153, 2024.

\bibitem[Jiang et~al.(2024)Jiang, He, Zeng, Wei, Ku, Liu, and Chen]{jiang2024mantis}
Dongfu Jiang, Xuan He, Huaye Zeng, Cong Wei, Max Ku, Qian Liu, and Wenhu Chen.
\newblock Mantis: Interleaved multi-image instruction tuning.
\newblock \emph{arXiv preprint arXiv:2405.01483}, 2024.

\bibitem[Jiang et~al.(2025)Jiang, Fu, Hao, Hu, Peng, Geng, and Yang]{jiang2025mimic}
Yuchu Jiang, Jiale Fu, Chenduo Hao, Xinting Hu, Yingzhe Peng, Xin Geng, and Xu~Yang.
\newblock Mimic in-context learning for multimodal tasks.
\newblock In \emph{Proceedings of the Computer Vision and Pattern Recognition Conference}, pp.\  29825--29835, 2025.

\bibitem[Kang et~al.(2025)Kang, Lee, Park, Park, Kim, Kim, Lee, and Song]{kang2025adaptive}
Joonseong Kang, Soojeong Lee, Subeen Park, Sumin Park, Taero Kim, Jihee Kim, Ryunyi Lee, and Kyungwoo Song.
\newblock Adaptive task vectors for large language models.
\newblock \emph{arXiv preprint arXiv:2506.03426}, 2025.

\bibitem[Karpathy \& Fei-Fei(2015)Karpathy and Fei-Fei]{karpathy2015deep}
Andrej Karpathy and Li~Fei-Fei.
\newblock Deep visual-semantic alignments for generating image descriptions.
\newblock In \emph{Proceedings of the IEEE conference on computer vision and pattern recognition}, pp.\  3128--3137, 2015.

\bibitem[Lauren{\c{c}}on et~al.(2024{\natexlab{a}})Lauren{\c{c}}on, Marafioti, Sanh, and Tronchon]{laurenccon2024building}
Hugo Lauren{\c{c}}on, Andr{\'e}s Marafioti, Victor Sanh, and L{\'e}o Tronchon.
\newblock Building and better understanding vision-language models: insights and future directions.
\newblock \emph{arXiv preprint arXiv:2408.12637}, 2024{\natexlab{a}}.

\bibitem[Lauren{\c{c}}on et~al.(2024{\natexlab{b}})Lauren{\c{c}}on, Tronchon, Cord, and Sanh]{laurencon2024matters}
Hugo Lauren{\c{c}}on, L{\'e}o Tronchon, Matthieu Cord, and Victor Sanh.
\newblock What matters when building vision-language models?
\newblock \emph{Advances in Neural Information Processing Systems}, 37:\penalty0 87874--87907, 2024{\natexlab{b}}.

\bibitem[Lester et~al.(2021)Lester, Al-Rfou, and Constant]{lester-etal-2021-power}
Brian Lester, Rami Al-Rfou, and Noah Constant.
\newblock The power of scale for parameter-efficient prompt tuning.
\newblock In \emph{Proceedings of the 2021 conference on empirical methods in natural language processing}, pp.\  3045--3059, 2021.

\bibitem[Li et~al.(2024{\natexlab{a}})Li, Liu, Hu, Sun, Hu, and Zhang]{li2024state}
Dongfang Li, Zhenyu Liu, Xinshuo Hu, Zetian Sun, Baotian Hu, and Min Zhang.
\newblock In-context learning state vector with inner and momentum optimization.
\newblock \emph{Advances in Neural Information Processing Systems}, 37:\penalty0 7797--7820, 2024{\natexlab{a}}.

\bibitem[Li et~al.(2024{\natexlab{b}})Li, Zhang, Zhang, Zhang, Li, Li, Ma, and Li]{li2024llava}
Feng Li, Renrui Zhang, Hao Zhang, Yuanhan Zhang, Bo~Li, Wei Li, Zejun Ma, and Chunyuan Li.
\newblock Llava-next-interleave: Tackling multi-image, video, and 3d in large multimodal models.
\newblock \emph{arXiv preprint arXiv:2407.07895}, 2024{\natexlab{b}}.

\bibitem[Li et~al.(2023)Li, Patel, Vi{\'e}gas, Pfister, and Wattenberg]{li2023inference}
Kenneth Li, Oam Patel, Fernanda Vi{\'e}gas, Hanspeter Pfister, and Martin Wattenberg.
\newblock Inference-time intervention: Eliciting truthful answers from a language model.
\newblock \emph{Advances in Neural Information Processing Systems}, 36:\penalty0 41451--41530, 2023.

\bibitem[Li \& Liang(2021)Li and Liang]{li-liang-2021-prefix}
Xiang~Lisa Li and Percy Liang.
\newblock Prefix-tuning: Optimizing continuous prompts for generation.
\newblock In \emph{Proceedings of the 59th Annual Meeting of the Association for Computational Linguistics and the 11th International Joint Conference on Natural Language Processing (Volume 1: Long Papers)}, pp.\  4582--4597, 2021.

\bibitem[Li et~al.(2026)Li, Liu, Kang, Luo, Lou, Wu, and Xiong]{li2026hificl}
Xiaoyu Li, Yuhang Liu, Xuanshuo Kang, Zheng Luo, Fangqi Lou, Xiaohua Wu, and Zihan Xiong.
\newblock Hificl: High-fidelity in-context learning for multimodal tasks.
\newblock In \emph{Proceedings of the IEEE/CVF Conference on Computer Vision and Pattern Recognition}, pp.\  3069--3078, 2026.

\bibitem[Li et~al.(2025{\natexlab{a}})Li, Cao, He, Cheng, Fu, Xiao, Wang, and Tang]{li2025m2iv}
Yanshu Li, Yi~Cao, Hongyang He, Qisen Cheng, Xiang Fu, Xi~Xiao, Tianyang Wang, and Ruixiang Tang.
\newblock {M$^2$IV}: Towards efficient and fine-grained multimodal in-context learning via representation engineering.
\newblock \emph{arXiv preprint arXiv:2504.04633}, 2025{\natexlab{a}}.

\bibitem[Li et~al.(2025{\natexlab{b}})Li, Xu, Han, Gao, Wen, Liu, Wang, and Metaxas]{li2025implicit}
Zhuowei Li, Zihao Xu, Ligong Han, Yunhe Gao, Song Wen, Di~Liu, Hao Wang, and Dimitris Metaxas.
\newblock Implicit in-context learning.
\newblock In \emph{International Conference on Learning Representations}, volume 2025, pp.\  101644--101666, 2025{\natexlab{b}}.

\bibitem[Lin et~al.(2014)Lin, Maire, Belongie, Hays, Perona, Ramanan, Doll{\'a}r, and Zitnick]{lin2014microsoft}
Tsung-Yi Lin, Michael Maire, Serge Belongie, James Hays, Pietro Perona, Deva Ramanan, Piotr Doll{\'a}r, and C~Lawrence Zitnick.
\newblock Microsoft coco: Common objects in context.
\newblock In \emph{European conference on computer vision}, pp.\  740--755. Springer, 2014.

\bibitem[Liu et~al.(2022{\natexlab{a}})Liu, Tam, Muqeeth, Mohta, Huang, Bansal, and Raffel]{liu2022few}
Haokun Liu, Derek Tam, Mohammed Muqeeth, Jay Mohta, Tenghao Huang, Mohit Bansal, and Colin~A Raffel.
\newblock Few-shot parameter-efficient fine-tuning is better and cheaper than in-context learning.
\newblock \emph{Advances in Neural Information Processing Systems}, 35:\penalty0 1950--1965, 2022{\natexlab{a}}.

\bibitem[Liu et~al.(2023{\natexlab{a}})Liu, Li, Wu, and Lee]{liu2023visual}
Haotian Liu, Chunyuan Li, Qingyang Wu, and Yong~Jae Lee.
\newblock Visual instruction tuning.
\newblock \emph{Advances in neural information processing systems}, 36:\penalty0 34892--34916, 2023{\natexlab{a}}.

\bibitem[Liu et~al.(2024{\natexlab{a}})Liu, Li, Li, and Lee]{liu2024improved}
Haotian Liu, Chunyuan Li, Yuheng Li, and Yong~Jae Lee.
\newblock Improved baselines with visual instruction tuning.
\newblock In \emph{Proceedings of the IEEE/CVF conference on computer vision and pattern recognition}, pp.\  26296--26306, 2024{\natexlab{a}}.

\bibitem[Liu et~al.(2023{\natexlab{b}})Liu, Yuan, Fu, Jiang, Hayashi, and Neubig]{liu2023pre}
Pengfei Liu, Weizhe Yuan, Jinlan Fu, Zhengbao Jiang, Hiroaki Hayashi, and Graham Neubig.
\newblock Pre-train, prompt, and predict: A systematic survey of prompting methods in natural language processing.
\newblock \emph{ACM computing surveys}, 55\penalty0 (9):\penalty0 1--35, 2023{\natexlab{b}}.

\bibitem[Liu et~al.(2024{\natexlab{b}})Liu, Ye, Xing, and Zou]{liu2024incontext}
Sheng Liu, Haotian Ye, Lei Xing, and James~Y. Zou.
\newblock In-context vectors: Making in context learning more effective and controllable through latent space steering.
\newblock In Ruslan Salakhutdinov, Zico Kolter, Katherine Heller, Adrian Weller, Nuria Oliver, Jonathan Scarlett, and Felix Berkenkamp (eds.), \emph{Proceedings of the 41st International Conference on Machine Learning}, volume 235 of \emph{Proceedings of Machine Learning Research}, pp.\  32287--32307. PMLR, 21--27 Jul 2024{\natexlab{b}}.
\newblock URL \url{https://proceedings.mlr.press/v235/liu24bx.html}.

\bibitem[Liu et~al.(2022{\natexlab{b}})Liu, Ji, Fu, Tam, Du, Yang, and Tang]{liu-etal-2022-p}
Xiao Liu, Kaixuan Ji, Yicheng Fu, Weng Tam, Zhengxiao Du, Zhilin Yang, and Jie Tang.
\newblock P-tuning: Prompt tuning can be comparable to fine-tuning across scales and tasks.
\newblock In \emph{Proceedings of the 60th Annual Meeting of the Association for Computational Linguistics (Volume 2: Short Papers)}, pp.\  61--68, 2022{\natexlab{b}}.

\bibitem[Liu et~al.(2024{\natexlab{c}})Liu, Zheng, Du, Ding, Qian, Yang, and Tang]{liu2021gpt}
Xiao Liu, Yanan Zheng, Zhengxiao Du, Ming Ding, Yujie Qian, Zhilin Yang, and Jie Tang.
\newblock Gpt understands, too.
\newblock \emph{AI open}, 5:\penalty0 208--215, 2024{\natexlab{c}}.

\bibitem[Loshchilov \& Hutter(2017)Loshchilov and Hutter]{loshchilov2017decoupled}
Ilya Loshchilov and Frank Hutter.
\newblock Decoupled weight decay regularization.
\newblock \emph{arXiv preprint arXiv:1711.05101}, 2017.

\bibitem[Luo et~al.(2025)Luo, Darrell, and Bar]{luo2025vlm}
Grace Luo, Trevor Darrell, and Amir Bar.
\newblock Vision-language models create cross-modal task representations.
\newblock In Aarti Singh, Maryam Fazel, Daniel Hsu, Simon Lacoste-Julien, Felix Berkenkamp, Tegan Maharaj, Kiri Wagstaff, and Jerry Zhu (eds.), \emph{Proceedings of the 42nd International Conference on Machine Learning}, volume 267 of \emph{Proceedings of Machine Learning Research}, pp.\  41157--41176. PMLR, 13--19 Jul 2025.
\newblock URL \url{https://proceedings.mlr.press/v267/luo25c.html}.

\bibitem[Ma et~al.(2026)Ma, Gou, Hu, Wang, Zhuang, and Cai]{ma2026sensitivity}
Ziyu Ma, Chenhui Gou, Yiming Hu, Yong Wang, Bohan Zhuang, and Jianfei Cai.
\newblock Where and what matters: Sensitivity-aware task vectors for many-shot multimodal in-context learning.
\newblock In \emph{Proceedings of the AAAI Conference on Artificial Intelligence}, volume~40, pp.\  7892--7900, 2026.

\bibitem[Marino et~al.(2019)Marino, Rastegari, Farhadi, and Mottaghi]{marino2019okvqa}
Kenneth Marino, Mohammad Rastegari, Ali Farhadi, and Roozbeh Mottaghi.
\newblock Ok-vqa: A visual question answering benchmark requiring external knowledge.
\newblock In \emph{Proceedings of the IEEE/cvf conference on computer vision and pattern recognition}, pp.\  3195--3204, 2019.

\bibitem[Mu et~al.(2023)Mu, Li, and Goodman]{mu2023gist}
Jesse Mu, Xiang Li, and Noah Goodman.
\newblock Learning to compress prompts with gist tokens.
\newblock \emph{Advances in Neural Information Processing Systems}, 36:\penalty0 19327--19352, 2023.

\bibitem[Nilsback \& Zisserman(2008)Nilsback and Zisserman]{nilsback2008automated}
Maria-Elena Nilsback and Andrew Zisserman.
\newblock Automated flower classification over a large number of classes.
\newblock In \emph{2008 Sixth Indian conference on computer vision, graphics \& image processing}, pp.\  722--729. IEEE, 2008.

\bibitem[Oymak et~al.(2023)Oymak, Rawat, Soltanolkotabi, and Thrampoulidis]{oymak2023role}
Samet Oymak, Ankit~Singh Rawat, Mahdi Soltanolkotabi, and Christos Thrampoulidis.
\newblock On the role of attention in prompt-tuning.
\newblock In \emph{International Conference on Machine Learning}, pp.\  26724--26768. PMLR, 2023.

\bibitem[Park et~al.(2026)Park, Kim, Choi, and Rhee]{park2026soft}
Jungwon Park, Jimyeong Kim, Changin Choi, and Wonjong Rhee.
\newblock Soft head selection for injecting icl-derived task embeddings.
\newblock In \emph{Findings of the Association for Computational Linguistics: ACL 2026}, pp.\  27161--27214, 2026.

\bibitem[Peng et~al.(2024)Peng, Hao, Hu, Peng, Geng, and Yang]{peng2024live}
Yingzhe Peng, Chenduo Hao, Xinting Hu, Jiawei Peng, Xin Geng, and Xu~Yang.
\newblock Live: Learnable in-context vector for visual question answering.
\newblock \emph{Advances in Neural Information Processing Systems}, 37:\penalty0 9773--9800, 2024.

\bibitem[Petrov et~al.(2024)Petrov, Torr, and Bibi]{petrov2024prompting}
Aleksandar Petrov, Philip Torr, and Adel Bibi.
\newblock When do prompting and prefix-tuning work? a theory of capabilities and limitations.
\newblock In \emph{International Conference on Learning Representations}, volume 2024, pp.\  6031--6054, 2024.

\bibitem[Qin \& Eisner(2021)Qin and Eisner]{qin-eisner-2021-learning}
Guanghui Qin and Jason Eisner.
\newblock Learning how to ask: Querying lms with mixtures of soft prompts.
\newblock In \emph{Proceedings of the 2021 Conference of the North American Chapter of the Association for Computational Linguistics: Human Language Technologies}, pp.\  5203--5212, 2021.

\bibitem[Rimsky et~al.(2024)Rimsky, Gabrieli, Schulz, Tong, Hubinger, and Turner]{panickssery-etal-2024-steering}
Nina Rimsky, Nick Gabrieli, Julian Schulz, Meg Tong, Evan Hubinger, and Alexander Turner.
\newblock Steering llama 2 via contrastive activation addition.
\newblock In \emph{Proceedings of the 62nd Annual Meeting of the Association for Computational Linguistics (Volume 1: Long Papers)}, pp.\  15504--15522, 2024.

\bibitem[Rohrbach et~al.(2018)Rohrbach, Hendricks, Burns, Darrell, and Saenko]{rohrbach2018object}
Anna Rohrbach, Lisa~Anne Hendricks, Kaylee Burns, Trevor Darrell, and Kate Saenko.
\newblock Object hallucination in image captioning.
\newblock In \emph{Proceedings of the 2018 conference on empirical methods in natural language processing}, pp.\  4035--4045, 2018.

\bibitem[Saglam et~al.(2025)Saglam, Hu, Yang, Kalogerias, and Karbasi]{saglam2025learning}
Baturay Saglam, Xinyang Hu, Zhuoran Yang, Dionysis Kalogerias, and Amin Karbasi.
\newblock Learning task representations from in-context learning.
\newblock In \emph{Findings of the Association for Computational Linguistics: ACL 2025}, pp.\  6634--6663, 2025.

\bibitem[Sclar et~al.(2024)Sclar, Choi, Tsvetkov, and Suhr]{sclar2024quantifying}
Melanie Sclar, Yejin Choi, Yulia Tsvetkov, and Alane Suhr.
\newblock Quantifying language models' sensitivity to spurious features in prompt design or: How i learned to start worrying about prompt formatting.
\newblock In \emph{International Conference on Learning Representations}, volume 2024, pp.\  25055--25083, 2024.

\bibitem[Subramani et~al.(2022)Subramani, Suresh, and Peters]{subramani-etal-2022-extracting}
Nishant Subramani, Nivedita Suresh, and Matthew~E Peters.
\newblock Extracting latent steering vectors from pretrained language models.
\newblock In \emph{Findings of the Association for Computational Linguistics: ACL 2022}, pp.\  566--581, 2022.

\bibitem[Talemi et~al.(2026)Talemi, Kashiani, and Afghah]{talemi2026hyper}
Niloufar~Alipour Talemi, Hossein Kashiani, and Fatemeh Afghah.
\newblock Hyper-icl: Attention calibration with hyperbolic anchor distillation for multimodal in-context learning.
\newblock \emph{arXiv preprint arXiv:2606.04434}, 2026.

\bibitem[Todd et~al.(2024)Todd, Li, Sen~Sharma, Mueller, Wallace, and Bau]{todd2024function}
Eric Todd, Millicent Li, Arnab Sen~Sharma, Aaron Mueller, Byron Wallace, and David Bau.
\newblock Function vectors in large language models.
\newblock In \emph{International conference on learning representations}, volume 2024, pp.\  17282--17333, 2024.

\bibitem[Touvron et~al.(2023)Touvron, Lavril, Izacard, Martinet, Lachaux, Lacroix, Rozi{\`e}re, Goyal, Hambro, Azhar, et~al.]{touvron2023llama}
Hugo Touvron, Thibaut Lavril, Gautier Izacard, Xavier Martinet, Marie-Anne Lachaux, Timoth{\'e}e Lacroix, Baptiste Rozi{\`e}re, Naman Goyal, Eric Hambro, Faisal Azhar, et~al.
\newblock Llama: Open and efficient foundation language models.
\newblock \emph{arXiv preprint arXiv:2302.13971}, 2023.

\bibitem[Turner et~al.(2023)Turner, Thiergart, Leech, Udell, Vazquez, Mini, and MacDiarmid]{turner2023activation}
Alexander~Matt Turner, Lisa Thiergart, Gavin Leech, David Udell, Juan~J Vazquez, Ulisse Mini, and Monte MacDiarmid.
\newblock Steering language models with activation engineering.
\newblock \emph{arXiv preprint arXiv:2308.10248}, 2023.

\bibitem[Vedantam et~al.(2015)Vedantam, Lawrence~Zitnick, and Parikh]{vedantam2015cider}
Ramakrishna Vedantam, C~Lawrence~Zitnick, and Devi Parikh.
\newblock Cider: Consensus-based image description evaluation.
\newblock In \emph{Proceedings of the IEEE conference on computer vision and pattern recognition}, pp.\  4566--4575, 2015.

\bibitem[Venhoff et~al.(2025)Venhoff, Arcuschin, Torr, Conmy, and Nanda]{venhoff2025understanding}
Constantin Venhoff, Iv{\'a}n Arcuschin, Philip Torr, Arthur Conmy, and Neel Nanda.
\newblock Understanding reasoning in thinking language models via steering vectors.
\newblock \emph{arXiv preprint arXiv:2506.18167}, 2025.

\bibitem[Wah et~al.(2011)Wah, Branson, Welinder, Perona, and Belongie]{wah2011caltech}
Catherine Wah, Steve Branson, Peter Welinder, Pietro Perona, and Serge Belongie.
\newblock The caltech-ucsd birds-200-2011 dataset.
\newblock 2011.

\bibitem[Wang \& Komatsuzaki(2021)Wang and Komatsuzaki]{wang2021gptj}
Ben Wang and Aran Komatsuzaki.
\newblock Gpt-j-6b: A 6 billion parameter autoregressive language model, 2021.

\bibitem[Wang et~al.(2025)Wang, Gao, Gu, Pu, Cui, Wei, Liu, Jing, Ye, Shao, Wang, Chen, Zhang, Yang, Wang, Wei, Yin, Li, Cui, Chen, Ding, Tian, Wu, Xie, Li, Yang, Duan, Wang, Hou, Hao, Zhang, Li, Zhao, Duan, Deng, Fu, He, Wang, He, Shi, He, Xiong, Lv, Wu, Shao, Zhang, Deng, Qi, Ge, Guo, Zhang, Zhang, Cao, Lin, Tang, Gao, Huang, Gu, Lyu, Tang, Wang, Lv, Ouyang, Wang, Dou, Zhu, Lu, Lin, Dai, Su, Zhou, Chen, Qiao, Wang, and Luo]{wang2025internvl3}
Weiyun Wang, Zhangwei Gao, Lixin Gu, Hengjun Pu, Long Cui, Xingguang Wei, Zhaoyang Liu, Linglin Jing, Shenglong Ye, Jie Shao, Zhaokai Wang, Zhe Chen, Hongjie Zhang, Ganlin Yang, Haomin Wang, Qi~Wei, Jinhui Yin, Wenhao Li, Erfei Cui, Guanzhou Chen, Zichen Ding, Changyao Tian, Zhenyu Wu, Jingjing Xie, Zehao Li, Bowen Yang, Yuchen Duan, Xuehui Wang, Zhi Hou, Haoran Hao, Tianyi Zhang, Songze Li, Xiangyu Zhao, Haodong Duan, Nianchen Deng, Bin Fu, Yinan He, Yi~Wang, Conghui He, Botian Shi, Junjun He, Yingtong Xiong, Han Lv, Lijun Wu, Wenqi Shao, Kaipeng Zhang, Huipeng Deng, Biqing Qi, Jiaye Ge, Qipeng Guo, Wenwei Zhang, Songyang Zhang, Maosong Cao, Junyao Lin, Kexian Tang, Jianfei Gao, Haian Huang, Yuzhe Gu, Chengqi Lyu, Huanze Tang, Rui Wang, Haijun Lv, Wanli Ouyang, Limin Wang, Min Dou, Xizhou Zhu, Tong Lu, Dahua Lin, Jifeng Dai, Weijie Su, Bowen Zhou, Kai Chen, Yu~Qiao, Wenhai Wang, and Gen Luo.
\newblock Internvl3.5: Advancing open-source multimodal models in versatility, reasoning, and efficiency, 2025.
\newblock URL \url{https://arxiv.org/abs/2508.18265}.

\bibitem[Wang et~al.(2023)Wang, Chauhan, Wang, and Hsieh]{wang2023universality}
Yihan Wang, Jatin Chauhan, Wei Wang, and Cho-Jui Hsieh.
\newblock Universality and limitations of prompt tuning.
\newblock \emph{Advances in Neural Information Processing Systems}, 36:\penalty0 75623--75643, 2023.

\bibitem[Wu et~al.(2024)Wu, Arora, Wang, Geiger, Jurafsky, Manning, and Potts]{wu2024reft}
Zhengxuan Wu, Aryaman Arora, Zheng Wang, Atticus Geiger, Dan Jurafsky, Christopher~D Manning, and Christopher Potts.
\newblock Reft: Representation finetuning for language models.
\newblock \emph{Advances in Neural Information Processing Systems}, 37:\penalty0 63908--63962, 2024.

\bibitem[Yang et~al.(2026)Yang, Cho, Ding, and Inoue]{yang2026task}
Haolin Yang, Hakaze Cho, Kaize Ding, and Naoya Inoue.
\newblock Task vectors, learned not extracted: Performance gains and mechanistic insights.
\newblock In \emph{International Conference on Learning Representations}, volume 2026, pp.\  74846--74903, 2026.

\bibitem[Zhang et~al.(2023)Zhang, Han, Liu, Gao, Zhou, Hu, Yan, Lu, Li, and Qiao]{zhang2023llama}
Renrui Zhang, Jiaming Han, Chris Liu, Peng Gao, Aojun Zhou, Xiangfei Hu, Shilin Yan, Pan Lu, Hongsheng Li, and Yu~Qiao.
\newblock Llama-adapter: Efficient fine-tuning of language models with zero-init attention.
\newblock \emph{arXiv preprint arXiv:2303.16199}, 2023.

\bibitem[Zhang et~al.(2025)Zhang, Li, Liu, and Yue]{zhang2025learning}
Yiyuan Zhang, Handong Li, Jing Liu, and Xiangyu Yue.
\newblock Learning beyond still frames: Scaling vision-language models with video.
\newblock In \emph{Proceedings of the IEEE/CVF International Conference on Computer Vision}, pp.\  22425--22435, 2025.

\bibitem[Zhao et~al.(2024)Zhao, Cai, Si, Ma, An, Chen, Liu, Wang, Han, and Chang]{zhao2024mmicl}
Haozhe Zhao, Zefan Cai, Shuzheng Si, Xiaojian Ma, Kaikai An, Liang Chen, Zixuan Liu, Sheng Wang, Wenjuan Han, and Baobao Chang.
\newblock Mmicl: Empowering vision-language model with multi-modal in-context learning.
\newblock In \emph{International Conference on Learning Representations}, volume 2024, pp.\  14942--14980, 2024.

\bibitem[Zhao et~al.(2021)Zhao, Wallace, Feng, Klein, and Singh]{zhao2021calibrate}
Zihao Zhao, Eric Wallace, Shi Feng, Dan Klein, and Sameer Singh.
\newblock Calibrate before use: Improving few-shot performance of language models.
\newblock In \emph{International conference on machine learning}, pp.\  12697--12706. Pmlr, 2021.

\bibitem[Zhou et~al.(2024)Zhou, Li, Wang, and Shen]{zhou2024visual}
Yucheng Zhou, Xiang Li, Qianning Wang, and Jianbing Shen.
\newblock Visual in-context learning for large vision-language models.
\newblock In \emph{Findings of the Association for Computational Linguistics: ACL 2024}, pp.\  15890--15902, 2024.

\bibitem[Zou et~al.(2023)Zou, Phan, Chen, Campbell, Guo, Ren, Pan, Yin, Mazeika, Dombrowski, et~al.]{zou2023representation}
Andy Zou, Long Phan, Sarah Chen, James Campbell, Phillip Guo, Richard Ren, Alexander Pan, Xuwang Yin, Mantas Mazeika, Ann-Kathrin Dombrowski, et~al.
\newblock Representation engineering: A top-down approach to ai transparency.
\newblock \emph{arXiv preprint arXiv:2310.01405}, 2023.

\end{thebibliography}
\bibliographystyle{stave}
\clearpage
\appendix
\numberwithin{equation}{section}

\addtocontents{toc}{\protect\setcounter{tocdepth}{2}}
\renewcommand{\contentsname}{Appendix Contents}
\begingroup
\setlength{\parskip}{0pt}
\tableofcontents
\endgroup
\setcounter{tocdepth}{3}
\clearpage

\renewcommand{\topfraction}{0.9}
\renewcommand{\bottomfraction}{0.8}
\renewcommand{\textfraction}{0.07}
\renewcommand{\floatpagefraction}{0.85}
\setcounter{topnumber}{3}
\setcounter{bottomnumber}{2}
\setcounter{totalnumber}{4}

\section{Extended related work}
\label{app:related-work}

\textbf{Multimodal ICL and its cost.}
Visual instruction tuning connects image understanding with language generation \citep{liu2023visual,liu2024improved}, and interleaved image--text training together with multi-image instruction tuning supports richer multimodal contexts \citep{laurencon2024matters,jiang2024mantis,zhao2024mmicl}.
These models inherit demo-based adaptation from language models \citep{brown2020language}, but each demo image adds visual processing and many visual tokens to every query.
Visual In-Context Learning retrieves demos, summarizes their images, and composes textual demos \citep{zhou2024visual}, yet the demos remain in every prompt.
Demo-free methods remove them and store the task in a compact state.
They differ in where this state enters the frozen model: as extracted or trained decoder states, as added input tokens or attention memory, or, in STAVE, as updates to existing input embeddings.
STAVE keeps only the query image and text at inference, so it never encodes a demo image for a new query and runs with the prompt length and cache of zero-shot inference.

\textbf{Extracted visual and multimodal task vectors.}
Visual task vectors in MAE-VQGAN show that averaged attention-head activations can specify visual tasks without input--output examples \citep{hojel2024finding}, and cross-modal activation patching shows that LMMs encode task information shared across text and image inputs \citep{luo2025vlm}.
MTV averages demo-conditioned head activations and uses reinforcement learning to select the heads that receive them \citep{huang2024multimodal}.
STV selects context-sensitive heads from activation differences and then uses reinforcement learning to pick values from clustered activation banks \citep{ma2026sensitivity}.
In both, the values are restricted to demo activations, and every task needs a search over head locations.
STAVE learns its values directly with answer supervision on tokens fixed by prompt structure, which removes the head search and the activation bank and lets the readout and context vectors be optimized jointly for the same answer.
Because the update enters before the first decoder layer, one pair of vectors reaches every layer's attention, whereas a head-level vector acts only where it is inserted.
STAVE outperforms MTV and STV in every comparison, including STV at an equal parameter count.

\textbf{Distillation through decoder interventions.}
LIVE learns one vector and one scale per decoder layer by matching the output distribution of a demo-conditioned model, with an added answer loss \citep{peng2024live}.
M$^2$IV assigns separate vectors to the attention and feed-forward branches of every layer and combines output distillation, answer supervision, and a branch-correlation objective \citep{li2025m2iv}.
MimIC learns per-head shifts after attention with query-dependent scaling and aligns hidden states with the demo-conditioned model layer by layer \citep{jiang2025mimic}.
Hyper-ICL adds low-rank, query-modulated biases to attention logits and aligns intermediate features with a demo-conditioned teacher in hyperbolic space \citep{talemi2026hyper}.
These parameters sit in every decoder layer, so their storage grows with depth, their objectives match teacher outputs or intermediate features, and the student sees only demo-free prompts.
STAVE stores $2d$ scalars regardless of depth, 14 to 2,080 times fewer than LIVE and MimIC, adds no per-layer computation at inference, and scores higher on average.
Its two vectors are trained with answer CE on prompts with and without demos, with no teacher target, and this paired objective improves demo-free accuracy over training on demo-free prompts alone.

\textbf{Teacher-free virtual attention context.}
HiFICL places low-rank virtual KV pairs in every attention head, whose interaction with each query mixes learned context values into the original attention output, and trains them with the task loss alone, as STAVE does \citep{li2026hificl}.
STAVE instead changes the embeddings of existing readout and context tokens before the decoder.
It stores $2d$ scalars, over 200 times fewer than HiFICL, adds no attention-memory entries, and scores higher on average on every LMM we evaluate.
Its readout vector also changes the attention query of the token that predicts the answer, so it can re-weight attention over the query's own image and text tokens, whereas virtual keys and values can only mix a learned context into the original attention output.

\textbf{Extracted and adaptive task representations in language models.}
TV takes the prompted residual state at one layer \citep{hendel-etal-2023-context}, FV sums the outputs of task-relevant attention heads \citep{todd2024function}, and in-context vectors derive steering directions from latent input--output differences \citep{liu2024incontext}.
Later methods optimize injection coefficients or weighted combinations of head outputs \citep{li2025implicit,saglam2025learning}, generate a vector for each query \citep{kang2025adaptive}, or refine it with inner and momentum optimization \citep{li2024state}.
SITE learns soft head-selection weights over ICL-derived embeddings \citep{park2026soft}, and dynamic vector construction segments extracted representations and learns their injection locations \citep{cai2025beyond}.
Learned task vectors replace extraction with answer-supervised training of a vector at a chosen layer and token \citep{yang2026task}, and analyses of extracted task vectors identify limits on representing high-rank task mappings \citep{dong2026understanding}.
STAVE also learns its vectors from the answer, but places them at the input over tokens defined by prompt structure, so its tokens need no per-task search and one structural partition transfers across tasks and backbones.
Its normalization keeps the update energy from growing with the number of tokens a vector updates, which matters in LMMs, where one image expands into hundreds of tokens.
On the TV benchmark, STAVE exceeds TV and FV on every model and reaches the highest average accuracy, above SITE and 15-shot ICL, without any demo at inference.
The vector-sharing tradeoffs of this parameterization are analyzed in Appendix~\ref{app:tying}.

\textbf{Task adaptation at added input tokens.}
PT prepends learned vectors to the input embeddings of a frozen model and approaches full fine-tuning as model scale grows \citep{lester-etal-2021-power}.
P-tuning generates continuous prompts at template-defined locations with a small prompt encoder \citep{liu2021gpt}, deep prompt tuning and prefix tuning supply prompt vectors or KV pairs at every decoder layer \citep{liu-etal-2022-p,li-liang-2021-prefix}, and gist tokens compress an instruction into the activations of a few added tokens \citep{mu2023gist}.
In LMMs, MAPD meta-learns soft prompts together with an attention mapper over visual features and adapts them from a few labeled examples at test time \citep{gupta2026meta}.
Surveys of prompting collect further variants \citep{liu2023pre}.
All of these methods add tokens or attention-memory entries whose placement is a design choice, and language models are sensitive to prompt formatting \citep{sclar2024quantifying}.
Theory shows that added prefix tokens only bias the attention output of the original tokens without changing their relative attention \citep{petrov2024prompting}, characterizes the expressive limits of bounded-length prompts \citep{wang2023universality}, and traces how a tuned prompt acts through a single attention layer \citep{oymak2023role}.
STAVE also keeps the decoder frozen and stores the task at the input, but differs in three respects.
It writes onto existing tokens selected by a structural partition shared across tasks and backbones, so no insertion point is chosen and the prompt keeps its zero-shot length and cache.
Its readout vector changes the attention query of the token that predicts the answer, so it can change the relative attention over the original image and text tokens, which added keys cannot do at a fixed layer.
A zero update recovers the original computation, so training starts from the frozen model's own predictions, whereas an added token with a zero embedding still enters the attention normalizer, and LLaMA-Adapter obtains this property only through a separately learned zero-initialized gate \citep{zhang2023llama}.
A two-token soft prompt stores as many parameters as STAVE, so these differences come from where the parameters enter rather than from capacity.
STAVE outperforms 20-token PT on LMMs with 10 times fewer parameters and outperforms PT and prefix tuning on the TV benchmark.
Appendix~\ref{app:prompt-comparison} analyzes the attention-level consequence.

\textbf{Updates to existing representations.}
Steering vectors add a fixed direction to the residual stream at chosen layers \citep{subramani-etal-2022-extracting,turner2023activation,panickssery-etal-2024-steering,zou2023representation}, inference-time intervention shifts selected attention-head outputs \citep{li2023inference}, and ReFT learns low-rank interventions on the hidden states of a fixed set of prefix and suffix tokens in selected layers \citep{wu2024reft}.
Mixtures of soft prompts initialize each prompt vector from a template word and tune it in place \citep{qin-eisner-2021-learning}.
These methods choose a layer, a direction, or a set of tokens per task, and the steering approaches take their directions from activation contrasts rather than from the task answer.
STAVE also updates existing tokens but acts at the input embeddings, so one update reaches every layer's attention and no layer is chosen.
Moving the same update to the output of a decoder layer does not improve it in our depth ablation.
Its tokens follow prompt structure, including the expanded image embeddings and the final prompt token, and its two shared vectors are scaled by the number of tokens each updates and learned from the answer on paired prompts.

\textbf{Weight and module adaptation.}
Adapters and LoRA learn task-specific modules or low-rank weight updates \citep{houlsby2019parameter,hu2022lora}, and few-shot parameter-efficient fine-tuning has been proposed as an alternative to supplying demos repeatedly \citep{liu2022few}.
These methods also keep the pretrained weights frozen, but their task parameters are spread across decoder layers and grow with depth and rank.
STAVE stores two embedding vectors, shares every decoder module across tasks, switches tasks by replacing two vectors, and outperforms LoRA with orders of magnitude fewer parameters.

\clearpage
\section{Injection depth and inserted tokens}
\label{app:depth-theory}

The derivative statements concern the real-valued decoder map at fixed tokenization, attention masks, positional indices, and correct answer prefixes. Modality features are frozen, and any stochastic computation is fixed or disabled. These statements do not differentiate through token selection or numerical quantization.

\subsection{Proof of Proposition~\ref{prop:depth-reach}}
\label{app:depth-reach-proof}

Fix a nonempty prompt with $T$ attended tokens. Let $H_\ell\in\R^{T\times d}$ contain all its states after block $\ell$, with $H_0$ the input embeddings. A prompt-only intervention at depth $\ell$ changes these states before later blocks are evaluated. It does not alter answer-token states at that depth or retroactively modify keys and values already formed in blocks $1,\ldots,\ell$. After the last block, the output computation consists only of positionwise normalization, if present, and a positionwise vocabulary head.

\begin{proof}
For part (i), the prompt keys and values used by block $k$ are computed from $H_{k-1}$. An addition to $H_\ell$ therefore leaves all prompt keys and values in blocks $k\leq\ell$ unchanged and can affect later blocks only. The first answer token is predicted from the final prompt token. Every subsequent teacher-forced answer token is predicted from an answer-prefix token. At $\ell=N_{\mathrm{layer}}$, no later block can transmit a prompt-state change to these answer tokens. The positionwise head can therefore depend on the intervention only through the final prompt state, and only for the first prediction. Its derivative there may also vanish. Since the context vector excludes the final token, it has zero sequence-loss gradient at this depth.

For part (ii), take depths $0\leq\ell'<\ell\leq N_{\mathrm{layer}}$. Causality makes the Jacobian of each intervening block, restricted to all prompt states, block lower triangular in token order. Its diagonal $d\times d$ blocks are the own-token derivatives. By assumption, these blocks are invertible. Their product, the Jacobian $\mathsf T_{\ell,\ell'}$ from the row-stacked entries of $H_{\ell'}$ to those of $H_\ell$, is thus invertible. For any desired infinitesimal change $\Delta\in\R^{Td}$ at depth $\ell$, the unique prompt-state perturbation producing it to first order is $\mathsf T_{\ell,\ell'}^{-1}\Delta$. The first-answer logits factor through $H_\ell$ and the remaining prompt computation, so the two interventions have the same first-order effect on those logits.
\end{proof}

\textbf{Several depths.}
For simultaneous prompt-state shifts $\Delta_\ell$ at multiple depths, all evaluated about the same unmodified forward pass, let $\mathsf T_\ell=\mathsf T_{\ell,0}$ and $\mathsf T_0=I$. The input perturbation $\sum_\ell\mathsf T_\ell^{-1}\Delta_\ell$ reproduces their combined first-order first-answer effect by the chain rule and linearity, given nonsingularity along every involved path. Part (ii) concerns unrestricted perturbations over all attended prompt tokens and the first answer token. It characterizes what the input interface can reach, while the two-vector parameterization is analyzed in Appendix~\ref{app:geometry}.

\subsection{Attention ratios at a fixed layer}
\label{app:prompt-comparison}

Fix one attention head and an attention query $\xi\in\R^{d_h}$, where $d_h$ is the head dimension. For visible token $i$, let $k_i$ and $u_i$ be its key and value, and let its finite attention score be $s_i=\xi^\top k_i/\sqrt{d_h}+c_i$, with fixed positional term $c_i$. Partition the visible tokens into nonempty original and added sets $Q$ and $D$. Writing $Z_Q=\sum_{i\in Q}e^{s_i}$ and $Z_D=\sum_{i\in D}e^{s_i}$ gives
\begin{equation}
 o=(1-\lambda)o^Q+\lambda o^D,\qquad
 \lambda=\frac{Z_D}{Z_Q+Z_D},\qquad
 o^Q=\frac{\sum_{i\in Q}e^{s_i}u_i}{Z_Q},\quad
 o^D=\frac{\sum_{i\in D}e^{s_i}u_i}{Z_D}.
\label{eq:attention-decomposition}
\end{equation}
This is the single-head decomposition used to motivate shift-based adaptation \citep{jiang2025mimic}. All quantities here belong to the same forward computation. In a multilayer decoder, $o^Q$ need not equal the output on a separately rendered demo-free prompt, because the original states and positional scores can also change.

\textbf{Added keys and changed queries.}
If the original scores remain fixed when keys are added, every original weight $\alpha_i=e^{s_i}/Z_Q$ becomes $\alpha_i'=Z_Q\alpha_i/(Z_Q+Z_D)$. Hence $\alpha_i'/\alpha_j'=\alpha_i/\alpha_j$ for $i,j\in Q$ \citep{petrov2024prompting}. In contrast, holding keys and positional terms fixed while changing the query to $\xi+\epsilon b$ gives
\begin{equation}
 \left.\frac{d}{d\epsilon}\log\frac{\alpha_i(\epsilon)}{\alpha_j(\epsilon)}\right|_{\epsilon=0}
 =\frac{(k_i-k_j)^\top b}{\sqrt{d_h}},
\label{eq:query-ratio-derivative}
\end{equation}
where $b\in\R^{d_h}$ is the query-change direction. This follows by subtracting the two scores, since their common normalizer cancels. An update at an existing token can change queries, keys, and values without inserting tokens, and Equation~\ref{eq:query-ratio-derivative} isolates the query response that added keys lack at a fixed layer.

\section{Structural token groups and vector sharing}
\label{app:geometry}

Write $S\in\{0,1\}^{R\times q}$ for the assignment matrix, with $S_{kr}=\ind\{a(k)=r\}$ and exactly one nonzero entry per row. The transpose of learned row $r$ for task $\tau$ is denoted by $v_{\tau,r}\in\R^d$, and the task subscript is suppressed on $W$. The independent-group configuration learns $R$ vectors, while STAVE ties the groups into the readout and context vectors as in Equation~\ref{eq:stave-injection}.

\subsection{Partition conventions and modality handling}
\label{app:partition}

The partition uses the serialized prompt and template metadata, not the answer label, model predictions, or learned attention scores. Counts refer to decoder-input tokens after image expansion. Demo spans include their inputs, answers, and demo-specific formatting. They are excluded before any span is assigned, with the \grp{First} token as the only exception. BOS tokens, padding, supervised answer tokens, and generated answer tokens are also excluded.

\textbf{Endpoints and precedence.}
\grp{Last} is the final prompt token, the last attended token before the answer. \grp{First} is the first attended non-BOS token of the rendered prompt. When the template has an instruction prefix, as in VizWiz and VQAv2, this token lies in the shared prefix in both renderings and no demo token is updated. When the template has no prefix, it opens the query in the demo-free rendering and the first demo in the source rendering, and that single token receives the context vector while every other demo token is unchanged. On prompts that begin with an image, it is the opening delimiter of that image. The caption experiments instead use the first query-text token in both renderings. Tokens are assigned in the order \grp{Last}, \grp{First}, \grp{Answer-cue}, \grp{Image-gate}, \grp{Query-image}, \grp{Question-text}, and \grp{Prefix-text}, and the first matching label is kept. The same precedence is used for both renderings.

\textbf{Spans.}
\grp{Answer-cue} consists of the template tokens that introduce the answer, such as ``Answer:'', except the final prompt token, which forms \grp{Last}. \grp{Image-gate} holds explicit delimiters around the query image, not its expanded visual embeddings, which form \grp{Query-image}. \grp{Question-text} is the remaining text of the query, such as the question in VQA. \grp{Prefix-text} is the template's instruction span outside demos. Tokens not identified by the fixed metadata remain unchanged. A group with no eligible tokens has count zero and contributes no update.

\textbf{Modality handling.}
Text prompts use the frozen input-embedding layer. In LMMs, the frozen visual encoder and connector supply image embeddings at the hidden dimension of the decoder. STAVE operates on the merged decoder-input sequence, not on image pixels, visual encoder activations, or unexpanded placeholder tokens. Image groups are absent in text-only prompts, and \grp{Image-gate} can also be absent in a multimodal template. Both paired renderings retain the query and its modality. These conventions specify an unambiguous operator.

\subsection{Normalized geometry and gradients}
\label{app:geometry-proof}

For prompt $p$, let $\mathcal I(p)=\{k:n_k(p)>0\}$ and $\mathcal J(p)=\{r:N_r(p)>0\}$ denote the active groups and rows. Define $G_{tk}(p)=\ind\{s_t(p)=k\}/\sqrt{n_k(p)}$ for $k\in\mathcal I(p)$, with zero columns otherwise. Let $\Lambda(p)$ be diagonal, with
\begin{equation}
 \Lambda_{kk}(p)=
 \begin{cases}
 \sqrt{n_k(p)/N_{a(k)}(p)},&n_k(p)>0,\\
 0,&n_k(p)=0.
 \end{cases}
\label{eq:count-weights}
\end{equation}
An active group always has $N_{a(k)}(p)>0$, so every division is well-defined.

\begin{proposition}[Normalized geometry under vector sharing]
\label{prop:geometry}
The injection operator satisfies
\begin{align}
 H(p)&=G(p)\Lambda(p)S,\qquad
 G(p)^\top G(p)=\diag\big(\ind\{k\in\mathcal I(p)\}\big),\nonumber\\
 H(p)^\top H(p)&=\diag\big(\ind\{r\in\mathcal J(p)\}\big),\qquad
 \|H(p)W\|_F^2=\sum_{r\in\mathcal J(p)}\|v_{\tau,r}\|_2^2\leq\|W\|_F^2.
\label{eq:stave-isometry}
\end{align}
Moreover, $H(p)H(p)^\top$ is the orthogonal projector onto corrections constant on each active row's token set and zero elsewhere.
\end{proposition}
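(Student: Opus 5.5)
The plan is to verify the factorization entrywise and then obtain the Gram identities from the disjointness of token supports. For the factorization $H(p)=G(p)\Lambda(p)S$, I compute the $(t,r)$ entry of the right side: $\sum_k G_{tk}\Lambda_{kk}S_{kr}$. Only $k=s_t(p)$ can contribute, since $G_{tk}$ vanishes otherwise. If $s_t(p)=0$ or the group is inactive, the entry is zero, matching $H_{tr}=0$ because $a(0)=0$ and excluded tokens receive nothing. If $k=s_t(p)\in\mathcal I(p)$, the entry is $\frac{1}{\sqrt{n_k}}\sqrt{n_k/N_{a(k)}}\,\ind\{a(k)=r\}=\ind\{a(s_t)=r\}/\sqrt{N_r}$. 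This is exactly Equation~\ref{eq:stave-injection}, and $N_r>0$ holds automatically because group $k$ is active.

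The Gram identities follow from the disjointness of supports. For $G^\top G$, the columns of $G$ for distinct groups have disjoint supports, so the off-diagonal entries vanish. An active column has $n_k$ entries equal to $1/\sqrt{n_k}$, so its squared norm is $1$. For $H^\top H$, I could argue directly in the same way: the columns for distinct rows $r$ are supported on the disjoint sets $\{t:a(s_t)=r\}$. An active column has $N_r$ entries equal to $1/\sqrt{N_r}$, and an inactive column is zero. Alternatively, I could write $S^\top\Lambda G^\top G\Lambda S=S^\top\Lambda^2 S$ and use $\sum_{k:a(k)=r}n_k/N_r=\ind\{r\in\mathcal J\}$, which also checks consistency. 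The norm identity then follows:
\[
\|HW\|_F^2=\operatorname{tr}(W^\top H^\top HW)=\sum_{r\in\mathcal J}\|v_{\tau,r}\|_2^2\le\|W\|_F^2 .
\]

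For the projector claim, $H^\top H=D$ is a diagonal $0/1$ matrix, so $P=HH^\top$ satisfies $P^\top=P$ and $P^2=HDH^\top=HH^\top$, because $HD=H$: inactive columns of $H$ are zero. It remains to identify the range of $P$. Its range equals the column space of $H$, which is spanned by the indicator vectors of the active rows' token sets. Acting on corrections in $\R^{T\times d}$ from the left, this range consists of matrices whose rows are constant on each active token set and zero elsewhere.

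No step is a real obstacle. The only care needed is bookkeeping at the zero cases: empty groups, $a(0)=0$, and inactive rows. I will handle these by stating explicitly at the outset that every active group lies in an active row, so every division is well-defined.
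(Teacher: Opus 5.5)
Your proposal is correct and follows essentially the same route as the paper: an entrywise check that the scales multiply to $N_r^{-1/2}$, disjoint supports with unit squared norms for the active columns of $G$ and $H$, the trace identity for $\|HW\|_F^2$, and identification of $HH^\top$ with the projector onto the span of the active columns. Your symmetric-and-idempotent verification of the projector is a more explicit version of the paper's remark that the outer products of the orthonormal active columns sum to the orthogonal projector onto their span.
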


\begin{proof}
For an active group $k$ assigned to row $r$, the corresponding product of scales is $n_k^{-1/2}\sqrt{n_k/N_r}=N_r^{-1/2}$. Excluded tokens and inactive groups contribute zero. This proves $H=G\Lambda S$. Active columns of $G$ have disjoint supports and squared norm $n_k/n_k=1$, and the same argument gives squared norm $N_r/N_r=1$ for active columns of $H$. Taking the trace of $W^\top H^\top HW$ proves the norm identity. The orthonormal active columns of $H$ span exactly the stated correction space, so their outer products sum to its orthogonal projector.
\end{proof}

\textbf{Row gradients.}
For any differentiable loss $F$ on $\widetilde E=E+HW$, the chain rule gives
\begin{equation}
 \nabla_WF=H^\top\nabla_{\widetilde E}F,\qquad
 \nabla_{v_{\tau,r}}F=\frac{1}{\sqrt{N_r}}
       \sum_{t:a(s_t)=r}\nabla_{\widetilde E_t}F\quad(N_r>0).
\label{eq:group-gradient}
\end{equation}
The row gradient is zero when $N_r=0$, and $\|\nabla_WF\|_F\leq\|\nabla_{\widetilde E}F\|_F$. Normalization controls replication-induced energy, not gradient invariance: if all token gradients in a row equal $g$, its gradient is $\sqrt{N_r}g$.

\subsection{First-order cost of sharing}
\label{app:gradient-geometry}

Fix one prompt and a differentiable loss $F$, evaluated at zero update. Let $g_t=\nabla_{\widetilde E_t}F\in\R^d$, and fix $\rho>0$. For a nonempty eligible set $\mathcal S$, define $\Gamma(\mathcal S)$ as in Equation~\ref{eq:sharing-bound}.

\begin{proposition}[First-order cost of sharing]
\label{prop:sharing-cost}
\textbf{(i)} The largest first-order decrease from an update $w/\sqrt{|\mathcal S|}$ at every token in $\mathcal S$, with $\|w\|_2\leq\rho$, is $\Gamma(\mathcal S)$. The right side of Equation~\ref{eq:sharing-bound} is the optimum for free per-token updates of total Frobenius norm at most $\rho$. Equality holds exactly when all $g_t$ in $\mathcal S$ are equal.

\textbf{(ii)} For disjoint nonempty $\mathcal A,\mathcal B$, two separately normalized vectors with combined squared norm at most $\rho^2$ attain
\[
 \Gamma_{\mathrm{split}}=\sqrt{\Gamma(\mathcal A)^2+\Gamma(\mathcal B)^2}
 \geq\max\{\Gamma(\mathcal A\cup\mathcal B),\Gamma(\mathcal A),\Gamma(\mathcal B)\}.
\]
If $g_t=0$ on $\mathcal B$, then $\Gamma(\mathcal A\cup\mathcal B)=\sqrt{|\mathcal A|/|\mathcal A\cup\mathcal B|}\,\Gamma(\mathcal A)$.

\textbf{(iii)} Let $\cos\theta$ be the cosine between $\sum_{t\in\mathcal A}g_t$ and $\sum_{t\in\mathcal B}g_t$, set to one if either sum is zero. Then
\[
 \Gamma_{\mathrm{split}}^2-\Gamma(\mathcal A\cup\mathcal B)^2
 =\frac{\big(\sqrt{|\mathcal B|}\,\Gamma(\mathcal A)-\sqrt{|\mathcal A|}\,\Gamma(\mathcal B)\big)^2+2\sqrt{|\mathcal A||\mathcal B|}\,\Gamma(\mathcal A)\Gamma(\mathcal B)(1-\cos\theta)}{|\mathcal A|+|\mathcal B|}.
\]
The first term, a norm gap, vanishes exactly when the mean gradients of $\mathcal A$ and $\mathcal B$ have equal norm. The second term, a direction gap, grows as $\cos\theta$ falls for fixed $\Gamma(\mathcal A)$ and $\Gamma(\mathcal B)$.
\end{proposition}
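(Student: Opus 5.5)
Write $G_{\mathcal S}=\sum_{t\in\mathcal S}g_t$. Every part reduces to first-order expansion plus Cauchy--Schwarz. I will handle the three parts in order.

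\textbf{Part (i).} A shared update $w/\sqrt{|\mathcal S|}$ at each token of $\mathcal S$ changes the loss to first order by $\langle w,G_{\mathcal S}\rangle/\sqrt{|\mathcal S|}$. Minimizing this over $\|w\|_2\le\rho$ gives $w=-\rho G_{\mathcal S}/\|G_{\mathcal S}\|_2$, or any $w$ when $G_{\mathcal S}=0$, and the decrease is $\Gamma(\mathcal S)$. For free per-token updates $u_t$ with $\sum_t\|u_t\|_2^2\le\rho^2$, the decrease is $-\sum_t\langle u_t,g_t\rangle$. Cauchy--Schwarz on the stacked vector shows its maximum is $\rho(\sum_t\|g_t\|_2^2)^{1/2}$, attained at $u_t\propto -g_t$.

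For the inequality in Equation~\ref{eq:sharing-bound}, apply $\|\sum_t g_t\|_2^2\le|\mathcal S|\sum_t\|g_t\|_2^2$, which is Cauchy--Schwarz between $(1,\dots,1)$ and the stacked $g_t$. Equality holds iff the stacked $(g_t)$ is proportional to the all-ones pattern, that is, iff all $g_t$ are equal. The all-zero case is included.

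\textbf{Part (ii).} Give the two vectors budgets $\rho_A,\rho_B$ with $\rho_A^2+\rho_B^2\le\rho^2$. Since the supports are disjoint, first-order decreases add. The combined decrease is therefore $(\rho_A\Gamma(\mathcal A)+\rho_B\Gamma(\mathcal B))/\rho$. Maximizing over the circle by Cauchy--Schwarz gives $\sqrt{\Gamma(\mathcal A)^2+\Gamma(\mathcal B)^2}$.

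This clearly dominates $\Gamma(\mathcal A)$ and $\Gamma(\mathcal B)$. Domination of $\Gamma(\mathcal A\cup\mathcal B)$ follows because the single shared vector on $\mathcal A\cup\mathcal B$ is the feasible point where both vectors are equal up to scaling. Alternatively, it follows from the identity in (iii), whose right side is nonnegative. If $g_t=0$ on $\mathcal B$, then $G_{\mathcal A\cup\mathcal B}=G_{\mathcal A}$, and substituting into the definition of $\Gamma$ gives the stated factor $\sqrt{|\mathcal A|/|\mathcal A\cup\mathcal B|}$.

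\textbf{Part (iii).} Let $a=|\mathcal A|$ and $b=|\mathcal B|$. Note $\|G_{\mathcal A}\|_2=\sqrt a\,\Gamma(\mathcal A)/\rho$, and similarly for $\mathcal B$. Expanding gives
\[
\Gamma(\mathcal A\cup\mathcal B)^2=\frac{\rho^2\|G_{\mathcal A}+G_{\mathcal B}\|_2^2}{a+b}=\frac{a\Gamma(\mathcal A)^2+b\Gamma(\mathcal B)^2+2\sqrt{ab}\,\Gamma(\mathcal A)\Gamma(\mathcal B)\cos\theta}{a+b}.
\]
The convention $\cos\theta=1$ when a sum vanishes is harmless, because the cross term is then zero anyway.

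Next write $\Gamma_{\mathrm{split}}^2=\bigl((a+b)\Gamma(\mathcal A)^2+(a+b)\Gamma(\mathcal B)^2\bigr)/(a+b)$ and subtract. The numerator becomes $b\Gamma(\mathcal A)^2+a\Gamma(\mathcal B)^2-2\sqrt{ab}\,\Gamma(\mathcal A)\Gamma(\mathcal B)\cos\theta$. Adding and subtracting $2\sqrt{ab}\,\Gamma(\mathcal A)\Gamma(\mathcal B)$ turns this into the stated square plus the term in $(1-\cos\theta)$.

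The square vanishes iff $\Gamma(\mathcal A)/\sqrt a=\Gamma(\mathcal B)/\sqrt b$, i.e.\ $\|G_{\mathcal A}\|/a=\|G_{\mathcal B}\|/b$. This is exactly the condition that the mean gradients have equal norm. Monotonicity in $\cos\theta$ is immediate from the formula.

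The only delicate step is bookkeeping: matching $\Gamma$ to mean-gradient norms and handling the zero-sum conventions. Everything else is algebra.
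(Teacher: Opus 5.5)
Your proposal is correct and follows the paper's proof essentially step for step: the linear functional minimized on the $\rho$-ball for (i), the embedding of the one-vector update as $w_{\mathcal A}=\sqrt{|\mathcal A|/|\mathcal A\cup\mathcal B|}\,w$ and $w_{\mathcal B}=\sqrt{|\mathcal B|/|\mathcal A\cup\mathcal B|}\,w$ for (ii), and expanding $\|G_{\mathcal A}+G_{\mathcal B}\|_2^2$ and completing the square for (iii); in (ii) you should state that this pair has total norm $\|w\|_2$, which is why it is feasible. The only difference is cosmetic: you obtain the equality case in (i) from Cauchy--Schwarz, whereas the paper uses the identity $\sum_{t}\|g_t\|_2^2-|\mathcal S|^{-1}\|\sum_t g_t\|_2^2=\sum_t\|g_t-\overline g\|_2^2$, which additionally gives the size of the gap.
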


\begin{proof}
The shared update has directional derivative $\langle w,\sum_{t\in\mathcal S}g_t\rangle/\sqrt{|\mathcal S|}$. Minimizing this linear functional on the radius-$\rho$ ball yields $-\Gamma(\mathcal S)$. Applying the same argument to the stacked free token updates gives the right side of Equation~\ref{eq:sharing-bound}. With $\overline g=|\mathcal S|^{-1}\sum_{t\in\mathcal S}g_t$, the identity
\[
 \sum_{t\in\mathcal S}\|g_t\|_2^2-\frac{1}{|\mathcal S|}\left\|\sum_{t\in\mathcal S}g_t\right\|_2^2
 =\sum_{t\in\mathcal S}\|g_t-\overline g\|_2^2
\]
proves the inequality and its equality condition.

For two vectors, the stacked gradient has norm $\sqrt{\Gamma(\mathcal A)^2+\Gamma(\mathcal B)^2}/\rho$, giving the stated optimum. A one-vector update $w$ on $\mathcal S=\mathcal A\cup\mathcal B$ is realized by $w_{\mathcal A}=\sqrt{|\mathcal A|/|\mathcal S|}\,w$ and $w_{\mathcal B}=\sqrt{|\mathcal B|/|\mathcal S|}\,w$, with the same total norm. Thus splitting cannot reduce the optimum for this fixed prompt. Setting either vector to zero yields the other two lower bounds. The zero-gradient statement follows because $\mathcal B$ changes only the normalization factor, not the gradient sum.

For (iii), write $G_{\mathcal A}$ and $G_{\mathcal B}$ for the two gradient sums, so that $\|G_{\mathcal A}\|_2=\sqrt{|\mathcal A|}\,\Gamma(\mathcal A)/\rho$, $\|G_{\mathcal B}\|_2=\sqrt{|\mathcal B|}\,\Gamma(\mathcal B)/\rho$, and $\langle G_{\mathcal A},G_{\mathcal B}\rangle=\|G_{\mathcal A}\|_2\|G_{\mathcal B}\|_2\cos\theta$. Expanding $\Gamma(\mathcal A\cup\mathcal B)^2=\rho^2\|G_{\mathcal A}+G_{\mathcal B}\|_2^2/(|\mathcal A|+|\mathcal B|)$ gives
\[
 (|\mathcal A|+|\mathcal B|)\,\Gamma(\mathcal A\cup\mathcal B)^2=|\mathcal A|\,\Gamma(\mathcal A)^2+|\mathcal B|\,\Gamma(\mathcal B)^2+2\sqrt{|\mathcal A||\mathcal B|}\,\Gamma(\mathcal A)\Gamma(\mathcal B)\cos\theta.
\]
Subtracting this from $(|\mathcal A|+|\mathcal B|)\,\Gamma_{\mathrm{split}}^2$ and completing the square proves the identity. The first term is zero exactly when $\|G_{\mathcal A}\|_2/|\mathcal A|=\|G_{\mathcal B}\|_2/|\mathcal B|$.
\end{proof}

The dilution statement is exact under its zero-gradient hypothesis for a fixed prompt. Appendix~\ref{app:allocation} measures it across image resolutions.

\subsection{Sharing cost before training}
\label{app:motivation-chain}

Figure~\ref{fig:motivation-chain} measures the quantities of Proposition~\ref{prop:sharing-cost} in the frozen Idefics2 backbone of Table~\ref{tab:vqa+finegrained} on Flowers. No vector is trained and no optimizer is involved. The measurement uses 512 demo-free samples from the training pool of that table, 256 to fit the update directions and 256 others to measure the loss. For each sample, $g_t$ is the zero-update gradient of the answer CE of Equation~\ref{eq:stave-ce} with respect to input embedding $t$. A prompt holds one token in each of \grp{Last}, \grp{Answer-cue}, \grp{First}, and \grp{Image-gate}, 64 \grp{Query-image} tokens, and 37.5 \grp{Question-text} tokens on average, and no \grp{Prefix-text}. The readout set $\mathcal A$ is \grp{Last} and \grp{Answer-cue}, and the context set $\mathcal B$ holds the other eligible tokens.

\textbf{Update and gradient shares.}
One vector shared by all eligible tokens adds $w/\sqrt{|\mathcal S|}$ to each of them, so the share of $\|w\|_2^2$ that a group receives equals its share of the tokens. For the gradient, let $\hat g_k$ average $n_k^{-1/2}\sum_{t:s_t=k}g_t$ over all 512 samples. By Equation~\ref{eq:group-gradient}, $\hat g_k$ is the mean gradient of an independent vector for group $k$, and $\rho\|\hat g_k\|_2$ is its value of $\Gamma$. Panel (a) reports each group's share of $\sum_k\|\hat g_k\|_2^2$. The readout set receives 1.9\% of the shared update and holds 76\% of the gradient. \grp{Query-image} receives 61\% of the update and holds 0.002\% of the gradient, so it comes close to the zero-gradient case of Proposition~\ref{prop:sharing-cost}(ii): it enlarges the normalization count without adding to the gradient sum. On the held-out samples, the first-order loss decrease per unit update norm is 12.5 along the direction of one shared vector and 27.1 along that of the readout vector alone.

\textbf{Gradient per token.}
Panel (b) reports, for each token set $\mathcal S$, the norm of $|\mathcal S|^{-1}\sum_{t\in\mathcal S}g_t$ after this vector is averaged over the 512 samples. The readout set reaches 19.2, 3.5 times \grp{First}, the largest context group at 5.5, and \grp{Query-image} reaches 0.02. The readout value is below that of \grp{Answer-cue} (30.2) and of \grp{Last} (22.4) alone, because the mean gradients of these two tokens are nearly orthogonal, so their average is shorter than either of them.

\textbf{Adding context.}
Panel (c) uses the unit descent directions $u_{\mathcal A}$ and $u_{\mathcal B}$ of the readout and context vectors, each the negative mean zero-update gradient of its vector over the fitting samples, scaled to unit norm. On the held-out samples, the readout vector is set to $a\,u_{\mathcal A}$ and the context vector to $c\,u_{\mathcal B}$, and the panel reports the mean answer CE in nats. The zero update gives 3.05. With $c=0$, the loss falls to 1.12 at $a=0.2$ and 1.07 at $a=0.3$. Each context norm $c\in\{0.1,0.2,0.3\}$ lowers the whole curve, and $c=0.3$ reaches 0.55 at $a=0.2$. The two directions have cosine similarity 0.03, so one vector on $\mathcal A\cup\mathcal B$ cannot point along both. All directions come from zero-update gradients and all norms stay at most 0.3, so the figure describes the neighborhood of the zero update.

\subsection{Proof of Theorem~\ref{thm:tying}}
\label{app:tying}

\textbf{Tied coordinates.}
Fix the common group counts used in the theorem, and let $w\in\R^{qd}$ stack the transposed rows of $W$. The normalized independent-group coordinates of its tied update are
\begin{equation}
 v=Q_qw,\qquad Q_q=(\Lambda S)\otimes I_d,\qquad
 \mathcal U_q=\col(Q_q),\qquad P_q=Q_qQ_q^\top.
\label{eq:tying-projector}
\end{equation}
Here $\otimes$ is the Kronecker product and $I_d$ is the identity on $\R^d$. The active columns of $\Lambda S$ are orthonormal, by summing $n_k/N_r$ over groups assigned to row $r$, and inactive columns are zero. Hence $P_q$ is an orthogonal projector, even with empty rows. For each $v\in\mathcal U_q$, $w=Q_q^\top v$ realizes $v$ with inactive coordinates zero and $\|w\|_2=\|v\|_2$. Inactive full-group coordinates have no effect on the margins and can also be set to zero.

\textbf{Interpreting $\delta_q$.}
To interpret $\delta_q$, let $g_{jk}=\nabla_{v_k}h_j(0)/\sqrt{n_k}$ for active group $k$, where $v_k\in\R^d$ is its independent vector. This is the mean gradient with respect to that group's input embeddings. For active row $r$, put $\overline g_{jr}=\sum_{k:a(k)=r,\,n_k>0}(n_k/N_r)g_{jk}$. Projection onto $\mathcal U_q$ replaces each gradient block by $\sqrt{n_k}\,\overline g_{jr}$, so
\begin{equation}
 \delta_q^2=\max_j\sum_{r:N_r>0}\sum_{k:a(k)=r,\,n_k>0}
          n_k\|g_{jk}-\overline g_{jr}\|_2^2.
\label{eq:contrast-variance}
\end{equation}
Thus readout/context sharing discards only within-row group contrasts. This identity does not assume that their gradients agree.

\begin{proof}[Proof of Theorem~\ref{thm:tying}]
The tied feasible set is contained in the unrestricted group-coordinate ball, so $m_q(\rho)\leq m_R(\rho)$. Continuity and compactness give an unrestricted maximizer $v^\star$. Its projection $\overline v=P_qv^\star$ is tied-feasible and has norm at most $\rho$.

Gradient Lipschitzness along every segment from zero to a point in the ball gives
\begin{equation}
 h_j(v)=h_j(0)+\nabla h_j(0)^\top v+r_j(v),\qquad
 |r_j(v)|\leq\frac{\kappa}{2}\|v\|_2^2.
\label{eq:margin-taylor}
\end{equation}
Indeed, the remainder is the integral of $\langle\nabla h_j(tv)-\nabla h_j(0),v\rangle$ over $t\in[0,1]$. Since $v^\star-\overline v$ is orthogonal to $\mathcal U_q$,
\begin{align*}
 h_j(\overline v)
 &\geq h_j(v^\star)
 -\|(I-P_q)\nabla h_j(0)\|_2\|v^\star-\overline v\|_2
 -\frac{\kappa}{2}\big(\|v^\star\|_2^2+\|\overline v\|_2^2\big)\\
 &\geq h_j(v^\star)-\rho\delta_q-\kappa\rho^2.
\end{align*}
Taking the minimum over the finite margin collection and using feasibility of $\overline v$ proves the upper bound. Affine margins admit $\kappa=0$, which gives equality of attainable margins when $\delta_q=0$.
\end{proof}

\textbf{From margins to cross-entropy.}
For any desired margin $\gamma\in\R$, an unrestricted update attaining every margin at least $\gamma+\rho\delta_q+\kappa\rho^2$ implies the existence of a tied update attaining every margin at least $\gamma$ on these contexts. If the collection includes every competing token for each supervised prefix in a vocabulary of size $C\geq2$, the tied logits $z\in\R^C$ at each prefix satisfy $z_a-z_c\geq\gamma$ for every $c\ne a$, where $a$ is the correct token. Thus
\begin{equation}
 -\log\softmax(z)_a
 =\log\left(1+\sum_{c\ne a}e^{z_c-z_a}\right)
 \leq c_\gamma:=\log\big(1+(C-1)e^{-\gamma}\big).
\label{eq:margin-ce}
\end{equation}
Averaging preserves this CE bound.

\textbf{Varying counts.}
The theorem requires a single $Q_q$ for all its contexts. Equal group counts suffice, and so do common normalized within-row proportions. With varying proportions, the shared row $w$ induces different full-group vectors $Q_q(p)w$. For example, with two groups, one scalar row, and count profiles $(1,1)$ and $(1,4)$, these vectors are $(w/\sqrt{2},w/\sqrt{2})$ and $(w/\sqrt{5},2w/\sqrt{5})$. No single full-group vector realizes both corrections for nonzero $w$. Conversely, the full-group vector $(1,0)$ is not tied on the first profile. Thus the two shared embedding-correction families need not be nested across varying-count prompts, and the theorem applies to each count profile separately.

\section{Paired supervision and demo-free risk}
\label{app:paired}

Equation~\ref{eq:stave-objective} is a sum of answer-supervised source and target CE losses. It contains no hidden-state matching or soft-label objective. Throughout the whole-sample results below, $n$ denotes the number of training query--answer pairs, not the optimizer's minibatch size. The source renderings are fixed for these comparisons. The same deterministic arguments apply to a source loss averaged over a prescribed demo-sampling rule whenever that averaged loss satisfies the stated continuity or differentiability assumptions.

\subsection{Proof of Proposition~\ref{prop:paired-descent}}
\label{app:target-descent}

\begin{proof}
Let $w^+=w-\eta g$. Integrating the target gradient along the segment gives
\begin{align*}
 \mathcal L_{\mathrm{tgt}}(w^+)-\mathcal L_{\mathrm{tgt}}(w)
 &=-\eta\int_0^1\langle\nabla\mathcal L_{\mathrm{tgt}}(w-t\eta g),g\rangle\,dt\\
 &\leq-\eta\langle g_{\mathrm{tgt}},g\rangle
              +\frac{\beta\eta^2}{2}\|g\|_2^2,
\end{align*}
which proves Equation~\ref{eq:target-descent}.
\end{proof}

\textbf{Finite-step conditions.}
Write $a=\langle g_{\mathrm{tgt}},g\rangle=\|g_{\mathrm{tgt}}\|_2^2+\langle g_{\mathrm{tgt}},g_{\mathrm{src}}\rangle$. If $a>0$, then $g\ne0$. For $\beta>0$, a step satisfying $0<\eta<2a/(\beta\|g\|_2^2)$ and the segment smoothness assumption strictly lowers target loss. For $\beta=0$, every positive step satisfying that assumption does so.

\textbf{Comparison with a target-only step.}
For comparison with a target-only step, additionally assume the same gradient Lipschitz bound on the segment from $w$ to $w-\eta g_{\mathrm{tgt}}$. Applying the integral remainder bound in the other direction yields
\[
 \mathcal L_{\mathrm{tgt}}(w-\eta g_{\mathrm{tgt}})-\mathcal L_{\mathrm{tgt}}(w)
 \geq-\eta\|g_{\mathrm{tgt}}\|_2^2-\frac{\beta\eta^2}{2}\|g_{\mathrm{tgt}}\|_2^2.
\]
Subtracting this from the paired-step upper bound gives
\begin{equation}
 \begin{split}
 \mathcal L_{\mathrm{tgt}}(w-\eta g)-\mathcal L_{\mathrm{tgt}}(w-\eta g_{\mathrm{tgt}})
 \leq{}&-\eta\langle g_{\mathrm{tgt}},g_{\mathrm{src}}\rangle\\
 &+\frac{\beta\eta^2}{2}\big(\|g\|_2^2+\|g_{\mathrm{tgt}}\|_2^2\big).
 \end{split}
\label{eq:paired-versus-target-step}
\end{equation}
If $c=\langle g_{\mathrm{tgt}},g_{\mathrm{src}}\rangle>0$ and $\beta>0$, the paired step is strictly better at the same step size whenever $0<\eta<2c/[\beta(\|g\|_2^2+\|g_{\mathrm{tgt}}\|_2^2)]$ and both segment assumptions hold. For $\beta=0$, positivity of $c$ suffices on those segments. These comparisons concern plain gradient steps from the same parameter value.

\subsection{Compatibility of the empirical objectives}

Fix $\rho>0$ and the comparison class $\mathcal W_{q,\rho}=\{W\in\R^{q\times d}:\|W\|_F\leq\rho\}$. Assume both whole-sample branch losses are continuous on it. Define
\begin{equation}
 \mathcal L_b^\star=\min_{W\in\mathcal W_{q,\rho}}\mathcal L_b(W),\qquad
 \mathcal C_{\mathrm{src}\mid\mathrm{tgt}}
 =\min_{W\in\argmin_{U\in\mathcal W_{q,\rho}}\mathcal L_{\mathrm{tgt}}(U)}
          \mathcal L_{\mathrm{src}}(W)-\mathcal L_{\mathrm{src}}^\star.
\label{eq:compatibility-gap}
\end{equation}
The nonnegative gap $\mathcal C_{\mathrm{src}\mid\mathrm{tgt}}$ is the source-loss sacrifice required to remain target-optimal. The ball and the target minimizer set are nonempty compact sets, so all displayed minima exist.

\begin{proposition}[Target cost of incompatible supervision]
\label{prop:compatibility-bound}
If $\widehat W\in\mathcal W_{q,\rho}$ satisfies
$\mathcal L(\widehat W)\leq\min_{W\in\mathcal W_{q,\rho}}\mathcal L(W)+\varepsilon_{\mathrm{opt}}$
for $\varepsilon_{\mathrm{opt}}\geq0$, then
\begin{equation}
 0\leq\mathcal L_{\mathrm{tgt}}(\widehat W)-\mathcal L_{\mathrm{tgt}}^\star
 \leq\mathcal C_{\mathrm{src}\mid\mathrm{tgt}}+\varepsilon_{\mathrm{opt}}.
\label{eq:target-compatibility-bound}
\end{equation}
\end{proposition}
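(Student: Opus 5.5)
The plan is to compare $\widehat W$ against a single well-chosen comparator: a target minimizer that is as good as possible for the source branch. The whole argument reduces to $\mathcal L=\mathcal L_{\mathrm{src}}+\mathcal L_{\mathrm{tgt}}$ and the near-optimality hypothesis. The lower bound $0\leq\mathcal L_{\mathrm{tgt}}(\widehat W)-\mathcal L_{\mathrm{tgt}}^\star$ is immediate, because $\widehat W\in\mathcal W_{q,\rho}$ and $\mathcal L_{\mathrm{tgt}}^\star$ is the minimum of $\mathcal L_{\mathrm{tgt}}$ over that ball.

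For the upper bound, I would first fix the comparator $W^\circ$. It should attain the outer minimum in Equation~\ref{eq:compatibility-gap}, so $W^\circ\in\argmin_{\mathcal W_{q,\rho}}\mathcal L_{\mathrm{tgt}}$ and $\mathcal L_{\mathrm{src}}(W^\circ)=\mathcal L_{\mathrm{src}}^\star+\mathcal C_{\mathrm{src}\mid\mathrm{tgt}}$. It exists because the target minimizer set is nonempty and compact, and $\mathcal L_{\mathrm{src}}$ is continuous on it, as noted before the proposition.

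Next, feasibility of $W^\circ$ and near-optimality of $\widehat W$ give
\[
 \mathcal L_{\mathrm{src}}(\widehat W)+\mathcal L_{\mathrm{tgt}}(\widehat W)\leq\mathcal L_{\mathrm{src}}(W^\circ)+\mathcal L_{\mathrm{tgt}}(W^\circ)+\varepsilon_{\mathrm{opt}}
 =\mathcal L_{\mathrm{src}}^\star+\mathcal C_{\mathrm{src}\mid\mathrm{tgt}}+\mathcal L_{\mathrm{tgt}}^\star+\varepsilon_{\mathrm{opt}}.
\]

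Finally, I would use $\mathcal L_{\mathrm{src}}(\widehat W)\geq\mathcal L_{\mathrm{src}}^\star$, which holds because $\widehat W$ lies in the ball. Subtracting this from both sides leaves exactly $\mathcal L_{\mathrm{tgt}}(\widehat W)-\mathcal L_{\mathrm{tgt}}^\star\leq\mathcal C_{\mathrm{src}\mid\mathrm{tgt}}+\varepsilon_{\mathrm{opt}}$.

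There is no real obstacle here. The only point needing care is existence of the minima, which follows from compactness and continuity on $\mathcal W_{q,\rho}$. A further remark follows directly: if the two branches share a minimizer, then $\mathcal C_{\mathrm{src}\mid\mathrm{tgt}}=0$ and the bound reduces to $\varepsilon_{\mathrm{opt}}$.
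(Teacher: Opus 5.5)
Your proof is correct and matches the paper's argument: both pick a target minimizer $W^\dagger$ attaining the source minimum in the compatibility gap, compare $\widehat W$ to it through approximate paired optimality, and then discard $\mathcal L_{\mathrm{src}}(\widehat W)\geq\mathcal L_{\mathrm{src}}^\star$. The lower bound follows, as you say, from the definition of the target minimum over the ball.
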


\begin{proof}
Choose a target minimizer $W^\dagger$ attaining the source minimum in Equation~\ref{eq:compatibility-gap}. Approximate paired optimality implies
\[
 \mathcal L_{\mathrm{tgt}}(\widehat W)+\mathcal L_{\mathrm{src}}(\widehat W)
 \leq\mathcal L_{\mathrm{tgt}}^\star+\mathcal L_{\mathrm{src}}(W^\dagger)+\varepsilon_{\mathrm{opt}}.
\]
Use $\mathcal L_{\mathrm{src}}(\widehat W)\geq\mathcal L_{\mathrm{src}}^\star$ and subtract $\mathcal L_{\mathrm{tgt}}^\star$ to obtain the upper bound. The lower bound follows from the definition of the target minimum.
\end{proof}

\textbf{Shared minimizers.}
The symmetric source bound follows by exchanging branches. If the two losses share a minimizer, both compatibility gaps vanish and an approximate paired minimizer is within $\varepsilon_{\mathrm{opt}}$ of each branch optimum on the stated ball.

\textbf{A counterexample.}
Shared answer labels do not ensure compatible responses. For a scalar binary-logit example, margins $h_{\mathrm{src}}(w)=w$ and $h_{\mathrm{tgt}}(w)=-w$ give losses $\log(1+e^{-w})$ and $\log(1+e^w)$. Their gradients cancel at zero, and no $w$ gives both margins a positive value. Replacing the source margin by $2w$ makes a paired gradient step at zero increase target loss. The compatibility gap $\mathcal C_{\mathrm{src}\mid\mathrm{tgt}}$ is the quantity that separates this case from compatible supervision.

\subsection{Demo-free risk guarantees}
\label{app:generalization}

Let $n\geq1$ and let $(x_i,y_i)_{i=1}^n$ be independent, identically distributed query--answer pairs from the deployment distribution of one task. The frozen model, target rendering, structural partition, row assignment, and radius $\rho>0$ are fixed independently of this sample. Assume the losses are measurable in the example, and define
\begin{equation}
 \mathcal R_{\mathrm{tgt}}(W)=\E_{(x,y)}\ell(y\mid\operatorname{render}(\varnothing,x),W).
\label{eq:target-risk}
\end{equation}
The empirical mean is $\mathcal L_{\mathrm{tgt}}$ in Equation~\ref{eq:stave-objective}. Source demos may be reused or chosen using the training sample: the concentration argument uses only the $n$ independent target examples. Paired renderings, answer tokens, and repeated minibatches do not increase this sample count.

\begin{assumption}[Bounded and Lipschitz target losses]
\label{ass:bounded-class}
There are constants $0<M_\ell<\infty$ and $0\leq G_\ell<\infty$ such that, for every target example in the distribution's support and all $W,W'\in\mathcal W_{q,\rho}$, its loss lies in $[0,M_\ell]$ and
\begin{equation}
 |\ell(y\mid p,W)-\ell(y\mid p,W')|
 \leq G_\ell\|H(p)(W-W')\|_F,\qquad p=\operatorname{render}(\varnothing,x).
\label{eq:embedding-lipschitz}
\end{equation}
\end{assumption}

These conditions are uniform over examples and the comparison class. Uniform bounds on logits and input derivatives over the ball supply such constants. The radius is an analysis device, not an extra penalty, clipping rule, or projection in STAVE.

\begin{theorem}[Uniform target-risk deviation]
\label{thm:uniform-target}
Under Assumption~\ref{ass:bounded-class}, for fixed $\epsilon\in(0,\rho]$ and $\delta\in(0,1)$, with probability at least $1-\delta$,
\begin{equation}
 \sup_{W\in\mathcal W_{q,\rho}}|\mathcal R_{\mathrm{tgt}}(W)-\mathcal L_{\mathrm{tgt}}(W)|
 \leq\mathfrak B_n(\epsilon,\delta)
 :=2G_\ell\epsilon+M_\ell\sqrt{\frac{qd\log(1+2\rho/\epsilon)+\log(2/\delta)}{2n}}.
\label{eq:uniform-target-bound}
\end{equation}
Here $\epsilon$ is the covering radius, $\delta$ the failure probability, and $\mathfrak B_n$ the deviation bound.
\end{theorem}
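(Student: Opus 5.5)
We will prove Theorem~\ref{thm:uniform-target} with a standard $\epsilon$-net argument on the Frobenius ball $\mathcal W_{q,\rho}$, combined with Hoeffding's inequality and a union bound. The key preliminary observation is that the loss is uniformly Lipschitz in $W$. By Proposition~\ref{prop:geometry}, $\|H(p)(W-W')\|_F\leq\|W-W'\|_F$ for every prompt $p$. Assumption~\ref{ass:bounded-class} then gives $|\ell(y\mid p,W)-\ell(y\mid p,W')|\leq G_\ell\|W-W'\|_F$ uniformly over the support. Averaging over examples, the same Lipschitz bound holds for both $\mathcal L_{\mathrm{tgt}}$ and $\mathcal R_{\mathrm{tgt}}$. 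Hence $D(W)=\mathcal R_{\mathrm{tgt}}(W)-\mathcal L_{\mathrm{tgt}}(W)$ is $2G_\ell$-Lipschitz in Frobenius norm, deterministically, for every realization of the sample.

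Next we will fix an $\epsilon$-net $\mathcal N\subset\mathcal W_{q,\rho}$ in Frobenius norm. Identify $\R^{q\times d}$ with $\R^{qd}$ and apply the volumetric argument: a maximal $\epsilon$-separated subset of the ball is an $\epsilon$-net, and its disjoint balls of radius $\epsilon/2$ fit inside the ball of radius $\rho+\epsilon/2$. This gives $|\mathcal N|\leq(1+2\rho/\epsilon)^{qd}$. The net is chosen deterministically, independent of the sample, so each $W\in\mathcal N$ is a fixed parameter.

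For each fixed $W\in\mathcal N$, the losses $\ell(y_i\mid\operatorname{render}(\varnothing,x_i),W)$ are i.i.d.\ in $[0,M_\ell]$ with mean $\mathcal R_{\mathrm{tgt}}(W)$. Measurability is assumed, and the source branch plays no role. Hoeffding's inequality gives $\Pr(|D(W)|>t)\leq2\exp(-2nt^2/M_\ell^2)$. A union bound over $\mathcal N$, with $t=M_\ell\sqrt{(qd\log(1+2\rho/\epsilon)+\log(2/\delta))/(2n)}$, makes the total failure probability at most $\delta$. To finish, take any $W$ in the ball and pick $W'\in\mathcal N$ with $\|W-W'\|_F\leq\epsilon$. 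Then $|D(W)|\leq|D(W')|+2G_\ell\epsilon$, which yields $\mathfrak B_n(\epsilon,\delta)$ uniformly.

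The only delicate step is the covering count. It needs the net to lie inside the ball, or at least to be used only at points where Assumption~\ref{ass:bounded-class} applies. The maximal-separated-set construction guarantees this, so it is the construction we will use. The measurability of the supremum is not an issue, because the bound is derived pointwise from the finite net plus a deterministic Lipschitz extension.
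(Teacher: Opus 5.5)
Your proposal is correct and follows the paper's proof essentially step for step. Both use Lipschitzness via Proposition~\ref{prop:geometry}, the maximal-separated-set volumetric net of size $(1+2\rho/\epsilon)^{qd}$, Hoeffding's inequality with a union bound, and the $2G_\ell\epsilon$ Lipschitz extension. The only differences are cosmetic: the paper derives the Hoeffding tail from a log-moment-generating-function bound, and it settles measurability through continuity and a countable dense subset. You instead cite Hoeffding directly and note that the bound holds on the measurable finite-net event.
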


\begin{proof}
Proposition~\ref{prop:geometry} makes every target loss $G_\ell$-Lipschitz in $\|W-W'\|_F$. The comparison class is a Euclidean ball of dimension $qd$. A maximal set separated by more than $\epsilon$ is an $\epsilon$-net. Its disjoint open balls of radius $\epsilon/2$ lie inside a ball of radius $\rho+\epsilon/2$, so a volume comparison bounds the net size by $(1+2\rho/\epsilon)^{qd}$.

For completeness, if $X\in[0,M_\ell]$, the log moment-generating function $\psi(s)=\log\E e^{s(X-\E X)}$ has $\psi(0)=\psi'(0)=0$ and $\psi''(s)\leq M_\ell^2/4$. The second derivative is a variance under an exponentially tilted distribution, still supported on $[0,M_\ell]$, and any such variance is at most its second moment about $M_\ell/2$, hence at most $M_\ell^2/4$. Integrating twice gives $\psi(s)\leq s^2M_\ell^2/8$. Independence and the exponential Markov inequality, optimized over $s$ for each tail, therefore give at every fixed net point
\[
 \Pr\big(|\mathcal L_{\mathrm{tgt}}-\mathcal R_{\mathrm{tgt}}|>t\big)
 \leq2\exp(-2nt^2/M_\ell^2).
\]
A union bound over the net yields the square-root term in Equation~\ref{eq:uniform-target-bound}. Every point of the ball is within $\epsilon$ of a net point. The empirical and population losses each change by at most $G_\ell\epsilon$, proving the result. Continuity in $W$ permits the supremum to be taken over a countable dense subset, so the event is measurable.
\end{proof}

\begin{corollary}[Conditional target risk of paired training]
\label{thm:deployment-risk}
On the event of Theorem~\ref{thm:uniform-target}, suppose the empirical source loss is continuous on $\mathcal W_{q,\rho}$ and $\widehat W$ satisfies the radius and paired optimization condition of Proposition~\ref{prop:compatibility-bound}. Then
\begin{equation}
 \mathcal R_{\mathrm{tgt}}(\widehat W)-\inf_{W\in\mathcal W_{q,\rho}}\mathcal R_{\mathrm{tgt}}(W)
 \leq\mathcal C_{\mathrm{src}\mid\mathrm{tgt}}+\varepsilon_{\mathrm{opt}}+2\mathfrak B_n(\epsilon,\delta).
\label{eq:compatibility-risk}
\end{equation}
\end{corollary}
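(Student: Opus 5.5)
The plan is to combine Proposition~\ref{prop:compatibility-bound}, which controls excess \emph{empirical} target loss, with the uniform deviation of Theorem~\ref{thm:uniform-target}, which converts empirical target losses into population risks. Work on the event of Theorem~\ref{thm:uniform-target}. Fix an arbitrary comparator $W\in\mathcal W_{q,\rho}$. It suffices to show
\[
\mathcal R_{\mathrm{tgt}}(\widehat W)-\mathcal R_{\mathrm{tgt}}(W)\leq\mathcal C_{\mathrm{src}\mid\mathrm{tgt}}+\varepsilon_{\mathrm{opt}}+2\mathfrak B_n(\epsilon,\delta),
\]
since taking the supremum over $W$ then gives Equation~\ref{eq:compatibility-risk}.

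The chain has three links. First, because $\widehat W$ lies in the ball, the uniform bound gives $\mathcal R_{\mathrm{tgt}}(\widehat W)\leq\mathcal L_{\mathrm{tgt}}(\widehat W)+\mathfrak B_n$. Second, Proposition~\ref{prop:compatibility-bound} applies because the empirical target loss is continuous: Assumption~\ref{ass:bounded-class} together with Proposition~\ref{prop:geometry} makes it $G_\ell$-Lipschitz in $W$, and the source loss is continuous by hypothesis. It gives $\mathcal L_{\mathrm{tgt}}(\widehat W)\leq\mathcal L_{\mathrm{tgt}}^\star+\mathcal C_{\mathrm{src}\mid\mathrm{tgt}}+\varepsilon_{\mathrm{opt}}$. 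Third, $\mathcal L_{\mathrm{tgt}}^\star\leq\mathcal L_{\mathrm{tgt}}(W)\leq\mathcal R_{\mathrm{tgt}}(W)+\mathfrak B_n$, using the uniform bound once more at $W$. Chaining the three inequalities produces the displayed bound with the factor $2\mathfrak B_n$.

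There is no real obstacle. The point that needs care is that the same probability event must serve both evaluations, one at the data-dependent $\widehat W$ and one at the comparator. This holds because Theorem~\ref{thm:uniform-target} is uniform over the ball and $\widehat W$ lies in it. The infimum of the population risk need not be attained, but the argument does not require that: since $W$ was arbitrary, the bound holds for the infimum directly.
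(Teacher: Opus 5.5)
Your proof is correct and follows the same route as the paper: the uniform deviation at $\widehat W$, then Proposition~\ref{prop:compatibility-bound} for the excess empirical target loss, then the uniform deviation again to bound $\mathcal L_{\mathrm{tgt}}^\star$ by the population infimum plus $\mathfrak B_n$. Your arbitrary-comparator step spells out the paper's one-line inequality $\mathcal L_{\mathrm{tgt}}^\star\leq\inf_W\mathcal R_{\mathrm{tgt}}(W)+\mathfrak B_n$, and your continuity check for the target loss matches the Lipschitz argument the paper uses in Theorem~\ref{thm:uniform-target}.
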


\begin{proof}
Uniform deviation and Proposition~\ref{prop:compatibility-bound} give
\[
 \mathcal R_{\mathrm{tgt}}(\widehat W)
 \leq\mathcal L_{\mathrm{tgt}}^\star+\mathcal C_{\mathrm{src}\mid\mathrm{tgt}}+\varepsilon_{\mathrm{opt}}+\mathfrak B_n.
\]
The same event implies $\mathcal L_{\mathrm{tgt}}^\star\leq\inf_W\mathcal R_{\mathrm{tgt}}(W)+\mathfrak B_n$. Combining the two inequalities proves the claim.
\end{proof}

\textbf{Capacity term.}
The uniform event covers any sample-dependent choice of $\widehat W$ in the fixed ball, including one fitted using source prompts. The capacity term uses $2d$ coordinates for STAVE and $7d$ for independent group vectors, a comparison made at common radius and loss constants. No common-count assumption is needed for this statistical result. When the class is chosen from $M_{\mathrm{cfg}}$ predetermined configurations, a union bound replaces $\log(2/\delta)$ by $\log(2M_{\mathrm{cfg}}/\delta)$.

\section{Inference cost}
\label{app:inference}

\textbf{Prompt tokens and storage.}
Let $T_{\mathrm{src}}$ and $T_{\mathrm{tgt}}$ be decoder-input lengths with and without demos for the same query, including expanded visual tokens. STAVE inference uses exactly $T_{\mathrm{tgt}}$ prompt tokens. Counts and assignments can be constructed in $O(T_{\mathrm{tgt}})$ time from fixed structural metadata. Adding the scaled vectors costs $O(T_{\mathrm{tgt}}d)$ operations and can be performed in place. Storage is $2d$ task-specific scalars, independent of query count and demo length. No independent full-group matrix needs to be stored for inference.

\textbf{KV cache.}
For a decoder with $N_{\mathrm{layer}}$ layers, $h_{\mathrm{kv}}$ KV heads per layer, and head dimension $d_h$, a conventional uncompressed prompt cache stores
\begin{equation}
 \operatorname{KVElements}(T)=2N_{\mathrm{layer}}T h_{\mathrm{kv}}d_h.
\label{eq:kv-accounting}
\end{equation}
Here $T$ is the number of cached tokens and the factor two counts keys and values. For equal precision, batch size, and output length, STAVE and zero-shot inference have the same cache shape. The values in that cache differ. Relative to keeping demos, the saved prompt-cache elements are
$2N_{\mathrm{layer}}(T_{\mathrm{src}}-T_{\mathrm{tgt}})h_{\mathrm{kv}}d_h$
when the source prompt is longer. Both caches grow identically per subsequently generated token, although the number of generated tokens can differ.

\textbf{Prefill computation.}
Under conventional dense attention, the prefill attention term is $O(N_{\mathrm{layer}}T^2d)$ and the positionwise decoder terms are linear in $T$ for a fixed architecture. STAVE preserves the target prompt's decoder dimensions and adds only the embedding-update pass. A soft prompt of length $P$ instead changes the attention-size term from $T^2$ to $(T+P)^2$. A KV prefix adds attention-memory entries even when it computes no new attention queries.

\textbf{Visual preprocessing and latency.}
The visual encoder and connector process the query image as in zero-shot inference. Demo removal avoids the demo prompt tokens and, when demos would otherwise be encoded at inference, their visual preprocessing. Measured latency also depends on kernels, precision, batching, and stopping behavior, so Appendix~\ref{app:cost-measured} runs every method under one setting.

\textbf{Training cost.}
Both paired branches run the frozen model, and gradients propagate through the decoder to the input vectors. Backbone optimizer states and weight updates are absent, while activation storage and backward computation remain, and the source branch includes demos. This one-time training cost is separate from the per-query inference cost above.

\subsection{Measured inference cost}
\label{app:cost-measured}

Table~\ref{tab:cost-h200} reports measured per-query cost for the Idefics2-8B VQAv2 entries of Table~\ref{tab:vqav2-coco}. Every method loads the checkpoint that produces the score in that table and runs on the same H200 inside one job, with batch size 1, fp16 weights, three beams, and the same 100 validation samples after warmup. TTFT is one prefill call with a single new token and is the like-for-like measure of prompt cost. Because the methods emit answers of different lengths, the Fixed-20 column forces exactly 20 generated tokens for every method. TTFT and the decode column time the generation call alone, with decode taken from the 20-token generation time, while Fixed-20 is end to end and also includes the processor pass over the prompt text and images. State is the stored task state. STAVE only adds its vectors to the input embeddings before the first decoder block, so its decoder computation during prefill and decoding is that of zero-shot inference. The zero-shot and STAVE rows therefore share one value in the TTFT, decode, and Fixed-20 columns, the mean of their two measurements, which differ by less than 1\%.

STAVE keeps the prompt length, prefill FLOPs, and KV cache of zero-shot inference, at 128 tokens. MimIC, HiFICL, and LIVE run the same prompt through an attention wrapper, which multiplies TTFT by 1.03 to 1.10 relative to STAVE. LoRA keeps the prompt but adds low-rank branches in the decoder, the visual encoder, and the connector, which multiplies TTFT by 1.18 and the per-token decode cost by 1.41. ICL with 32 demos needs 60 times the TTFT, 31 times the prefill FLOPs, and 52 GiB more peak memory than STAVE.

STAVE has the lowest stored state of any trained method here and matches the TTFT, per-token decode cost, and fixed-length latency of zero-shot inference, below every other row, and Table~\ref{tab:vqav2-coco} gives it the highest accuracy on this model and dataset, so nothing in this comparison trades accuracy for speed against it.

Figures~\ref{fig:cost-shots-a} and~\ref{fig:cost-shots-b} report six cost dimensions against the demo count on Idefics2-8B VizWiz and OK-VQA and Qwen-VL-7B OK-VQA, measured on one A100 40GB with batch size 1, fp16 weights, greedy decoding, at most 10 new tokens, and a shared stop rule, as medians over 500 test queries. They use the full group set. Prompt tokens, prefill FLOPs, KV cache, TTFT, and peak activation memory depend only on the prompt and are identical for the two-vector configuration. Answer latency also depends on the generated answer length and includes the CPU pass that builds the group assignment (4 to 14 ms). Qwen-VL with 32 demos exceeds the 40 GB memory and is omitted. On all three model and dataset pairs, STAVE keeps the prompt tokens, prefill FLOPs, KV cache, and TTFT of zero-shot inference, while ICL cost grows with every added demo. At 16 demos, ICL multiplies the TTFT of STAVE by 8 to 13 and its peak activation memory by 14 to 17.

\begin{figure}[!htbp]
\centering
\includegraphics[width=0.85\linewidth]{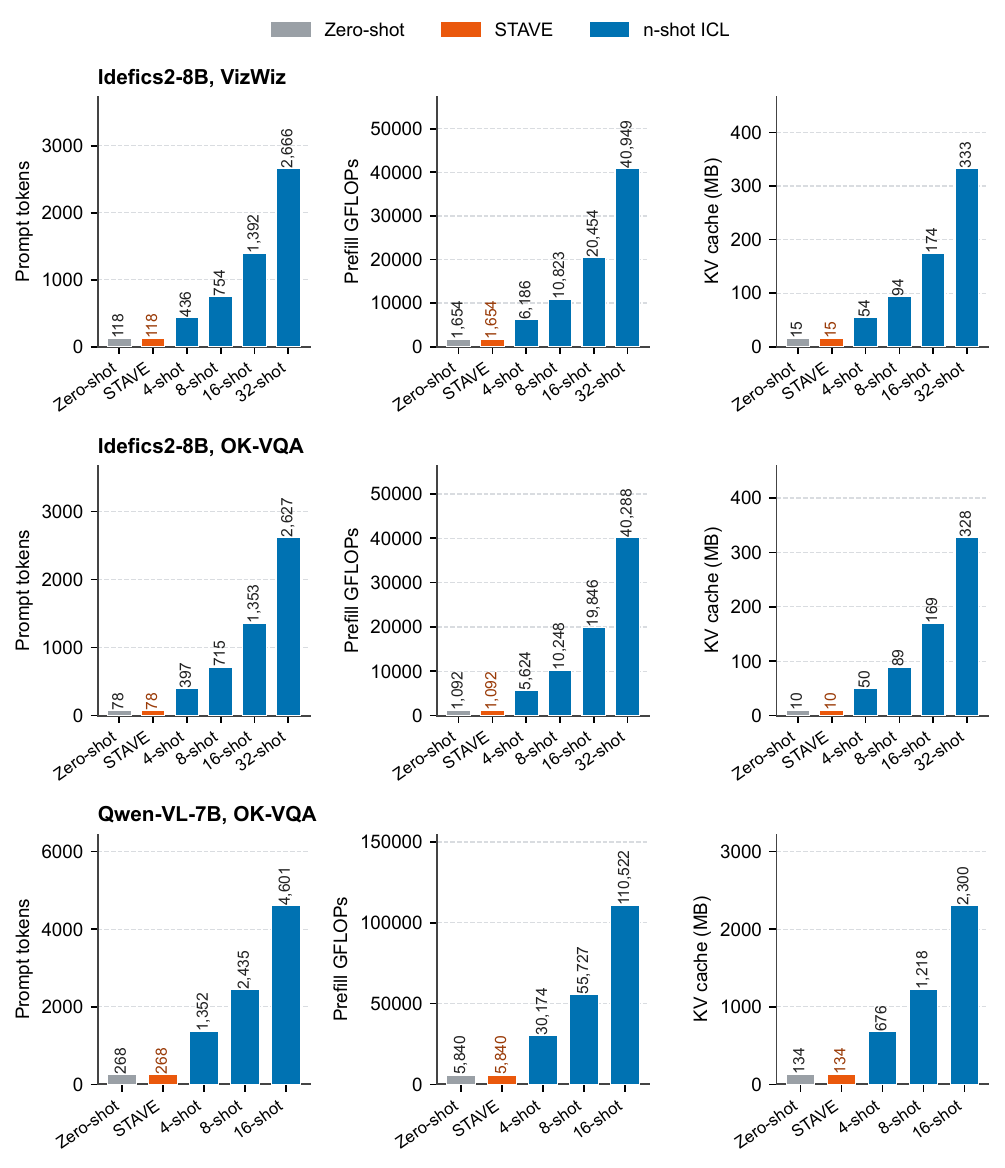}\par
\vspace{-3.2pt}
\caption{Prompt tokens, prefill FLOPs, and KV cache against the demo count.}
\label{fig:cost-shots-a}
\end{figure}

\begin{figure}[!htbp]
\centering
\includegraphics[width=0.85\linewidth]{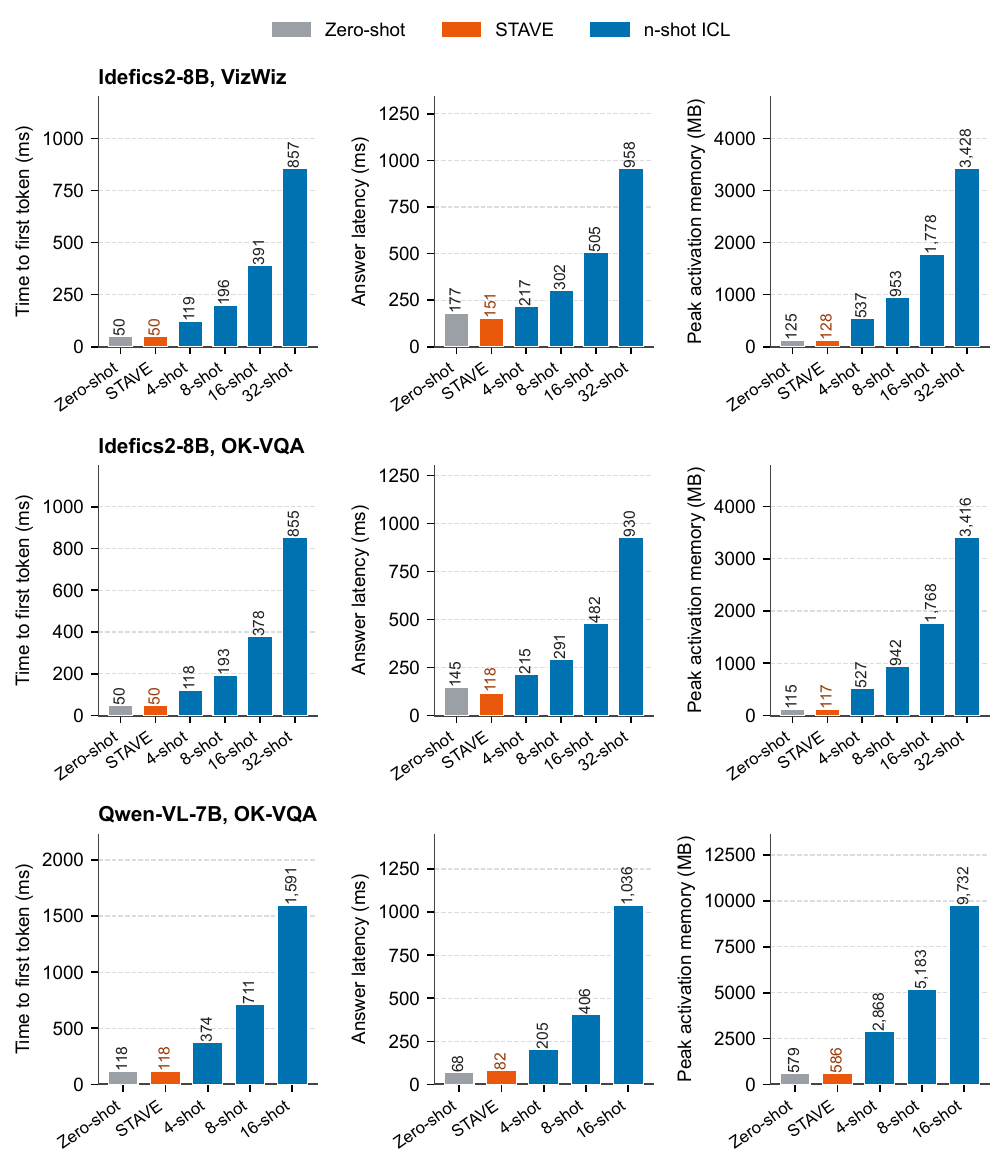}\par
\vspace{-3.2pt}
\caption{TTFT, answer latency, and peak activation memory against the demo count.}
\label{fig:cost-shots-b}
\end{figure}

\begin{table}[!htbp]
\centering
\caption{Measured per-query inference cost on Idefics2-8B VQAv2.}
\label{tab:cost-h200}
\footnotesize
\setlength{\tabcolsep}{4.5pt}
\renewcommand{\arraystretch}{1.15}
\resizebox{\linewidth}{!}{%
\begin{tabular}{l|rr|rrr|r}
\toprule
\rowcolor{tableheader}
\textbf{Method} & State (KiB) & Prompt tok. & TTFT (ms) & Decode (ms/tok) & Fixed-20 (ms) & Peak mem. (GiB) \\
\midrule
Zero-shot & 0 & 128 & \textbf{98} & \textbf{23.9} & \textbf{560} & 16.5 \\
8-shot ICL & 0 & 1{,}026 & 1{,}430 & 24.0 & 2{,}021 & 29.8 \\
16-shot ICL & 0 & 1{,}922 & 2{,}854 & 24.4 & 3{,}605 & 43.0 \\
32-shot ICL & 0 & 3{,}717 & 5{,}877 & 26.1 & 6{,}933 & 68.7 \\
\midrule
LoRA ($r{=}16$) & 68{,}704 & 128 & 116 & 33.6 & 763 & 16.5 \\
LIVE & 512 & 128 & 101 & 27.4 & 629 & 16.5 \\
MimIC & 1{,}028 & 128 & 106 & 31.6 & 713 & 16.5 \\
HiFICL & 8{,}704 & 128 & 108 & 34.3 & 767 & 16.5 \\
\midrule
\rowcolor{stavehighlight}
\stavelabel & \textbf{32} & 128 & \textbf{98} & \textbf{23.9} & \textbf{560} & 16.5 \\
\bottomrule
\end{tabular}%
}
\end{table}

\subsection{Stored task state}
\label{app:cost-storage}

Table~\ref{tab:storage} lists the task-specific state each method stores per task and backbone at hidden dimension $d=4096$. STAVE stores two vectors. The full group set stores one vector per structural token group. ICV stores one vector per layer, added after the MLP of each block. MTV stores one vector per attention head across layers, and LIVE stores one vector and one scale per layer. I2CL stores one attention-output vector and one MLP-output vector per layer, with two scales for each. MimIC and HiFICL store trained attention modules, and LoRA stores low-rank weight updates. Parameter counts are read back from the loaded module of each method and match the main tables. Sizes assume fp32 storage. The Idefics2 MimIC entry is the original encoder, which adds one shift vector per layer rather than learning a projection of the query states. The largest state here, LoRA on LLaVA, is 2,399 times that of STAVE.

\begin{table}[!htbp]
\centering
\caption{Stored task state per task and backbone ($d=4096$, fp32).}
\label{tab:storage}
\footnotesize
\setlength{\tabcolsep}{6pt}
\renewcommand{\arraystretch}{1.15}
\resizebox{\linewidth}{!}{%
\begin{tabular}{l|l|rr}
\toprule
\rowcolor{tableheader}
\textbf{Method} & \textbf{Stored state} & \textbf{Parameters} & \textbf{Size} \\
\midrule
Zero-shot, ICL & none & 0 & 0 \\
Single vector (TV, FV) & one embedding vector & 4{,}096 & 16 KiB \\
ICV & one vector per layer & 131{,}072 & 512 KiB \\
MTV & one vector per head & 131{,}072 & 512 KiB \\
LIVE & one vector and one scale per layer & 131{,}104 & 512 KiB \\
I2CL & two vectors and four scales per layer & 262{,}272 & 1.0 MiB \\
MimIC (Idefics2 / LLaVA) & attention modules & 263{,}168 / 17{,}040{,}384 & 1.0 MiB / 65.0 MiB \\
HiFICL & attention modules & 2{,}228{,}224 & 8.5 MiB \\
LoRA (Idefics2 / LLaVA) & low-rank weight updates & 17{,}588{,}224 / 19{,}652{,}608 & 67.1 MiB / 75.0 MiB \\
\midrule
\rowcolor{stavehighlight}
\stavelabel & two embedding vectors & 8{,}192 & 32 KiB \\
\bottomrule
\end{tabular}%
}
\end{table}

\subsection{Performance overview settings}
\label{app:overview-figure}

Figure~\ref{fig:performance-overview} uses reported results from four tables. From Table~\ref{tab:vqav2-coco}, it shows LLaVA and Idefics2-8B on VQAv2, OK-VQA, and COCO. From Table~\ref{tab:vqa+finegrained}, it shows Idefics2-8B on VizWiz, DTD, and Flowers and Qwen-VL-7B on VizWiz and OK-VQA. From Table~\ref{tab:text}, it shows Pythia 2.8B. TV benchmark accuracy aggregates its 18 tasks, weighted by test queries. Bar labels show the reported accuracy percentages or COCO CIDEr scores at the precision printed in each table. Each performance bar's length is the ratio of its score to STAVE's score for that model and dataset, with the inner edge representing zero.

The params sector uses the Idefics2-8B parameter counts of Table~\ref{tab:vqav2-coco}. Its bar length is $1-\log_{10}(P_m/P_{\mathrm{STAVE}})/4$ of the full length, where $P_m$ is method $m$'s parameter count, so STAVE has the full length and each factor of 10 removes a quarter of it. The latency sector shows the fixed-20 latencies printed in Table~\ref{tab:cost-h200}. Its bar length is $t_{\mathrm{STAVE}}/t_m$ of the full length, where $t_m$ is method $m$'s latency, so a longer bar is faster. These measurements are for Idefics2-8B VQAv2 under the settings of that table. ICL uses 16 demos in the Table~\ref{tab:vqav2-coco} and latency sectors, four in Table~\ref{tab:vqa+finegrained}, and 15 on the TV benchmark. LoRA uses rank 8 on the TV benchmark and rank 16 elsewhere. Every value in the overview is copied from the cited table.

\section{Implementation details}
\label{app:implementation}

\subsection{Backbones}
\label{app:backbones}

Table~\ref{tab:backbones} lists the eleven frozen backbones used in the experiments. The hidden dimension $d$ fixes the size of the learned matrix, which holds $2d$ parameters on every backbone, and $N_{\mathrm{layer}}$ is the number of decoder blocks. Image tokens is the number of decoder-input tokens one query image occupies after the frozen visual encoder and connector expand it, under the image settings used here. The readout vector covers at most a few tokens on every backbone, so these counts are the scale the context vector is normalized against.

{\Urlmuskip=0mu plus 0mu\relax
Idefics2 and Idefics3 run without image splitting, and Qwen2.5-VL images are resized to 448 pixels, so each of the six LMMs has a fixed count. Idefics2 appears under two checkpoints, \path{HuggingFaceM4/idefics2-8b-base} in Table~\ref{tab:vqav2-coco} and \path{HuggingFaceM4/idefics2-8b} in Table~\ref{tab:vqa+finegrained}, written Idefics2-8B-base and Idefics2-8B-instruct where both appear, and the model written as LLaVA in Table~\ref{tab:vqav2-coco} is \path{llava-hf/llava-interleave-qwen-7b-hf}. The other LMM checkpoints are \path{Qwen/Qwen-VL-Chat}, \path{OpenGVLab/InternVL3_5-8B}, \path{Qwen/Qwen2.5-VL-7B-Instruct}, and \path{HuggingFaceM4/Idefics3-8B-Llama3}. The LLM checkpoints are the EleutherAI \path{pythia-2.8b}, \path{pythia-6.9b}, and \path{pythia-12b} releases together with \path{EleutherAI/gpt-j-6B} and \path{huggyllama/llama-7b}. Panel (a) names the language model each backbone is built from, and several released checkpoints are instruction-tuned: the Idefics3 decoder starts from Llama-3.1-8B-Instruct, the LLaVA-Interleave decoder starts from Qwen1.5-7B-Chat, and Qwen-VL and Qwen2.5-VL are used in their chat and instruct releases. Training and evaluation run on NVIDIA A100 (40GB) and H200 GPUs.\par}

\begin{table}[!htbp]
\centering
\caption{Frozen backbones.}
\label{tab:backbones}
\footnotesize
\setlength{\tabcolsep}{4pt}

\textbf{(a) LMMs}\par\vspace{2pt}
\begin{tabular}{lllccr}
\toprule
\rowcolor{tableheader}
\textbf{Backbone} & \textbf{Language model} & \textbf{Vision encoder and connector} & $d$ & $N_{\mathrm{layer}}$ & \textbf{Img.} \\
\midrule
Idefics2-8B         & Mistral-7B   & SigLIP-SO400M, perceiver resampler   & 4096 & 32 & 64 \\
LLaVA-Interleave-7B & Qwen1.5-7B   & SigLIP-SO400M, MLP projector         & 4096 & 32 & 729 \\
Qwen-VL-7B          & Qwen-7B      & OpenCLIP ViT-bigG, cross-attention   & 4096 & 32 & 256 \\
InternVL3.5-8B      & Qwen3-8B     & InternViT-300M, pixel shuffle, MLP   & 4096 & 36 & 256 \\
Qwen2.5-VL-7B       & Qwen2.5-7B   & Native-resolution ViT, patch merger  & 3584 & 28 & 256 \\
Idefics3-8B         & Llama-3.1-8B & SigLIP-SO400M, pixel shuffle, MLP    & 4096 & 32 & 169 \\
\bottomrule
\end{tabular}

\vspace{8pt}
\textbf{(b) LLMs}\par\vspace{2pt}
\begin{tabular}{lccl}
\toprule
\rowcolor{tableheader}
\textbf{Backbone} & $d$ & $N_{\mathrm{layer}}$ & \textbf{Family} \\
\midrule
Pythia 2.8B & 2560 & 32 & GPT-NeoX \\
Pythia 6.9B & 4096 & 32 & GPT-NeoX \\
Pythia 12B  & 5120 & 36 & GPT-NeoX \\
GPT-J 6B    & 4096 & 28 & GPT-J \\
LLaMA 7B    & 4096 & 32 & LLaMA \\
\bottomrule
\end{tabular}
\end{table}

\subsection{VQAv2, OK-VQA, and COCO}
\label{app:impl-table1}

Table~\ref{tab:vqav2-coco} follows the protocol of MimIC and HiFICL, and all methods, STAVE included, share the same data, prompts, and decoding. The training set is the first 1,000 samples of the released shuffle at seed 3407, which is also the demo pool. The trained methods are evaluated without demos on the full evaluation set, with 10,000 VQAv2 questions, the 5,046 validation questions of the official OK-VQA v1.1 release, and 5,000 COCO Karpathy validation images. ICL draws the demos of each query from the same pool without replacement, with seeds 41, 42, and 3407. Zero-shot and 8-shot ICL are scored on the full evaluation set, and 16-shot and 32-shot ICL on the first 1,000 samples of the shuffled evaluation order. Zero-shot has no demos and deterministic decoding, so the three seeds give the same score. The prompts are the released templates, an instruction-style prefix for the VQA tasks and a caption-style prefix for COCO (Appendix~\ref{app:dataset-examples}). Decoding uses three beams on both backbones, at most 20 new tokens, length penalty 0, and batch size 1. Post-processing and scorers are the released ones, and CIDEr uses pycocoevalcap with CoreNLP 3.4.1. Published CIDEr values reported as fractions are multiplied by 100.

MimIC and HiFICL report the best-performing epoch, and we follow this protocol. Tables~\ref{tab:vqa+finegrained} and~\ref{tab:vl-modern} instead select the checkpoint on validation samples (Appendix~\ref{app:impl-table2}). STAVE trains its $2d=8{,}192$ parameters for five epochs with the source and target CE losses weighted 1 to 1, using AdamW with weight decay $10^{-3}$ and a cosine schedule with 10\% warmup. The learning rate is $10^{-3}$ or $3\times10^{-3}$ set per model and dataset, each training step uses $K=4$ paired examples ($K=16$ on LLaVA COCO), and the source prompt holds one demo on OK-VQA and eight on COCO.

All compared methods run with the authors' released code. LIVE~\citep{peng2024live}, MimIC~\citep{jiang2025mimic}, and HiFICL~\citep{li2026hificl} follow the settings of their papers. PT follows the setting of \citet{lester-etal-2021-power} with vocabulary initialization. The three PT modes learn 20 virtual tokens each on the prompt without demos and train for five epochs with AdamW. We tune the PT learning rate between $10^{-3}$ and $3\times10^{-2}$. PT is sensitive to the learning rate and hard to tune, and a learning rate that trains stably at one random seed can collapse at another seed on the same model and dataset.

LoRA appears in Tables~\ref{tab:vqav2-coco} and~\ref{tab:text} with different ranks and settings. In Table~\ref{tab:vqav2-coco} it follows the LoRA configuration released with HiFICL. It applies rank 16 to the attention query, key, value, and output projections of the language model and, on Idefics2, of the perceiver connector, and to the query, key, and value projections of the visual encoder. The MLP connector of LLaVA is not adapted. It trains with AdamW at learning rate $5\times10^{-4}$, weight decay $10^{-3}$, and a cosine schedule with 10\% warmup. It uses batch size 2 with 8 accumulation steps, the setting of the released LoRA script. It follows the five-epoch schedule. The rank 8 setting of Table~\ref{tab:text} is given in Appendix~\ref{app:impl-text}.

\subsection{VizWiz, OK-VQA, and fine-grained classification}
\label{app:impl-table2}

Tables~\ref{tab:vqa+finegrained} and~\ref{tab:vl-modern} follow the protocol of MTV~\citep{huang2024multimodal} and STV~\citep{ma2026sensitivity}, and every method uses the STAVE split. The pool is a subsample of the training split drawn with seed 41, the validation block follows the fitting block, and the test set is the full held-out split. MimIC draws its teacher demos only from its training records. On VizWiz and OK-VQA the validation questions are those of STAVE.

Table~\ref{tab:vqa+finegrained} pairs a weaker and a stronger frozen checkpoint. Qwen-VL-7B, used in its chat release, is the stronger one on the two VQA datasets, with zero-shot accuracy of 45.49\% on VizWiz and 57.51\% on OK-VQA against 37.74\% and 49.44\% for Idefics2-8B. Four demos raise its average from 70.35\% to 73.53\%, whereas they lower the Idefics2-8B average from 70.57\% to 69.72\%. The pair therefore tests whether the learned vectors help both when the frozen model leaves much room for improvement and when it already answers many questions correctly. STAVE has the highest VizWiz and OK-VQA accuracy and the best average with both checkpoints. On Qwen-VL-7B OK-VQA, PT and the three trained per-layer interventions fall below zero-shot, while STAVE raises accuracy from 57.51\% to 62.39\%.

All compared methods start from the authors' released code, and we adapt each one to the datasets and protocol of these tables for a fair comparison. Following their papers, MimIC and HiFICL train for 10 epochs with AdamW~\citep{loshchilov2017decoupled} at learning rate $5\times10^{-3}$ and weight decay $10^{-3}$, batch size 2 with 2 accumulation steps, a cosine schedule with 10\% warmup, and gradient clipping at 1.0. MimIC uses a four-shot teacher with loss weights 0.5 for CE and 1 for alignment. HiFICL uses rank 8 and eight virtual tokens. Decoding is greedy. Each of the 10 epochs is scored on the validation samples with the test metric, ties break by the lower mean teacher-forced answer CE and then by the earlier epoch, and only the selected epoch is evaluated on the test split.

STAVE trains its $2d$ parameters with the source and target CE losses weighted 1 to 1, using AdamW with weight decay 0 and a cosine schedule that decays the learning rate to zero. The source prompt holds one demo on VizWiz and OK-VQA and two on DTD, Flowers, and CUB. Each run takes 250 to 4,000 training steps of $K=4$ or $K=8$ paired examples, with a learning rate between $10^{-3}$ and $10^{-2}$ set per model and dataset. As for MimIC and HiFICL, and unlike Table~\ref{tab:vqav2-coco}, the checkpoint is selected on the validation samples, and only the selected checkpoint is evaluated on the test split.
The three PT modes in Table~\ref{tab:vqa+finegrained} also learn 20 virtual tokens each on the prompt without demos. They train for 500 training steps of $K=4$ samples with AdamW at weight decay $10^{-2}$ and a cosine schedule after a linear warmup, with the CE of the first answer token as the loss. The learning rate is $10^{-2}$ in most entries. When a run collapsed at one seed, that seed was trained again at another learning rate between $3\times10^{-4}$ and $5\times10^{-3}$, so the three seeds of an entry do not always share one learning rate.

\subsection{TV benchmark}
\label{app:impl-text}

Table~\ref{tab:text} uses the 18 tasks of the TV benchmark~\citep{hendel-etal-2023-context} with five frozen LLMs. Each task has one split, drawn with seed 41, of at most 256 training, 64 development, and 200 test queries. Every baseline uses the same prompts, training samples, and test queries as STAVE, and every per-task choice of a baseline is made on the demo-free development split, never on the test split. Each baseline runs with the authors' released code and follows the settings of its paper. TV and FV~\citep{todd2024function} choose the injection layer per task, as in their papers. SITE~\citep{park2026soft} runs all three stages, with the task embedding averaged over 50 prompts of 10 demos. PT again learns 20 virtual tokens. LoRA follows \citet{hu2022lora} and applies rank 8 with scaling $\alpha=16$ to the attention query and value projections of every decoder block, or to the fused query, key, and value projection on Pythia. It trains with AdamW at weight decay 0.01, $K=8$, and at most 500 training steps with early stopping on the development loss, and its learning rate is $10^{-4}$ or $10^{-3}$, chosen per task by the development loss. STAVE picks its configuration per task by the lowest demo-free development CE, with one demo in the source prompt. The chosen configurations use AdamW with weight decay 0.01 on 84 of the 90 backbone and task pairs and 0 on the rest, 500 or 1,000 training steps, $K=8$ paired examples per step on 74 pairs and 16 or 32 on the rest, and a learning rate between 0.01 and 0.5. The context vector uses 0.003 to 0.3 times the learning rate of the readout vector. The average weights each task by its number of test queries.

\subsection{Hyperparameter sensitivity}
\label{app:hparams}

Figures~\ref{fig:hparam-lr} to~\ref{fig:hparam-demos} vary one training setting at a time on four model and dataset pairs of Table~\ref{tab:vqav2-coco}, LLaVA VQAv2 and Idefics2 VQAv2, OK-VQA, and COCO. Every run uses seed 41 and the 1,000 training samples of that table and reports its best epoch, the checkpoint selection used for its entries. Unless a figure varies them, runs use $K=4$ paired examples per training step in Equation~\ref{eq:stave-objective}, five epochs, weight decay 0.001, and one source demo on OK-VQA or eight on COCO. Figures~\ref{fig:hparam-epochs} and~\ref{fig:hparam-batch} use the learning rate with the highest score in Figure~\ref{fig:hparam-lr}. The strongest baseline for a model and dataset is the best non-STAVE entry of its column in Table~\ref{tab:vqav2-coco}. Scores are accuracy (\%) for VQAv2 and OK-VQA and CIDEr for COCO.

\begin{figure}[!t]
\centering
\includegraphics[width=\linewidth]{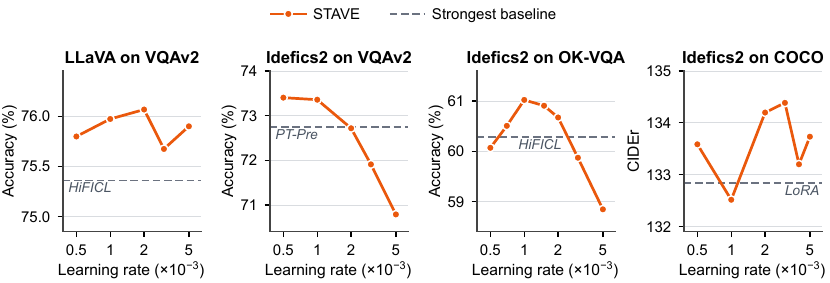}\par
\vspace{-3.2pt}
\caption{Learning-rate sensitivity on four model and dataset pairs of Table~\ref{tab:vqav2-coco}.}
\label{fig:hparam-lr}

\vspace{6pt}
\begin{minipage}[t]{0.323\linewidth}
\vspace{0pt}%
\centering
\includegraphics[width=\linewidth]{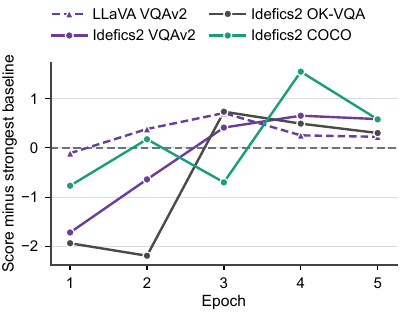}\par
\vspace{-3.2pt}
\caption{Score after each training epoch, relative to the strongest baseline.}
\label{fig:hparam-epochs}
\end{minipage}\hfill
\begin{minipage}[t]{0.323\linewidth}
\vspace{0pt}%
\centering
\includegraphics[width=\linewidth]{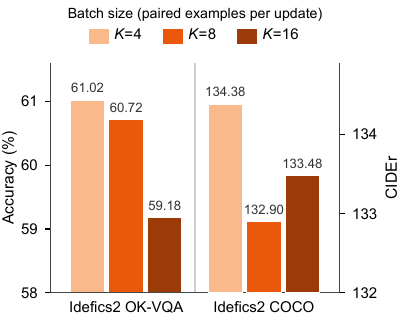}\par
\vspace{-3.2pt}
\caption{Batch size $K$ on two Idefics2 datasets (axes start above zero).}
\label{fig:hparam-batch}
\end{minipage}\hfill
\begin{minipage}[t]{0.323\linewidth}
\vspace{0pt}%
\centering
\includegraphics[width=\linewidth]{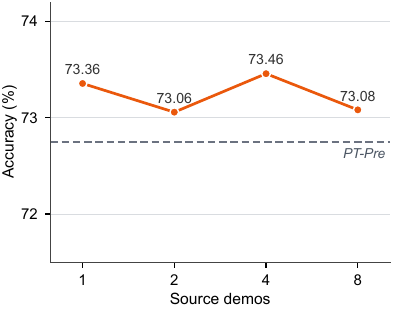}\par
\vspace{-3.2pt}
\caption{Number of source demos on Idefics2 VQAv2.}
\label{fig:hparam-demos}
\end{minipage}
\end{figure}

\textbf{Learning rate.}
STAVE stays above the strongest baseline over most of the learning-rate range in Figure~\ref{fig:hparam-lr}. On LLaVA VQAv2 every learning rate from $5\times10^{-4}$ to $5\times10^{-3}$ is above it, and on Idefics2 COCO every one except $10^{-3}$. On Idefics2 VQAv2 the score decreases as the learning rate grows and remains above the strongest baseline up to $10^{-3}$. Idefics2 OK-VQA peaks at $10^{-3}$ and exceeds the strongest baseline from $7\times10^{-4}$ to $2\times10^{-3}$.

\textbf{Training epochs.}
Figure~\ref{fig:hparam-epochs} follows the score after every epoch. All four model and dataset pairs are above the strongest baseline from the fourth epoch on, including the last one, so the gain over the strongest baseline does not depend on selecting a single epoch.

\textbf{Batch size.}
Figure~\ref{fig:hparam-batch} varies $K$ on the two Idefics2 datasets and compares STAVE with itself. $K=4$ gives the highest score on both datasets. With the learning rate and the number of epochs fixed, a larger $K$ means fewer training steps per epoch, 63 at $K=16$ against 250 at $K=4$, and the OK-VQA score decreases as $K$ grows.

\textbf{Number of source demos.}
Figure~\ref{fig:hparam-demos} varies the number of source demos on Idefics2 VQAv2 from one to eight, with every other setting fixed. The score changes by at most 0.40 points, from 73.06\% to 73.46\%, and does not rise or fall with the number of demos. Every setting remains above the strongest baseline, PT-Pre at 72.75\%. The number of source demos therefore has little effect on STAVE.

\subsection{Dataset and prompt examples}
\label{app:dataset-examples}

This subsection shows the decoded, demo-free query block and its reference answer. The VQA and captioning queries come from the evaluation splits. The source branch used during training places demos before the same query block. Angle-bracketed image markers mark where the image shown beside the prompt is placed. Each visual sample is identified by its dataset ID or original file path.

\par\ifdim\dimexpr\pagegoal-\pagetotal\relax<0.25\textheight\newpage\fi
\subsubsection{VQA and captioning}

\begin{datasetexample}{VQAv2: question ID 575633002}
\begin{minipage}[t]{0.24\linewidth}
\vspace{0pt}
\centering
\includegraphics[width=\linewidth,height=0.14\textheight,keepaspectratio]{}
\end{minipage}\hfill
\begin{minipage}[t]{0.73\linewidth}
\vspace{0pt}
\textbf{Decoded LLaVA-Interleave prompt}\par
{\ttfamily\footnotesize\frenchspacing
Instruction: provide an answer to the question. Use the image to answer.\par
Image:<image>Question:How many sides does the STOP sign have?\par
Answer:\par}
\vspace{4pt}
\textbf{Reference answer:} 8. The ten annotations are \{8, 8, 2, 8, 8, 8, 8, 8, 8, 8\}.
\end{minipage}
\end{datasetexample}

\begin{datasetexample}{OK-VQA: question ID 4516835}
\begin{minipage}[t]{0.24\linewidth}
\vspace{0pt}
\centering
\includegraphics[width=\linewidth,height=0.14\textheight,keepaspectratio]{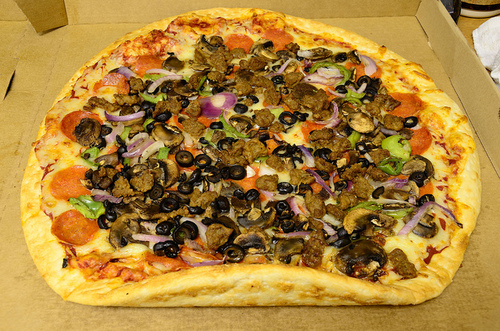}
\end{minipage}\hfill
\begin{minipage}[t]{0.73\linewidth}
\vspace{0pt}
\textbf{Decoded prompt}\par
{\ttfamily\footnotesize\frenchspacing
<image>What black veggie is on this pizza? Answer:\par}
\vspace{4pt}
\textbf{Reference answer:} olives. Eight annotators answered ``olives'' and two answered ``olive.''
\end{minipage}
\end{datasetexample}

\begin{datasetexample}{VizWiz: question ID VizWiz\_val\_00001816}
\begin{minipage}[t]{0.24\linewidth}
\vspace{0pt}
\centering
\includegraphics[width=\linewidth,height=0.15\textheight,keepaspectratio]{}
\end{minipage}\hfill
\begin{minipage}[t]{0.73\linewidth}
\vspace{0pt}
\textbf{Decoded prompt}\par
{\ttfamily\footnotesize\frenchspacing
First carefully understand the given examples.\par
Then use the given image and answer the question in the same way as the examples.\par
If the question can not be answered, respond unanswerable. <image>What is this? Answer:\par}
\vspace{4pt}
\textbf{Reference answer:} coffee creamer. The ten annotations include ``coffee creamer,'' ``coffee mate,'' and more specific variants such as ``french vanilla coffee creamer.''
\end{minipage}
\end{datasetexample}

\begin{datasetexample}{COCO Captioning: image ID 306229}
\begin{minipage}[t]{0.24\linewidth}
\vspace{0pt}
\centering
\includegraphics[width=\linewidth,height=0.14\textheight,keepaspectratio]{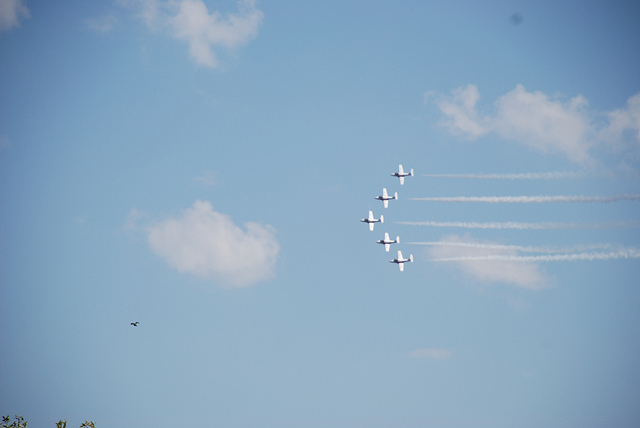}
\end{minipage}\hfill
\begin{minipage}[t]{0.73\linewidth}
\vspace{0pt}
\textbf{Decoded Idefics2 prompt}\par
{\ttfamily\footnotesize\frenchspacing
Image:<image><end\_of\_outterance>\par
Caption:\par}
\vspace{4pt}
\textbf{Reference captions:} ``This is an air show put on by the US Navy Blue Angels.'' Other references describe five airplanes flying in formation, smoke from their tails, and a group of small airplanes in the sky. The separator keeps the spelling of the released MimIC Idefics2 template.
\end{minipage}
\end{datasetexample}

\subsubsection{Fine-grained visual classification}

The three classification datasets use the same two-choice query format, and evaluation uses the same format on the test split. Each example below is the first query of the fitting split, which the target branch sees without demos. The options are shuffled deterministically. The answer is an option letter, while the class name below makes the label explicit.

\begin{datasetexample}{Describable Textures Dataset: query banded/banded\_0005.jpg}
\begin{minipage}[t]{0.24\linewidth}
\vspace{0pt}
\centering
\includegraphics[width=\linewidth,height=0.12\textheight,keepaspectratio]{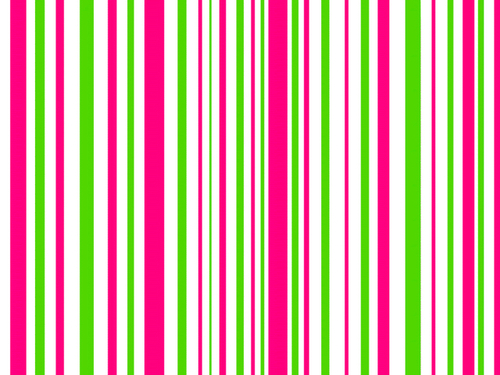}
\end{minipage}\hfill
\begin{minipage}[t]{0.73\linewidth}
\vspace{0pt}
\textbf{Decoded prompt}\par
{\ttfamily\footnotesize\frenchspacing
<image>What is the type of texture in the image? A.crosshatched B.banded\par
Answer with the option\textquotesingle{}s letter from the given choice directly. Answer:\par}
\vspace{4pt}
\textbf{Reference answer:} B, banded.
\end{minipage}
\end{datasetexample}

\begin{datasetexample}{Flowers-102: query image\_06765.jpg}
\begin{minipage}[t]{0.24\linewidth}
\vspace{0pt}
\centering
\includegraphics[width=\linewidth,height=0.12\textheight,keepaspectratio]{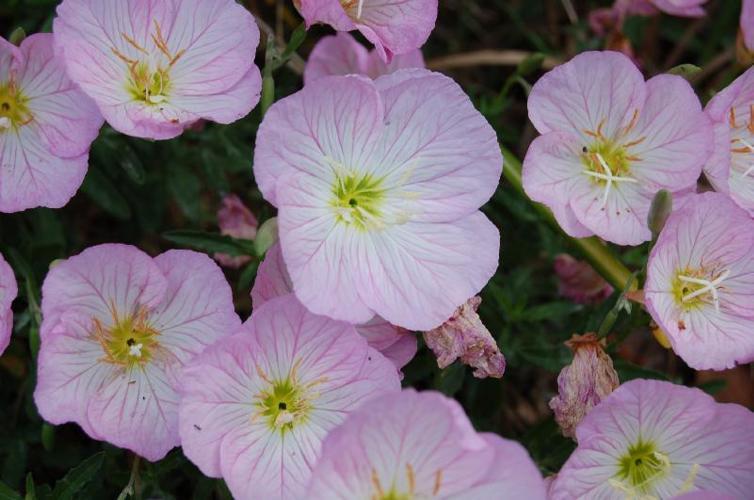}
\end{minipage}\hfill
\begin{minipage}[t]{0.73\linewidth}
\vspace{0pt}
\textbf{Decoded prompt}\par
{\ttfamily\footnotesize\frenchspacing
<image>What is the type of flower in the image? A.purple coneflower B.pink primrose\par
Answer with the option\textquotesingle{}s letter from the given choice directly. Answer:\par}
\vspace{4pt}
\textbf{Reference answer:} B, pink primrose.
\end{minipage}
\end{datasetexample}

\begin{datasetexample}{CUB-200-2011: query Black\_Footed\_Albatross\_0009\_34.jpg}
\begin{minipage}[t]{0.24\linewidth}
\vspace{0pt}
\centering
\includegraphics[width=\linewidth,height=0.12\textheight,keepaspectratio]{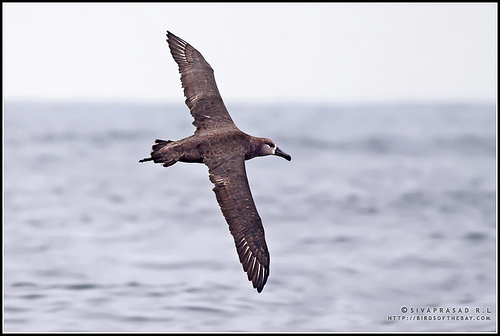}
\end{minipage}\hfill
\begin{minipage}[t]{0.73\linewidth}
\vspace{0pt}
\textbf{Decoded prompt}\par
{\ttfamily\footnotesize\frenchspacing
<image>What is the type of bird in the image? A.Crested Auklet B.Black footed Albatross\par
Answer with the option\textquotesingle{}s letter from the given choice directly. Answer:\par}
\vspace{4pt}
\textbf{Reference answer:} B, Black footed Albatross.
\end{minipage}
\end{datasetexample}

\subsubsection{TV benchmark tasks}

Every TV benchmark task uses the demo-free template \texttt{example:<input>->}. The answer tokens follow the arrow during supervision. In the translation and linguistic tasks, the input and the answer each start with one space, as in \texttt{example:\ chien->}, and a pair is kept only when its answer is a single token for the backbone's tokenizer. The algorithmic and knowledge tasks add no space and apply no single-token filter. The entries below are input-output pairs that remain after filtering on every backbone, or records from the deterministic algorithmic task generators.

Each task draws up to 200 test queries, and a task with a small input space yields
fewer. Next letter and Previous letter have 25 possible inputs and the two case tasks
have 26, so their test splits hold 12 or 13 queries. The single-token filter also
shrinks some translation and linguistic tasks. English to Spanish holds 33 to 110 test
queries depending on the backbone, while French to English and Spanish to English hold
200. An unweighted mean over the 18
tasks would give a 12-query split the same influence as a 200-query split, so every
TV benchmark average in this paper weights each task by its number of test queries. The
reported figure is therefore the accuracy over the pooled test queries of the suite,
2,138 to 2,202 per backbone. Every column uses the same weighting, and selection is
unaffected because it is made per task on the development split.

\begin{datasetexample}{Algorithmic tasks}
\footnotesize
\renewcommand{\arraystretch}{1.12}
\begin{tabularx}{\linewidth}{@{}lXl@{}}
\toprule
\textbf{Task} & \textbf{Prompt} & \textbf{Reference answer} \\
\midrule
Next letter & {\ttfamily example:a->} & b \\
Previous letter & {\ttfamily example:b->} & a \\
List first & {\ttfamily example:a,b,c->} & a \\
List last & {\ttfamily example:a,b,c->} & c \\
To uppercase & {\ttfamily example:a->} & A \\
To lowercase & {\ttfamily example:A->} & a \\
\bottomrule
\end{tabularx}
\end{datasetexample}

\begin{datasetexample}{Translation tasks}
\footnotesize
\renewcommand{\arraystretch}{1.12}
\begin{tabularx}{\linewidth}{@{}lXl@{}}
\toprule
\textbf{Task} & \textbf{Prompt} & \textbf{Reference answer} \\
\midrule
French to English & {\ttfamily example:\ chien->} & dog \\
Spanish to English & {\ttfamily example:\ perro->} & dog \\
English to French & {\ttfamily example:\ good->} & bon \\
English to Spanish & {\ttfamily example:\ two->} & dos \\
\bottomrule
\end{tabularx}
\end{datasetexample}

\begin{datasetexample}{Linguistic tasks}
\footnotesize
\renewcommand{\arraystretch}{1.12}
\begin{tabularx}{\linewidth}{@{}lXl@{}}
\toprule
\textbf{Task} & \textbf{Prompt} & \textbf{Reference answer} \\
\midrule
Present to gerund & {\ttfamily example:\ run->} & running \\
Present to past & {\ttfamily example:\ run->} & ran \\
Plural to singular & {\ttfamily example:\ children->} & child \\
Antonym & {\ttfamily example:\ hot->} & cold \\
\bottomrule
\end{tabularx}
\end{datasetexample}

\begin{datasetexample}{Knowledge tasks}
\footnotesize
\renewcommand{\arraystretch}{1.12}
\begin{tabularx}{\linewidth}{@{}lXl@{}}
\toprule
\textbf{Task} & \textbf{Prompt} & \textbf{Reference answer} \\
\midrule
Country to capital & {\ttfamily example:Germany->} & Berlin \\
Person to language & {\ttfamily example:Gilad Atzmon->} & Hebrew \\
Location to continent & {\ttfamily example:Paris->} & Europe \\
Location to religion & {\ttfamily example:Edwin of Northumbria->} & Christian \\
\bottomrule
\end{tabularx}
\end{datasetexample}

\section{Additional results}
\label{app:per-category}

\subsection{More recent LMMs}
\label{app:recent-lmms}

Table~\ref{tab:vl-modern} extends the VizWiz and OK-VQA comparison of Table~\ref{tab:vqa+finegrained} to InternVL3.5-8B, Qwen2.5-VL-7B, and Idefics3-8B, under the protocol of Appendix~\ref{app:impl-table2}, and each trained method is run three times. STAVE has the best mean on five of the six columns, with 14 to 36 times fewer parameters than LIVE and MimIC and 238 to 306 times fewer than HiFICL, and it exceeds 4-shot ICL by more than 10 points on every column.

\begin{table}[!htbp]
\centering
\caption{Comparison on VizWiz and OK-VQA with more recent LMMs.}
\label{tab:vl-modern}
\footnotesize
\setlength{\tabcolsep}{4pt}
\resizebox{\textwidth}{!}{%
\begin{tabular}{l|c|cc|c|cc|c|cc}
\toprule
& \multicolumn{3}{c|}{\textbf{InternVL3.5-8B}} & \multicolumn{3}{c|}{\textbf{Qwen2.5-VL-7B}} & \multicolumn{3}{c}{\textbf{Idefics3-8B}} \\
\cmidrule(lr){2-4}\cmidrule(lr){5-7}\cmidrule(lr){8-10}
\rowcolor{tableheader}
\textbf{Method} & \# Params (M) & VizWiz & OK-VQA & \# Params (M) & VizWiz & OK-VQA & \# Params (M) & VizWiz & OK-VQA \\
\midrule
Zero-shot & -- & 27.7 & 19.4 & -- & 48.1 & 54.5 & -- & 29.9 & 45.2 \\
4-shot ICL & -- & 56.5 & 47.4 & -- & 54.5 & 43.0 & -- & 37.6 & 45.9 \\
\midrule
LIVE   & \cellcolor{parammedium}{0.15 ($\times$18.00)} & 63.54$_{\pm 0.71}$ & 49.98$_{\pm 0.13}$ & \cellcolor{parammedium}{0.10 ($\times$14.00)} & 52.97$_{\pm 9.08}$ & 55.31$_{\pm 0.33}$ & \cellcolor{parammedium}{0.13 ($\times$16.00)} & 46.93$_{\pm 1.96}$ & 39.25$_{\pm 1.03}$ \\
MimIC  & \cellcolor{parammedium}{0.30 ($\times$36.14)} & 66.79$_{\pm 1.42}$ & 57.36$_{\pm 0.42}$ & \cellcolor{parammedium}{0.20 ($\times$28.11)} & \textbf{73.44}$_{\pm 0.87}$ & \underline{64.55}$_{\pm 0.47}$ & \cellcolor{parammedium}{0.26 ($\times$32.13)} & \underline{66.71}$_{\pm 0.86}$ & \underline{57.22}$_{\pm 0.22}$ \\
HiFICL & \cellcolor{paramlarge}{2.5 ($\times$306.00)} & \underline{68.09}$_{\pm 0.41}$ & \underline{58.74}$_{\pm 0.17}$ & \cellcolor{paramlarge}{1.7 ($\times$238.00)} & \underline{71.74}$_{\pm 0.51}$ & 59.81$_{\pm 0.53}$ & \cellcolor{paramlarge}{2.2 ($\times$272.00)} & 65.60$_{\pm 0.77}$ & 56.53$_{\pm 0.64}$ \\
\midrule
\rowcolor{stavehighlight}
\stavelabel & \cellcolor{paramsmall}\textbf{0.008 ($\times$1.00)} & \textbf{68.37}$_{\pm 0.71}$ & \textbf{58.94}$_{\pm 0.11}$ & \cellcolor{paramsmall}\textbf{0.007 ($\times$1.00)} & 71.34$_{\pm 0.24}$ & \textbf{64.91}$_{\pm 0.29}$ & \cellcolor{paramsmall}\textbf{0.008 ($\times$1.00)} & \textbf{68.66}$_{\pm 0.18}$ & \textbf{58.74}$_{\pm 0.37}$ \\
\bottomrule
\end{tabular}%
}
\end{table}

\subsection{VQAv2 answer types}
\label{app:answer-type}

Figure~\ref{fig:vqav2-answer-type} splits the VQAv2 scores of Table~\ref{tab:vqav2-coco} by the three official answer types on all 10,000 evaluation questions, using one run of each method. Each point is the accuracy gain of STAVE over LIVE, MimIC, or HiFICL, with a paired bootstrap 95\% interval over questions (2,000 resamples). STAVE matches or exceeds every baseline on every answer type on both backbones. The other type holds half of the questions and most open-vocabulary answers, and there the gain is 1.3 to 2.6 points, with an interval that excludes zero against every baseline on both backbones.

\begin{figure}[!htbp]
\centering
\includegraphics[width=\linewidth]{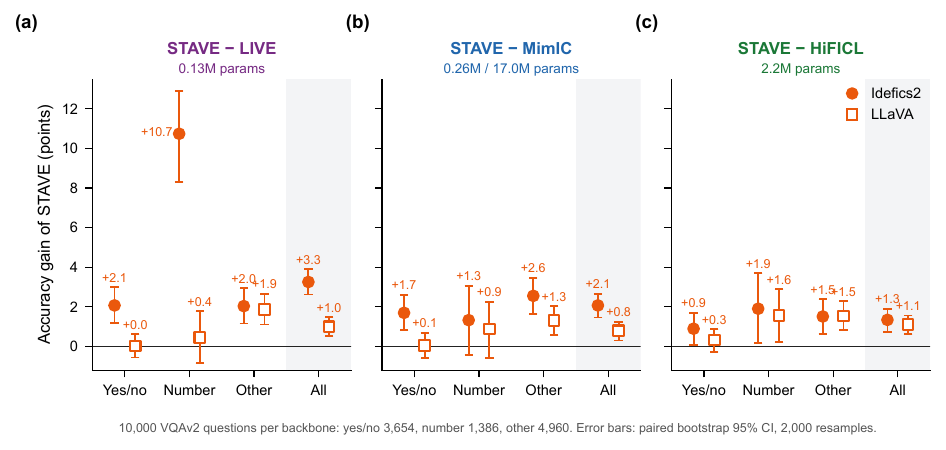}\par
\vspace{-3.2pt}
\caption{VQAv2 accuracy gain of STAVE over each trained baseline, by answer type.}
\label{fig:vqav2-answer-type}
\end{figure}

\section{Ablation details}
\label{app:ablations}

\subsection{Demos in the prompt on COCO Captioning}
\label{app:demo-robustness-coco}

Table~\ref{tab:vqav2-coco} evaluates every method without demos. Table~\ref{tab:demo-robustness-coco} puts demos back into the prompt at inference and changes nothing else. Every method is read from its own published checkpoint on the first 1,000 images of the COCO Karpathy validation split, and the demos of a query are drawn from the same pool in the same order for every method, so a column compares methods on identical prompts. Columns are the number of demos, and the mismatched column gives four demos whose captions belong to other images. Avg. drop averages the change from the demo-free score over the five demo conditions.

MimIC and HiFICL lose almost all captioning ability on LLaVA as soon as one demo is present, falling from about 135 CIDEr to below 5, and their loss does not shrink as demos are added. On Idefics2, MimIC holds its score within about 3 CIDEr of the demo-free one until the demos are mismatched, and HiFICL does not. Paired STAVE stays within 6 CIDEr of its demo-free score in every condition on both backbones, and within 0.7 CIDEr on Idefics2, including under mismatched demos. Target-only training falls to between 11 and 26 CIDEr on LLaVA, and the paired objective repairs that case.

The released evaluator turns a generation into a caption by cutting it at the first of a list of stop words, and the HiFICL Idefics2 entries at four and at mismatched demos hold 130 and 160 generations of 1,000 that open with a template marker, which places a stop word at the first token and leaves an empty caption. Those two entries are scored after one leading marker is removed, which recovers 93 and 59 captions. The rest are empty generations, so both entries stay far below the demo-free score.

\begin{table}[!htbp]
\centering
\caption{COCO CIDEr with demos added to the prompt at inference (first 1,000 Karpathy validation images).}
\label{tab:demo-robustness-coco}
\footnotesize
\setlength{\tabcolsep}{4pt}
\renewcommand{\arraystretch}{1.15}
\begin{tabular}{l|cccccc|c}
\toprule
\rowcolor{tableheader}
\textbf{Method} & \textbf{0} & \textbf{1} & \textbf{2} & \textbf{4} & \textbf{8} & \textbf{mismatched} & \textbf{Avg. drop} \\
\midrule
\multicolumn{8}{@{}l}{\textit{LLaVA-Interleave, COCO}} \\
MimIC & 134.6 & 3.3 & 3.0 & 3.4 & 3.9 & 4.3 & -131.0 \\
HiFICL & 138.0 & 3.4 & 3.0 & 3.4 & 4.5 & 4.0 & -134.4 \\
STAVE (source-only) & 134.0 & \textbf{135.2} & \textbf{134.8} & \textbf{135.8} & \textbf{137.4} & \textbf{135.7} & +1.8 \\
STAVE (target-only) & 135.2 & 11.3 & 21.8 & 21.6 & 25.5 & 15.0 & -116.2 \\
\rowcolor{stavehighlight}
STAVE (paired) & \textbf{138.8} & 133.9 & 134.7 & 135.5 & 134.4 & 133.0 & -4.5 \\
\midrule
\multicolumn{8}{@{}l}{\textit{Idefics2, COCO}} \\
MimIC & 140.4 & 139.3 & 139.1 & 137.3 & 137.6 & 94.5 & -10.8 \\
HiFICL & 137.7 & 114.6 & 129.7 & 66.5 & 136.0 & 21.4 & -44.1 \\
STAVE (source-only) & 138.1 & 140.4 & 141.4 & 140.8 & 141.0 & 139.8 & +2.6 \\
STAVE (target-only) & 141.5 & \textbf{141.8} & \textbf{142.0} & 142.4 & \textbf{143.4} & 138.1 & +0.0 \\
\rowcolor{stavehighlight}
STAVE (paired) & \textbf{141.8} & 141.1 & 141.6 & \textbf{142.5} & 142.1 & \textbf{142.1} & +0.0 \\
\bottomrule
\end{tabular}
\end{table}

\subsection{Object hallucination by training objective}
\label{app:objective-hallucination}

Table~\ref{tab:objective-hallucination} scores the Idefics2 COCO captions of the three objectives of Table~\ref{tab:training-objective-ablation} with the CHAIR metrics of Table~\ref{tab:hallucination}. Paired training has the highest recall, and its captions name absent objects less often than target-only captions on both CHAIR measures. Source-only captions name absent objects least but recall the fewest objects.

\begin{table}[!htbp]
\centering
\caption{Object hallucination by training objective on Idefics2 COCO.}
\label{tab:objective-hallucination}
\footnotesize
\setlength{\tabcolsep}{7pt}
\renewcommand{\arraystretch}{1.15}
\begin{tabular}{l|ccc}
\toprule
\rowcolor{tableheader}
\textbf{Objective} & CHAIRs $\downarrow$ & CHAIRi $\downarrow$ & Recall $\uparrow$ \\
\midrule
\rowcolor{stavehighlight}
Paired      & \underline{2.62} & \underline{1.76} & \textbf{44.81} \\
Target-only & 2.84 & 1.93 & \underline{44.47} \\
Source-only & \textbf{2.36} & \textbf{1.64} & 43.08 \\
\bottomrule
\end{tabular}
\end{table}

\subsection{Training objective on the TV benchmark}
\label{app:text-objective}

The paired objective has two branches. The source branch trains the learned vectors inside a one-demo prompt and the target branch trains them in the demo-free prompt. Table~\ref{tab:text-objective} removes one branch at a time. STAVE-source sets the target weight to zero, STAVE-target sets the source weight to zero, and STAVE-paired is the configuration of Table~\ref{tab:text}. Each entry reuses the configuration that Table~\ref{tab:text} selected for that backbone and task, with no further tuning, and is evaluated twice on the same test queries: without a demo, and with the same learned vectors placed in a one-demo prompt. Accuracy weights each task by its test queries. The TV bench rows of Table~\ref{tab:training-objective-ablation} average the three LLMs of Table~\ref{tab:text-objective}.

Each control holds only in the prompt format it was trained in. STAVE-source loses 2.4 to 25.4 points when the demo is removed, and STAVE-target loses up to 4.3 points when one is added. STAVE-paired stays within half a point of itself in both formats and is above each control in the format that control was not trained for. The one entry where a control is ahead is Pythia 6.9B with a demo, where STAVE-source is 0.2 points above STAVE-paired. The same pattern holds on Pythia 2.8B and Pythia 12B, where STAVE-source loses 4.4 and 24.5 points without a demo. SITE is listed for reference and is read the same way. It loses 0.8 to 1.0 points when a demo is added, a milder form of the STAVE-target pattern, so STAVE-paired is the only entry that holds its accuracy in both formats.

\begin{table}[!htbp]
\centering
\caption{Training objective ablation on the TV benchmark (accuracy, \%). Best objective in bold.}
\label{tab:text-objective}
\small
\setlength{\tabcolsep}{2.5pt}
\renewcommand{\arraystretch}{1.15}
\begin{tabular}{l|ccc|ccc|ccc}
\toprule
 & \multicolumn{3}{c|}{\textbf{Pythia 6.9B}} & \multicolumn{3}{c|}{\textbf{GPT-J 6B}} & \multicolumn{3}{c}{\textbf{LLaMA 7B}} \\
\cmidrule(lr){2-4}\cmidrule(lr){5-7}\cmidrule(lr){8-10}
\rowcolor{tableheader}
\textbf{Method} & \textbf{w/o demo} & \textbf{w/ 1 demo} & $\Delta$ & \textbf{w/o demo} & \textbf{w/ 1 demo} & $\Delta$ & \textbf{w/o demo} & \textbf{w/ 1 demo} & $\Delta$ \\
\midrule
SITE & 90.4 & 89.4 & \ddown{1.0} & 91.6 & 90.6 & \ddown{1.0} & 92.1 & 91.3 & \ddown{0.8} \\
STAVE-source & 66.6 & \textbf{92.0} & \dup{25.4} & 82.8 & 92.3 & \dup{9.5} & 89.6 & 92.0 & \dup{2.4} \\
STAVE-target & 91.3 & 88.1 & \ddown{3.2} & 91.7 & 87.4 & \ddown{4.3} & 91.6 & 91.2 & \ddown{0.4} \\
\rowcolor{stavehighlight}
STAVE-paired & \textbf{91.6} & 91.8 & \dup{0.2} & \textbf{92.3} & \textbf{92.4} & \dup{0.1} & \textbf{92.3} & \textbf{92.5} & \dup{0.2} \\
\bottomrule
\end{tabular}
\end{table}

\subsection{Structural token groups}
\label{app:groups}

Figure~\ref{fig:groups-main}(a) and Table~\ref{tab:groups-main} hold the training configuration fixed and vary the group-to-vector assignment or the updated tokens. Every configuration trains for five epochs at the learning rate and demo count of the Table~\ref{tab:vqav2-coco} entry of its model and dataset, and reports its best epoch, the checkpoint selection used for that entry. Configuration names list the tokens covered by each learned vector, with vectors separated by a slash. The readout tokens are the final prompt token and the answer cue. The context tokens are all other eligible tokens. The image tokens are the query-image embeddings. In ``Text / image'', the text tokens are all eligible tokens outside the query image, and in ``Readout / text / image'', they exclude the readout tokens. ``Readout / context'' is the configuration used in the main experiments. A configuration marked ``only'' learns one vector and leaves all other tokens unchanged.

\begin{table}[!htbp]
\centering
\caption{Group ablation on COCO (CIDEr) and VQAv2 (accuracy, \%). Best: \textbf{bold}, second: \underline{underlined}.}
\label{tab:groups-main}
\footnotesize
\setlength{\tabcolsep}{7pt}
\renewcommand{\arraystretch}{1.15}
\begin{tabular}{l|cc|cc}
\toprule
 & \multicolumn{2}{c|}{\textbf{COCO}} & \multicolumn{2}{c}{\textbf{VQAv2}} \\
\cmidrule(lr){2-3}\cmidrule(lr){4-5}
\rowcolor{tableheader}
\textbf{Configuration} & Idefics2 & LLaVA & Idefics2 & LLaVA \\
\midrule
\multicolumn{5}{@{}l}{\textit{One vector}} \\
Context only            & 132.65 & 129.57 & 72.84 & 75.51 \\
Readout only            & 132.24 & 129.85 & 71.78 & 75.86 \\
All tokens           & 131.95 & 129.18 & 72.75 & 75.74 \\
\midrule
\multicolumn{5}{@{}l}{\textit{Two vectors}} \\
\rowcolor{stavehighlight}
Readout / context       & \textbf{134.38} & \textbf{131.78} & \underline{73.46} & \textbf{76.12} \\
Last / rest            & 133.28 & 128.98 & 73.33 & 75.77 \\
Text / image            & 133.35 & 129.12 & 72.67 & \underline{75.98} \\
\midrule
\multicolumn{5}{@{}l}{\textit{Three vectors}} \\
Readout / text / image  & 133.79 & \underline{130.79} & \textbf{73.62} & 75.96 \\
\midrule
\multicolumn{5}{@{}l}{\textit{One vector per group}} \\
Per group               & \underline{133.91} & 130.19 & 72.72 & 75.84 \\
\bottomrule
\end{tabular}
\end{table}

\textbf{One-vector configurations.}
``All tokens'' ties every eligible group to one vector. ``Readout only'' learns the readout vector and leaves the context tokens unchanged. ``Context only'' learns the context vector and leaves the readout tokens unchanged. Excluded groups are zeroed rather than reassigned. The LLaVA prompt has no image delimiters, so its \grp{Image-gate} group is empty. Demos, BOS tokens, padding, and answer tokens remain excluded in every configuration.

\textbf{Two-vector, three-vector, and per-group configurations.}
``Last / rest'' learns one vector at the final prompt token and one vector shared by every other eligible token, so the answer cue joins the context tokens instead of the final token. ``Text / image'' learns one vector for the text tokens, including \grp{Image-gate}, and one vector for the image tokens. ``Readout / text / image'' keeps the readout vector and splits the context tokens into one vector for the query-image tokens and one vector for the other context tokens. ``Per group'' is the full group set, with a separate vector for every structural token group. Every configuration recomputes the count normalization on its merged groups, so each learned vector is divided by the square root of the number of tokens it updates in the prompt.

\textbf{What the comparisons support.}
Figure~\ref{fig:groups-main}(a) shows the mean of the two backbones in Table~\ref{tab:groups-main}, on axes that start above zero. Overall, ``Readout / context'' is the best configuration. One learned vector is not enough: ``All tokens'', ``Readout only'', and ``Context only'' are below it for every model and dataset, by up to 1.7 points on Idefics2 VQAv2 and up to 2.6 CIDEr on LLaVA COCO. Two-vector partitions that mix readout and context tokens are also below it for every model and dataset, ``Text / image'' by up to 2.7 CIDEr and ``Last / rest'' by up to 2.8 CIDEr. More vectors bring no consistent gain. ``Per group'' is below it for every model and dataset, and ``Readout / text / image'' is above it only on Idefics2 VQAv2, by 0.16 points, inside the paired-bootstrap interval against ``Readout / context'', which spans about half a point on VQAv2 (2,000 resamples over validation samples). Appendix~\ref{app:row-roles} examines what each vector changes in the trained matrix.

\subsection{Allocation of a shared update}
\label{app:allocation}

\textbf{Resolution sweep.}
Figure~\ref{fig:motivation} tests the dilution of Equation~\ref{eq:sharing-bound} by changing only the number of image tokens. From 64 to 4{,}096 image tokens, one vector shared by all eligible tokens loses 81\% of its first-order decrease on OK-VQA and stays within about 3\% of the prediction with zero gradient at the image tokens of Proposition~\ref{prop:sharing-cost}. The readout/context split at the same total norm keeps its effect.

\begin{wrapfigure}{R}{0.47\textwidth}
\vspace{-\intextsep}
\centering
\includegraphics[width=\linewidth]{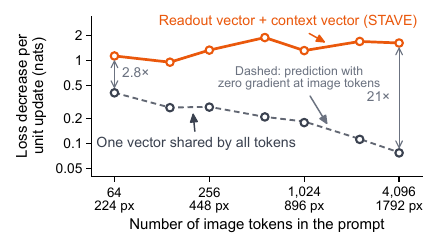}\par
\vspace{-9pt}
\caption{One shared vector against the readout/context split as the image grows (Qwen2.5-VL, OK-VQA).}
\label{fig:motivation}
\vspace{-\intextsep}
\end{wrapfigure}
The figure uses the frozen Qwen2.5-VL-7B backbone of Table~\ref{tab:vl-modern} with the demo-free OK-VQA prompt of that table. Each image is resized to $s\times s$ pixels with $s$ from 224 to 1792, which gives $(s/28)^2$ image tokens and leaves every text token unchanged. Besides the image, the prompt holds one \grp{First} token, one \grp{Image-gate} token, about nine question tokens, and two readout tokens, \grp{Answer-cue} and \grp{Last}. The same 160 training-pool samples are used at every size, 96 to fit the update directions and 64 to measure the loss. For each configuration, the direction is the negative mean zero-update gradient of its vector over the fitting samples, scaled to unit total norm, so that it changes the input embeddings by one in Frobenius norm. The reported value is the first-order decrease of the answer loss of Equation~\ref{eq:stave-ce} along this direction, averaged over the held-out samples. The prediction with zero gradient at the image tokens is the value of one vector on the text tokens, multiplied by $\sqrt{N_{\mathrm{text}}/N_1}$, where $N_{\mathrm{text}}$ counts the eligible text tokens and $N_1$ all eligible tokens. The readout/context configuration ties \grp{Last} and \grp{Answer-cue} to one vector and all other eligible tokens, including the image, to a second vector. Two vectors on random token sets with the readout and context counts, under three partition seeds, never exceed one shared vector by more than 0.3\%, so the gain comes from separating the readout tokens rather than from the second vector. Updates of norm $0.1$ give the same pattern in the actual loss. At 64 image tokens one shared vector lowers it by 0.047 nats and the readout/context split by 0.115 nats, and at 4{,}096 image tokens by 0.0002 and 0.167 nats. Smaller norms are not informative on this backbone, because bf16 rounding moves the loss of a single sample by up to about $0.1$ nats under any perturbation of its embeddings. Table~\ref{tab:resolution} lists the values for OK-VQA and for VizWiz, whose prompt also carries a 39-token instruction, so that the shared vector falls later there. Real photos fall inside this range. The default Qwen2.5-VL processor limits each image to 12{,}845{,}056 pixels and rounds its sides to multiples of 28. On 1{,}500 OK-VQA images drawn at random from the same pool, it gives 234 to 391 image tokens between the 10th and 90th percentiles, with a median of 345. On 1{,}500 VizWiz images the median is 1{,}610 and the 90th percentile 2{,}552.

\begin{table}[!htbp]
\centering
\caption{First-order loss decrease per unit update as the image grows (Qwen2.5-VL).}
\label{tab:resolution}
\footnotesize
\setlength{\tabcolsep}{6pt}
\renewcommand{\arraystretch}{1.12}
\begin{tabular}{rr|ccc|cc}
\toprule
\rowcolor{tableheader}
\textbf{Image tok.} & \textbf{Size (px)} & \textbf{One shared} & \textbf{Predicted} & \textbf{Readout / context} & \textbf{Ratio} & \textbf{Image share (\%)} \\
\midrule
\multicolumn{7}{l}{\textit{OK-VQA}} \\
64 & 224 & 0.408 & 0.409 & 1.137 & 2.8 & 1.95 \\
144 & 336 & 0.270 & 0.271 & 0.956 & 3.5 & 1.80 \\
256 & 448 & 0.274 & 0.276 & 1.337 & 4.9 & 1.41 \\
576 & 672 & 0.209 & 0.209 & 1.897 & 9.1 & 1.27 \\
1{,}024 & 896 & 0.178 & 0.184 & 1.320 & 7.4 & 1.35 \\
2{,}304 & 1344 & 0.112 & 0.113 & 1.702 & 15.2 & 0.68 \\
4{,}096 & 1792 & 0.077 & 0.078 & 1.630 & 21.2 & 0.74 \\
\midrule
\multicolumn{7}{l}{\textit{VizWiz}} \\
64 & 224 & 1.082 & 1.089 & 1.397 & 1.3 & 0.01 \\
144 & 336 & 0.936 & 0.947 & 1.399 & 1.5 & 0.01 \\
256 & 448 & 0.856 & 0.867 & 1.501 & 1.8 & 0.02 \\
576 & 672 & 0.806 & 0.820 & 2.125 & 2.6 & 0.04 \\
1{,}024 & 896 & 0.877 & 0.897 & 3.352 & 3.8 & 0.04 \\
2{,}304 & 1344 & 0.550 & 0.564 & 3.100 & 5.6 & 0.04 \\
4{,}096 & 1792 & 0.443 & 0.453 & 3.213 & 7.3 & 0.06 \\
\bottomrule
\end{tabular}
\tablekey{One shared: one vector on all eligible tokens. Predicted: the same vector with zero gradient at the image tokens. Readout / context: one vector on \grp{Last} and \grp{Answer-cue} and one on all other eligible tokens, with the same total norm. Ratio: readout / context over one shared. Image share: part of the one-shared value contributed by image tokens.}
\end{table}

\textbf{Native resolution.}
The LLaVA measurements below use the frozen LLaVA backbone of Table~\ref{tab:vqav2-coco} on VQAv2, OK-VQA, and COCO Captioning, with the prompt templates and structural token groups of the main experiments. All prompts are demo-free, the loss is the answer CE of Equation~\ref{eq:stave-ce} in nats per answer token, and all images come from the training pool, never from an evaluation split. Each configuration learns one or two vectors, which are the rows of $W$ in Equation~\ref{eq:stave-injection}.

\textbf{Effect of one token.}
The eligible tokens form three sets: the readout tokens (\grp{Last} and \grp{Answer-cue}), the query-image tokens, and the remaining context tokens, which are all text tokens in the LLaVA prompts. For each set, one vector is tied to its tokens with the count normalization of Equation~\ref{eq:stave-injection}, and all other tokens stay unchanged. On 300 prompts per task, the negative mean gradient of this vector at zero update is estimated on one half and scaled to unit norm, and the first-order loss decrease along this direction is averaged over the other half. Dividing it by the square root of the number of tokens in the set gives the average decrease for a unit shift at one token of the set. A readout token exceeds a query-image token by a factor of about 19{,}000 on VQAv2 and OK-VQA and about 14{,}000 on COCO. With the same halves, query-image tokens contribute 0.2\% of the first-order decrease of one shared vector on VQAv2 and 0.04\% on COCO, although they form 95\% and 99\% of the eligible tokens.

\textbf{Equal update size.}
For each configuration, the update direction is the negative mean zero-update gradient of its vectors over 96 images, scaled to unit total norm. Because the columns of $H(p)$ in Equation~\ref{eq:stave-injection} are orthonormal when every row of $W$ is nonempty, vectors of total norm $\rho$ change the input embeddings by exactly $\rho$ in Frobenius norm. The loss is measured on 64 other images at $\rho=0.01$, a value fixed before the measurement, so no optimizer or learning rate is involved. The random configurations assign the eligible tokens of each prompt at random to two vectors with the readout and context counts, under three partition seeds. Table~\ref{tab:allocation} reports the resulting loss decreases. At the same update norm, the readout/context split lowers the held-out loss 3.4 to 7.5 times as much as one shared vector, while random splits with the same counts stay within 5\% of it.

\begin{table}[!htbp]
\centering
\caption{Held-out loss decrease at update norm $0.01$ on LLaVA. Ratios are relative to one shared vector.}
\label{tab:allocation}
\footnotesize
\setlength{\tabcolsep}{7pt}
\renewcommand{\arraystretch}{1.15}
\begin{tabular}{l|c|c|c}
\toprule
\rowcolor{tableheader}
\textbf{Task} & \textbf{One shared vector (nats)} & \textbf{Readout / context} & \textbf{Random splits} \\
\midrule
VQAv2  & 0.113 & \cellcolor{stavehighlight}7.52$\times$ & 0.99--1.00$\times$ \\
OK-VQA & 0.094 & \cellcolor{stavehighlight}7.31$\times$ & 1.01--1.03$\times$ \\
COCO   & 0.028 & \cellcolor{stavehighlight}3.43$\times$ & 0.95--1.01$\times$ \\
\bottomrule
\end{tabular}
\end{table}

\subsection{Count normalization}
\label{app:normalization}

Table~\ref{tab:normalization} varies only the scale applied to each learned vector and keeps every other setting of the Table~\ref{tab:vqav2-coco} entry of the model and dataset. Write $n_k$ for the number of tokens of structural token group $k$ in the prompt and $N_r$ for the merged count of the groups that share vector $r$. ``Vector count'' divides a vector by $\sqrt{N_r}$ and is the normalization used in all other experiments. ``Group count'' divides each token by the square root of its own group count, so a vector that ties several groups is not renormalized after merging. ``Vector mean'' divides by $N_r$. ``None'' applies no scale. Every configuration uses the two-vector assignment, paired supervision, five epochs, and the learning rate and demo count of the model and dataset, and reports its best epoch.

\begin{table}[!htbp]
\centering
\caption{Count normalization ablation on VQAv2 (accuracy, \%) and COCO (CIDEr).}
\label{tab:normalization}
\footnotesize
\setlength{\tabcolsep}{7pt}
\renewcommand{\arraystretch}{1.15}
\begin{tabular}{lc|cc|cc}
\toprule
 & & \multicolumn{2}{c|}{\textbf{VQAv2}} & \multicolumn{2}{c}{\textbf{COCO}} \\
\cmidrule(lr){3-4}\cmidrule(lr){5-6}
\rowcolor{tableheader}
\textbf{Normalization} & \textbf{Scale} & Idefics2 & LLaVA & Idefics2 & LLaVA \\
\midrule
\rowcolor{stavehighlight}
Vector count & $1/\sqrt{N_r}$ & \textbf{73.46} & \textbf{76.12} & \textbf{134.38} & \textbf{131.78} \\
Group count & $1/\sqrt{n_k}$ & \underline{72.36} & 75.73 & 133.14 & 129.10 \\
Vector mean  & $1/N_r$        & 72.10 & \underline{75.87} & \underline{133.80} & 130.09 \\
None        & $1$            & 67.61 & 74.96 & 132.98 & \underline{130.23} \\
\bottomrule
\end{tabular}
\end{table}

\textbf{What the comparisons support.}
Vector count is the best entry for every model and dataset. Group count is below vector count for every model and dataset, by 1.10 points on Idefics2 VQAv2 and by 2.68 CIDEr on LLaVA COCO, which supports computing the scale on merged groups rather than on the original groups. Removing the scale is the most damaging choice on VQAv2 for both models, by 5.85 points on Idefics2 and 1.16 points on LLaVA. The readout vector covers the final token and the answer cue, while the context vector covers all image tokens, 64 on Idefics2 and 729 on LLaVA, so an unscaled update injects far more energy through the context vector than through the readout vector at the same per-coordinate step.

\subsection{Injection depth}
\label{app:depth}

The depth ablation replaces the embedding injection by an update after a decoder block while retaining the readout vector, context vector, group assignments, and prompt-only mask. The Qwen-VL depths use the VizWiz and OK-VQA entries in Table~\ref{tab:vqa+finegrained}. Both backbones have 32 decoder blocks. A displayed depth $k/8$ means injection after block $4k$ for $k=1,\ldots,8$. Depth zero denotes the input embeddings.

\textbf{Raw scores and retained gain.}
Figure~\ref{fig:depth-raw} contains all four raw curves. VQA scores are accuracies in percentages. Captioning uses CIDEr \citep{vedantam2015cider}, on the scale reported in Table~\ref{tab:vqav2-coco}.
The retained gain in Figure~\ref{fig:depth-ablation}(b) is computed as
\begin{equation}
 \operatorname{RetainedGain}(\ell)
 =\frac{s_\ell-s_{\mathrm{zero}}}{s_{\mathrm{emb}}-s_{\mathrm{zero}}}\times100\%.
\label{eq:retained-gain}
\end{equation}
Here $s_\ell$ is the score at injection depth $\ell$, $s_{\mathrm{zero}}$ is the unadapted zero-shot score, and $s_{\mathrm{emb}}$ is the embedding endpoint. A negative retained gain means the depth-specific score is below zero-shot.

Embedding injection is best or tied for best in every curve. Later Idefics2 scores degrade sharply, while Qwen-VL retains more of its embedding gain on VizWiz than on OK-VQA. At depth one, after the last block, Proposition~\ref{prop:final-depth} explains why the context vector receives zero gradient and why later teacher-forced stopping decisions are not directly controllable.

\begin{figure}[!htbp]
\centering
\includegraphics[width=\linewidth]{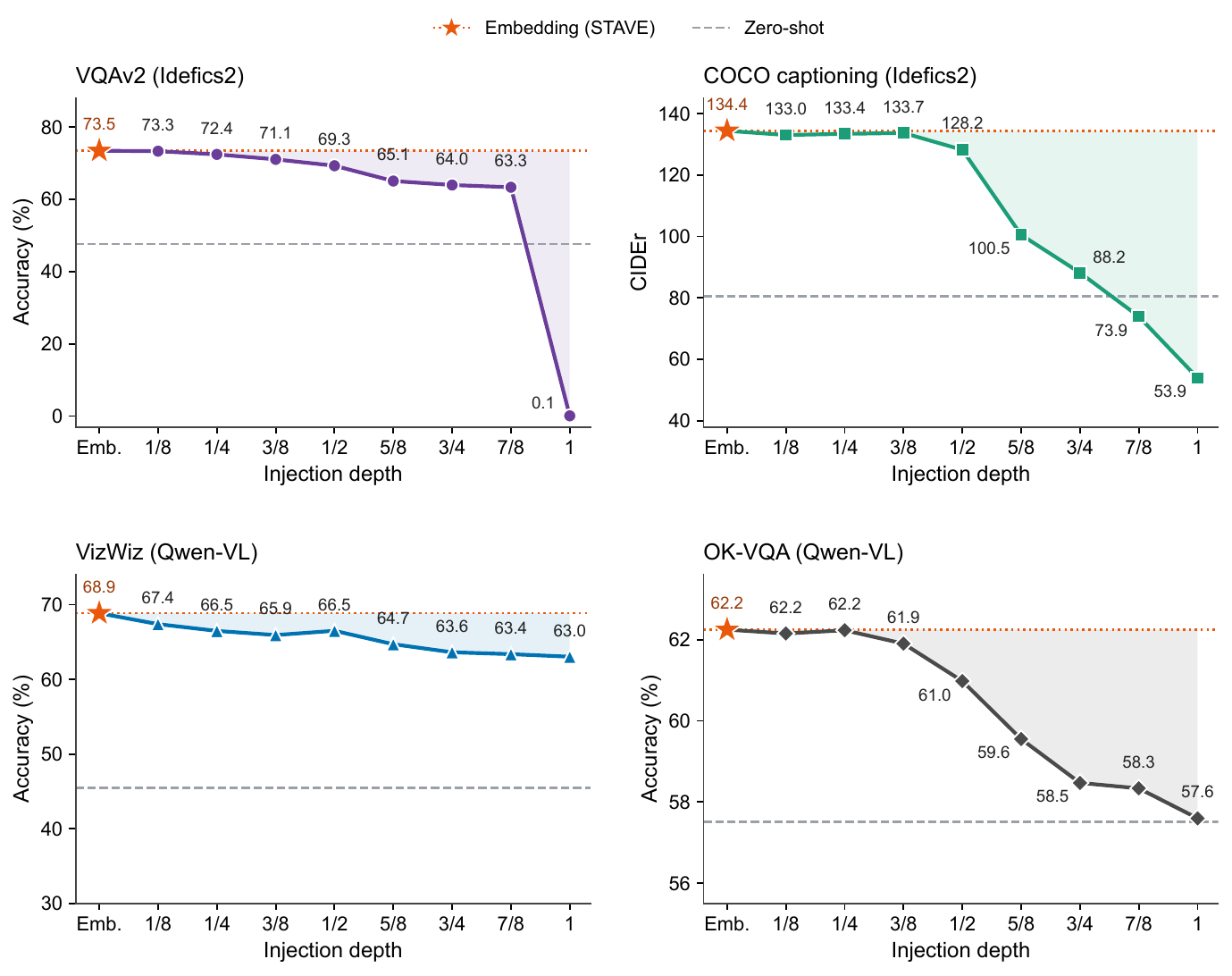}\par
\vspace{-3.2pt}
\caption{Scores at each injection depth on Idefics2 (top) and Qwen-VL (bottom).}
\label{fig:depth-raw}
\end{figure}

\section{Analysis of the learned vectors}
\label{app:analysis}

\subsection{Promoted tokens}
\label{app:promoted-tokens}

The learned vectors are added to input embeddings and act through the frozen decoder, so Table~\ref{tab:promoted-tokens} describes them by their effect on the next-token distribution. For a category with demo-free test prompts $p_1,\ldots,p_n$, the promotion of a vocabulary token $u$ is
\begin{equation}
 \Delta(u)=\frac{1}{n}\sum_{i=1}^{n}\Big[p_\theta\big(u\mid\widetilde E(p_i,W)\big)-p_\theta\big(u\mid E(p_i)\big)\Big],
\label{eq:promotion}
\end{equation}
where both distributions are taken at the final prompt token. Spellings with and without a leading space are merged, and whitespace tokens are omitted. Each cell lists the four tokens with the largest $\Delta(u)$ over 30 to 476 test prompts. A token is black when, lowercased and stripped of punctuation, it begins at least one reference answer of the category, and gray otherwise.

The checkpoints are the seed-41 runs of Tables~\ref{tab:vqav2-coco} and~\ref{tab:text} and the reported entries of Table~\ref{tab:vqa+finegrained}. Prompts are rendered as in the corresponding evaluation. VQAv2 and VizWiz categories are assigned from the question wording, the VizWiz unanswerable column uses the answerability label of the dataset, and the OK-VQA columns use its annotated knowledge categories. No category depends on a model prediction.

The same two vectors are added to every prompt of a task, yet the promoted tokens change with the category and are the answers it asks for. Over the 68 categories of the eight VQA checkpoints, excluding the catch-all other category, 95.9\% of the five most promoted tokens begin a reference answer. The table shows a subset of these categories. Adding demos without the learned vectors, as in the source prompt $p_i^{\mathrm{src}}$ of paired training, gives 91.8\% on the same categories, with four demos on VQAv2 and one on the other tasks. On the 90 TV benchmark checkpoints the two rates are 92.2\% and 82.2\%.

Several entries are the first subword of a longer answer, for example \tok{gir} for giraffe, \tok{sam} and \tok{son} for Samsung and Sony, \tok{vit} for vitamin, and \tok{un} for unanswerable. LLaVA answers without a leading space, and its tokenizer then splits more answers, as in \tok{base}, \tok{ten}, and \tok{soc} for baseball, tennis, and soccer.

\begin{table*}[!htbp]
\centering
\caption{Tokens most promoted by STAVE at the final prompt token. Gray tokens begin no reference answer of the column.}
\label{tab:promoted-tokens}
\providecommand{\tok}[1]{{\fontfamily{zi4}\selectfont #1}}
\providecommand{\miss}[1]{\textcolor{gray}{#1}}
\newcolumntype{P}{>{\raggedright\arraybackslash}p{7.2em}}
\newcolumntype{Q}{>{\raggedright\arraybackslash}p{8.3em}}
\newcolumntype{K}[1]{>{\centering\arraybackslash\hsize=#1\hsize}X}
\scriptsize
\setlength{\tabcolsep}{3pt}
\renewcommand{\arraystretch}{1.18}
{\footnotesize\textbf{(a) VQAv2, Table~\ref{tab:vqav2-coco} checkpoints}}\par\vspace{2pt}
\begin{tabularx}{\linewidth}{Q|K{0.923}K{0.989}K{1.088}}
\toprule
\rowcolor{tableheader}
\textbf{Model} & \textbf{Color} & \textbf{Sport} & \textbf{Animal} \\
\midrule
Idefics2-8B-base & \tok{blue, white, black, brown} & \tok{tennis, baseball, soccer, sk} & \tok{cat, horse, gir, z} \\
LLaVA & \tok{white, black, blue, red} & \tok{base, ten, soc, sk} & \tok{cat, g, dog, horse} \\
\midrule
\rowcolor{tableheader}
\textbf{Model} & \textbf{Food} & \textbf{Vehicle} & \textbf{Time or weather} \\
\midrule
Idefics2-8B-base & \tok{pizza, hot, sandwich, cake} & \tok{bus, motor, truck, train} & \tok{afternoon, winter, morning, sun} \\
LLaVA & \tok{pizza, hot, grass, sand} & \tok{bus, bike, motor, pickup} & \tok{winter, day, summer, s} \\
\bottomrule
\end{tabularx}
\par\vspace{7pt}
{\footnotesize\textbf{(b) OK-VQA, Table~\ref{tab:vqav2-coco} and Table~\ref{tab:vqa+finegrained} checkpoints}}\par\vspace{2pt}
\begin{tabularx}{\linewidth}{Q|K{1.074}K{1.000}K{0.926}}
\toprule
\rowcolor{tableheader}
\textbf{Model} & \textbf{Vehicles and transport} & \textbf{Brands and products} & \textbf{Objects and material} \\
\midrule
Idefics2-8B-base & \tok{stop, double, fighter, dies} & \tok{sam, son, apple, laptop} & \tok{t, clock, wood, plastic} \\
Idefics2-8B-instruct & \tok{stop, double, vol, c} & \tok{apple, son, sam, computer} & \tok{t, bear, wood, metal} \\
Qwen-VL-7B & \tok{bo, schw, east, stop} & \tok{apple, c, schw, monitor} & \tok{sit, cut, suit, leather} \\
\midrule
\rowcolor{tableheader}
\textbf{Model} & \textbf{Sports and recreation} & \textbf{Cooking and food} & \textbf{Weather and climate} \\
\midrule
Idefics2-8B-base & \tok{ski, surf, tennis, sk} & \tok{b, vit, c, cake} & \tok{sun, rain, winter, cold} \\
Idefics2-8B-instruct & \tok{ski, tennis, sk, surf} & \tok{b, c, vit, f} & \tok{rain, sun, cloud, cold} \\
Qwen-VL-7B & \tok{surf, fall, ser, run} & \tok{carrot, lunch, spinach, k} & \tok{dusk, str, rain, star} \\
\bottomrule
\end{tabularx}
\par\vspace{7pt}
{\footnotesize\textbf{(c) VizWiz, Table~\ref{tab:vqa+finegrained} checkpoints}}\par\vspace{2pt}
\begin{tabularx}{\linewidth}{Q|K{1.074}K{0.963}K{0.963}}
\toprule
\rowcolor{tableheader}
\textbf{Model} & \textbf{Unanswerable} & \textbf{Identify the object} & \textbf{Color} \\
\midrule
Idefics2-8B-instruct & \tok{un, no, yes, windows} & \tok{un, laptop, computer, tv} & \tok{white, grey, black, blue} \\
Qwen-VL-7B & \tok{un, \miss{priority}, pink, unknown} & \tok{un, computer, \miss{sweet}, mac} & \tok{un, white, black, blue} \\
\bottomrule
\end{tabularx}
\par\vspace{7pt}
{\footnotesize\textbf{(d) TV benchmark, Table~\ref{tab:text} checkpoints}}\par\vspace{2pt}
\begin{tabularx}{\linewidth}{P|K{0.711}K{1.206}K{1.082}}
\toprule
\rowcolor{tableheader}
\textbf{Model} & \textbf{Location to continent} & \textbf{Location to religion} & \textbf{Person to language} \\
\midrule
Pythia 6.9B & \tok{An, Europe, Asia, Af} & \tok{Muslim, Christian, Jewish, \miss{Chinese}} & \tok{English, French, Italian, Spanish} \\
Pythia 12B & \tok{An, Europe, Asia, Af} & \tok{Muslim, Christian, Jewish, Jew} & \tok{English, French, Italian, Spanish} \\
GPT-J 6B & \tok{Ant, Europe, Asia, Af} & \tok{Muslim, Christian, Jewish, \miss{Chinese}} & \tok{English, French, Spanish, Italian} \\
\midrule
\rowcolor{tableheader}
\textbf{Model} & \textbf{Country to capital} & \textbf{Present to gerund} & \textbf{English to French} \\
\midrule
Pythia 6.9B & \tok{Paris, Val, Pr, Ber} & \tok{staying, blessing, begging, providing} & \tok{par, comme, m, toujours} \\
Pythia 12B & \tok{Paris, Ber, Pr, Val} & \tok{rushing, staying, handing, guiding} & \tok{comme, par, image, toujours} \\
GPT-J 6B & \tok{Paris, Ber, Pr, Val} & \tok{singing, making, banning, baking} & \tok{dire, par, personnel, pour} \\
\bottomrule
\end{tabularx}
\end{table*}

\subsection{States and decoded tokens}
\label{app:representations}

The state of a prompt is the output of the last decoder block at the final prompt token, computed without the update and with the STAVE update.

On LLaVA VQAv2 we leave out the catch-all other category, which keeps 1,585 validation questions. Without the update, the next token is \tok{<|im\_start|>} for 99.1\% of them. In a principal component analysis (PCA) of the states of both conditions, the first component explains 50\% of the variance and has cosine 0.99 with the difference between the two condition means, so Figure~\ref{fig:representation-map}(a) shows the second and third components. Along them the no-update states stay in one cluster, and the STAVE states leave it in a different direction for each answer type. Panel (b) refits the PCA on the STAVE states of the six object categories. The no-update states still encode the question type: in the full space, a nearest-centroid classifier recovers it with 99.2\% accuracy under five-fold cross-validation.

Figure~\ref{fig:representation-appendix}(a) decodes the state after each block with the final normalization layer and the output projection of the frozen model, for three questions of Figure~\ref{fig:representation-map}(b). We call an answer found late and lost when the reference first token, with and without a leading space, is the top decoded token after some block in the last 40\% of the decoder but not at the output. Without the update this holds for 10\% to 58\% of the test prompts of the seven LMM checkpoints in panel (e) and for 15\% of the TV benchmark prompts. STAVE outputs that token for 73\% to 95\% of these prompts. For these prompts the answer is present inside the frozen decoder, and the learned vectors keep it through the last blocks.

Panels (b) to (d) project the states of all 18 TV benchmark tasks onto the two leading principal components of the STAVE states, and the no-update states onto the same axes. Several tasks share test prompts exactly, which we detect as prompts with identical no-update states. A shared prompt has one no-update state and one next token, while STAVE moves it with the vectors of each task. List first and list last share all 200 test prompts. With the vectors of each task, STAVE answers both tasks correctly on all of them on every LLM, against at most 2.5\% for the frozen model. Present to gerund and present to past share 9 to 29 prompts per LLM, with 100\% against 0\%. Panel (f) counts every pair of tasks with a shared test prompt, 265 to 277 per LLM. STAVE answers both tasks correctly for 92\% to 94\% of them and the frozen model for at most 2.6\%. On GPT-J 6B and Pythia 12B the cards show next letter and to uppercase in place of the translation pair.

Figure~\ref{fig:depth-mechanism} applies the same decoding to the first answer token on the Idefics2 and LLaVA VQAv2 and COCO checkpoints of Table~\ref{tab:vqav2-coco}, with 128 held-out demo-free prompts per model and dataset and a prompt with four demos and no update as a reference. Without the update, on three of the four model and dataset pairs the correct token is the top decoded token before the last block for 33\% to 48\% of the prompts and is lost at the output. With both vectors it stays top-1 through the output for 64.8\% to 70.3\% of the prompts, against 55.5\% to 64.1\% with four demos. In panel (b), the change that the vectors cause in the state stays between 0.7 and 1.3 times the norm of the state from block 16 to the output, so the frozen blocks do not suppress the update. In panel (c), the attention from the final token to the query image, averaged over layers, rises with the vectors for every model and dataset.

\begin{figure}[p]
\centering
\includegraphics[width=\textwidth,height=0.9\textheight,keepaspectratio]{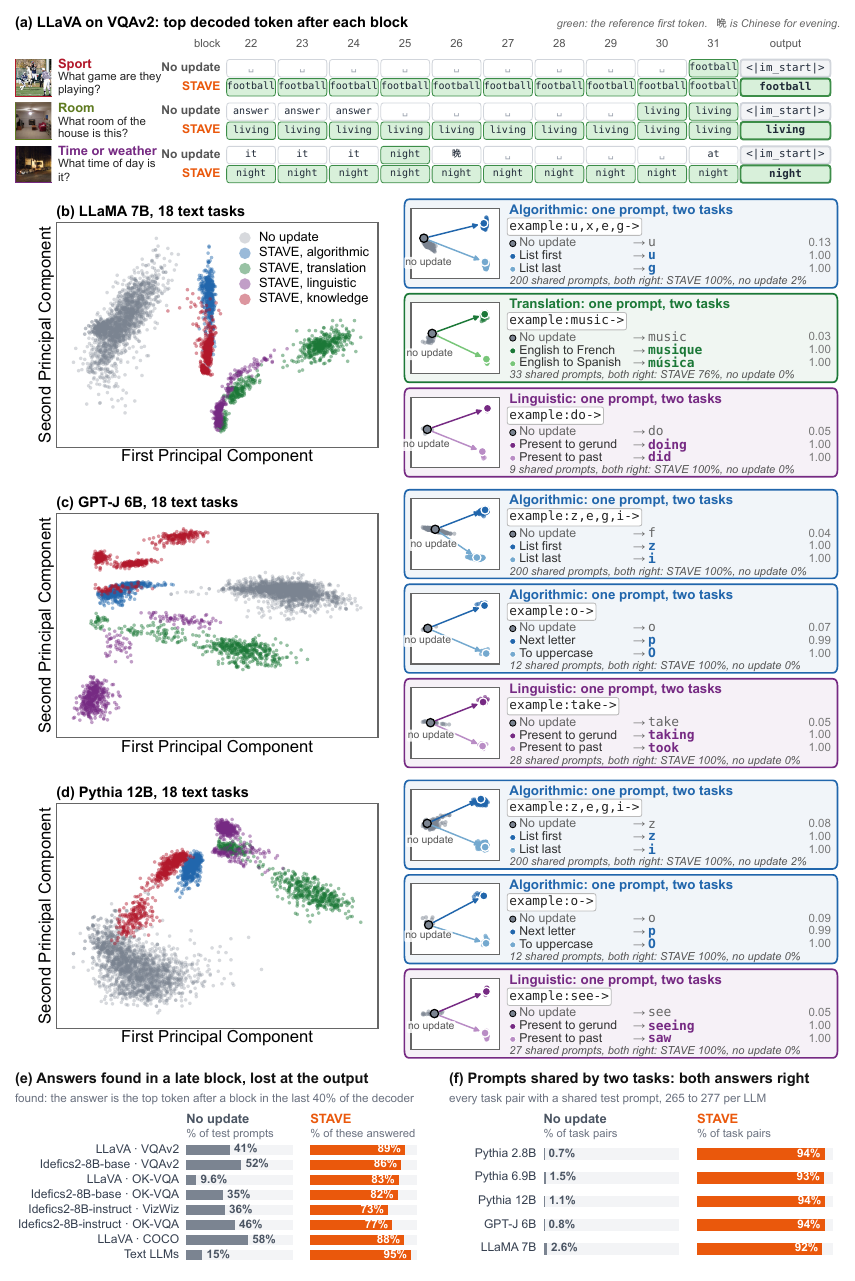}\par
\vspace{-3.2pt}
\caption{Final-token analysis beyond Figure~\ref{fig:representation-map}. (a) Top decoded token after each block. (b) to (d) TV benchmark states. (e) Answers found late and lost. (f) Shared prompts answered correctly for both tasks.}
\label{fig:representation-appendix}
\end{figure}

\begin{figure}[!htbp]
\centering
\includegraphics[width=\linewidth]{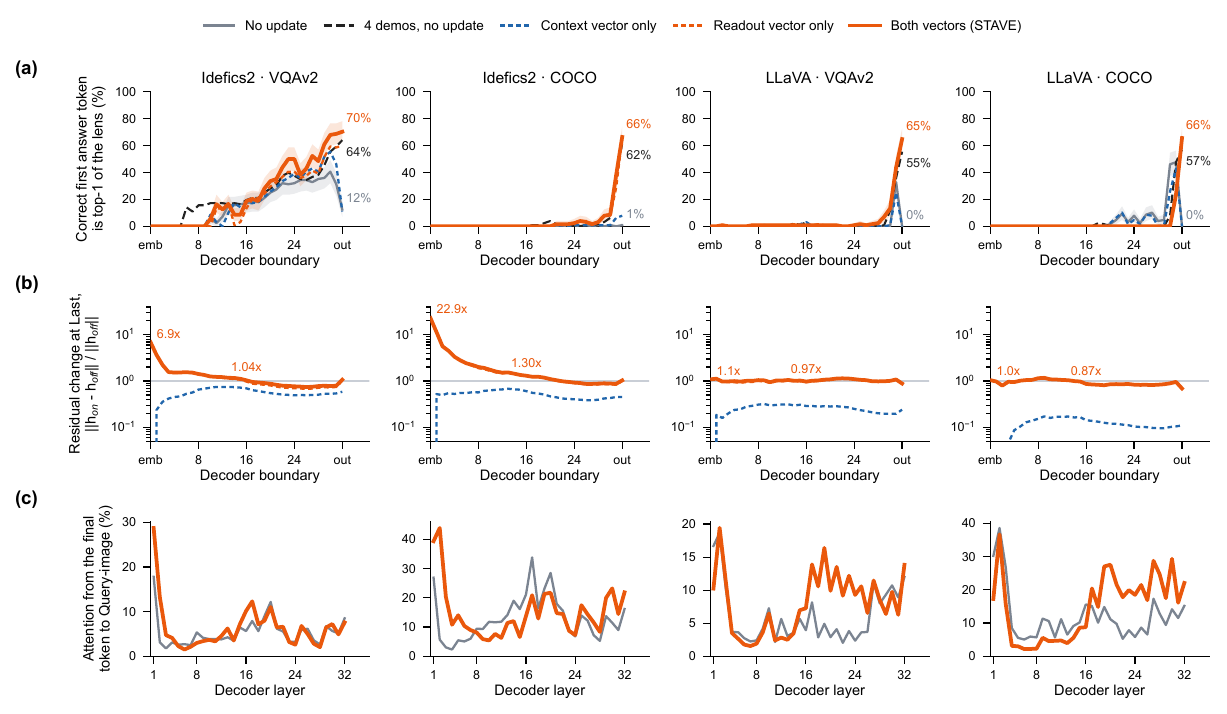}\par
\vspace{-3.2pt}
\caption{Layerwise effect of the learned vectors at the final prompt token. (a) Correct first token decoded as top-1. (b) Relative change of the state. (c) Attention to the query image.}
\label{fig:depth-mechanism}
\end{figure}

\subsection{What each learned vector changes}
\label{app:row-roles}

Figures~\ref{fig:row-roles-main} and~\ref{fig:row-roles} apply the deployed vectors of the Idefics2 entries of Table~\ref{tab:vqav2-coco} to held-out demo-free prompts with no update, with one of the two vectors, and with both vectors. No vector is retrained, so the four conditions differ only in which vectors are added. Figure~\ref{fig:row-roles-main}(a) and Figure~\ref{fig:row-roles}(a) read the first answer token on 1{,}000 prompts per task, for VQAv2 and OK-VQA. A token counts as correct when it matches the answer of one annotator, the reference used for training, so these rates are lower than the VQA scores of Table~\ref{tab:vqav2-coco}, which give full credit to any answer given by at least three annotators. VQAv2 number questions are left out because every number answer begins with the same token on this backbone. The left group of bars asks whether the correct token is the most probable among the first tokens of the task's answers, and the right group whether it is the generated token.

On VQAv2 and OK-VQA, the context vector alone makes the correct answer the most probable task answer for 10 and 14 more percentage points of the prompts, yet it is generated for only 0.3\% and 10.5\% of them, because without the readout vector the most probable next token is usually not an answer. The readout vector alone makes the model answer. Both vectors give the highest rate on both measures, so the improvement that the context vector brings to the choice among answers reaches the generated answer mostly when the readout vector is present. In Figure~\ref{fig:row-roles-main}(b), for each prompt, let $\Delta$ be a change of the state at the final prompt token. The figure reports the share $\|\bar\Delta\|_2^2/\overline{\|\Delta\|_2^2}$ that is common to all prompts, where the bars denote means over prompts.

Figure~\ref{fig:row-roles}(b) to (d) show what the prompt-specific change of the context vector does. On VQAv2 yes/no questions whose reference is no, the readout vector alone still answers yes for 45\% of them, and adding the context vector lowers this rate to 26\% while the rate on questions whose reference is yes stays near 80\%. In panel (c), the readout vector is kept at every layer, while the tokens after the context see the change that the context vector makes to the context states only inside one block of four decoder layers. Layers 12 to 15 alone recover 94\% of the yes/no accuracy gain and layers 16 to 19 alone 65\%, while layers 20 to 31 recover almost none. In layers 12 to 15 the context vector also raises the attention from the final prompt token to the query image the most. Panel (d) replaces the query image by the image of another evaluation sample and keeps the question and the reference answer. On all three tasks the answer loss rises more when the context vector is present, by 0.23, 0.16, and 0.06 nats per token, and every paired 95\% bootstrap interval excludes zero. The context vector therefore makes the answer depend more on the query image, and the readout vector makes this dependence visible in the generated answer.

\begin{figure}[!htbp]
\centering
\includegraphics[width=\linewidth]{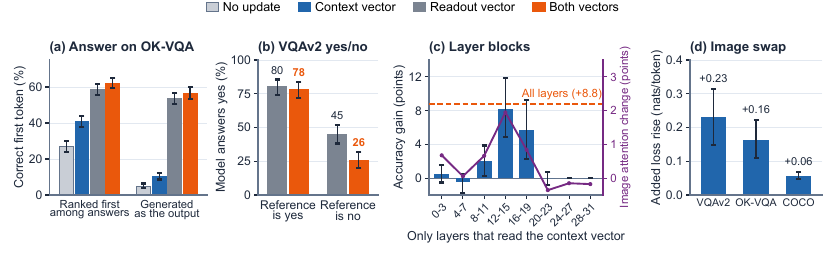}\par
\vspace{-3.2pt}
\caption{More tests of each learned vector on held-out Idefics2 prompts.}
\label{fig:row-roles}
\end{figure}

\subsection{Answer-prefix probabilities}
\label{app:answer-prefix-probabilities}

Figure~\ref{fig:answer-prefix-appendix} compares the LLaVA and Idefics2 entries of Table~\ref{tab:vqav2-coco} and the Idefics2 and Qwen-VL entries of Table~\ref{tab:vqa+finegrained}. In each of its four rows, STAVE has the largest share of examples above the diagonal and the highest mean answer-prefix probability of the four trained methods. Table~\ref{tab:vqav2-coco} rows compare VQAv2 and OK-VQA against PT-Pre, using its 20-token checkpoint. Table~\ref{tab:vqa+finegrained} rows compare VizWiz and OK-VQA against zero-shot. Each row uses the corresponding table's method checkpoints and 512 fixed seed-41 evaluation examples per dataset, paired across methods. The two OK-VQA protocols are kept separate and assigned different colors.

Each point compares the probability that a continuation begins with an annotated answer. We sum disjoint reference-prefix probabilities after removing duplicate token sequences and extensions of shorter reference prefixes. We include tokenizations with and without a leading space, without an end-of-sequence token or length normalization. Each method retains its reported inference prompt. The axes use a logit scale with probabilities displayed as percentages. Marginal histograms show sample counts, and dashed ellipses summarize the empirical 90\% sample spread. Probabilities below 0.0001\% are clipped for display only. Each row is read separately, since its reference method and protocol differ.

\begin{figure*}[!htbp]
\centering
\includegraphics[width=\textwidth,height=0.86\textheight,keepaspectratio]{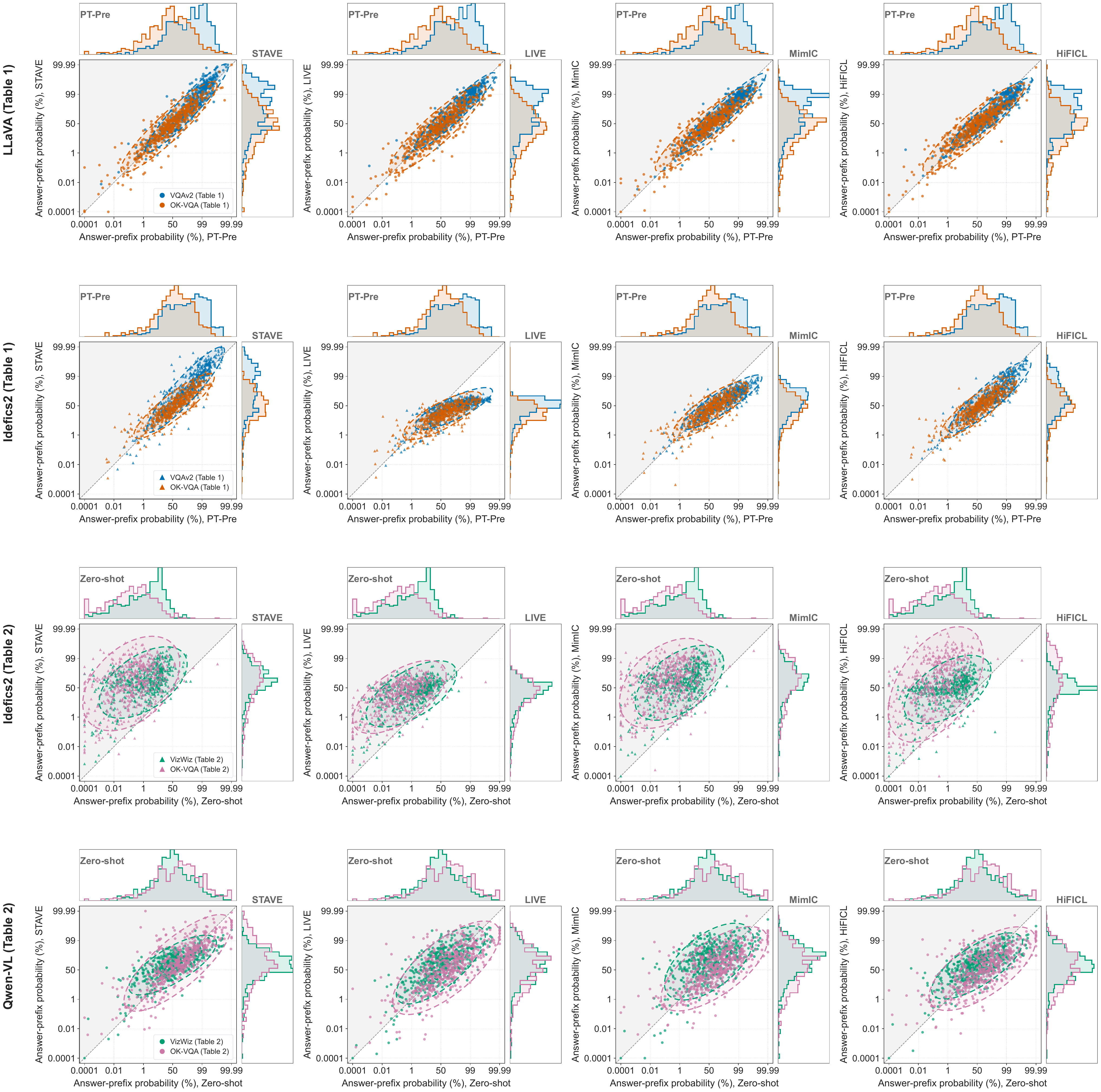}\par
\vspace{-3.2pt}
\caption{Answer-prefix probabilities against PT-Pre (top two rows) and zero-shot (bottom two rows).}
\label{fig:answer-prefix-appendix}
\end{figure*}

\subsection{Qualitative examples}
\label{app:qualitative}

Figures~\ref{fig:qualitative-appendix-1} and~\ref{fig:qualitative-appendix-2} compare STAVE with LIVE, MimIC, and HiFICL on demo-free evaluation examples. The VQA examples come from VQAv2, OK-VQA, and VizWiz with the checkpoints of Tables~\ref{tab:vqav2-coco} and~\ref{tab:vqa+finegrained}, and each is a question that STAVE answers correctly and all three baselines answer incorrectly. The captions come from the Idefics2 and LLaVA COCO checkpoints of Table~\ref{tab:vqav2-coco}. STAVE answers questions that depend on text in the image, as for the bill and the toothpaste, and its captions name the object that the reference captions mention where the baselines name a more common one, for example a girl instead of a doll or a book instead of a laptop.

\clearpage
\begin{figure*}[p]
\centering
\includegraphics[width=\textwidth,height=0.88\textheight,keepaspectratio]{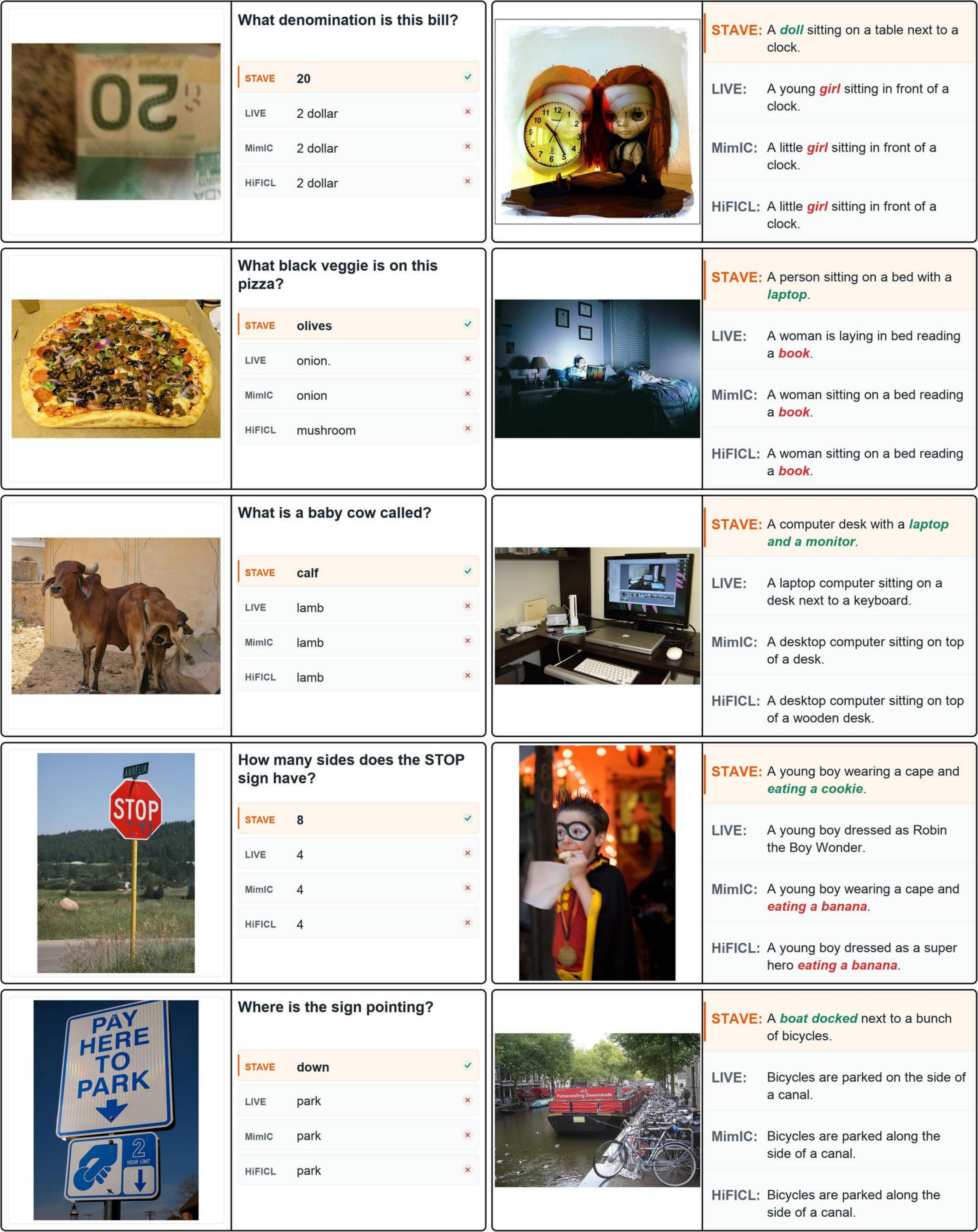}\par
\vspace{-3.2pt}
\caption{More qualitative examples on VQA (left) and COCO (right), set 1.}
\label{fig:qualitative-appendix-1}
\end{figure*}

\clearpage
\begin{figure*}[p]
\centering
\includegraphics[width=\textwidth,height=0.88\textheight,keepaspectratio]{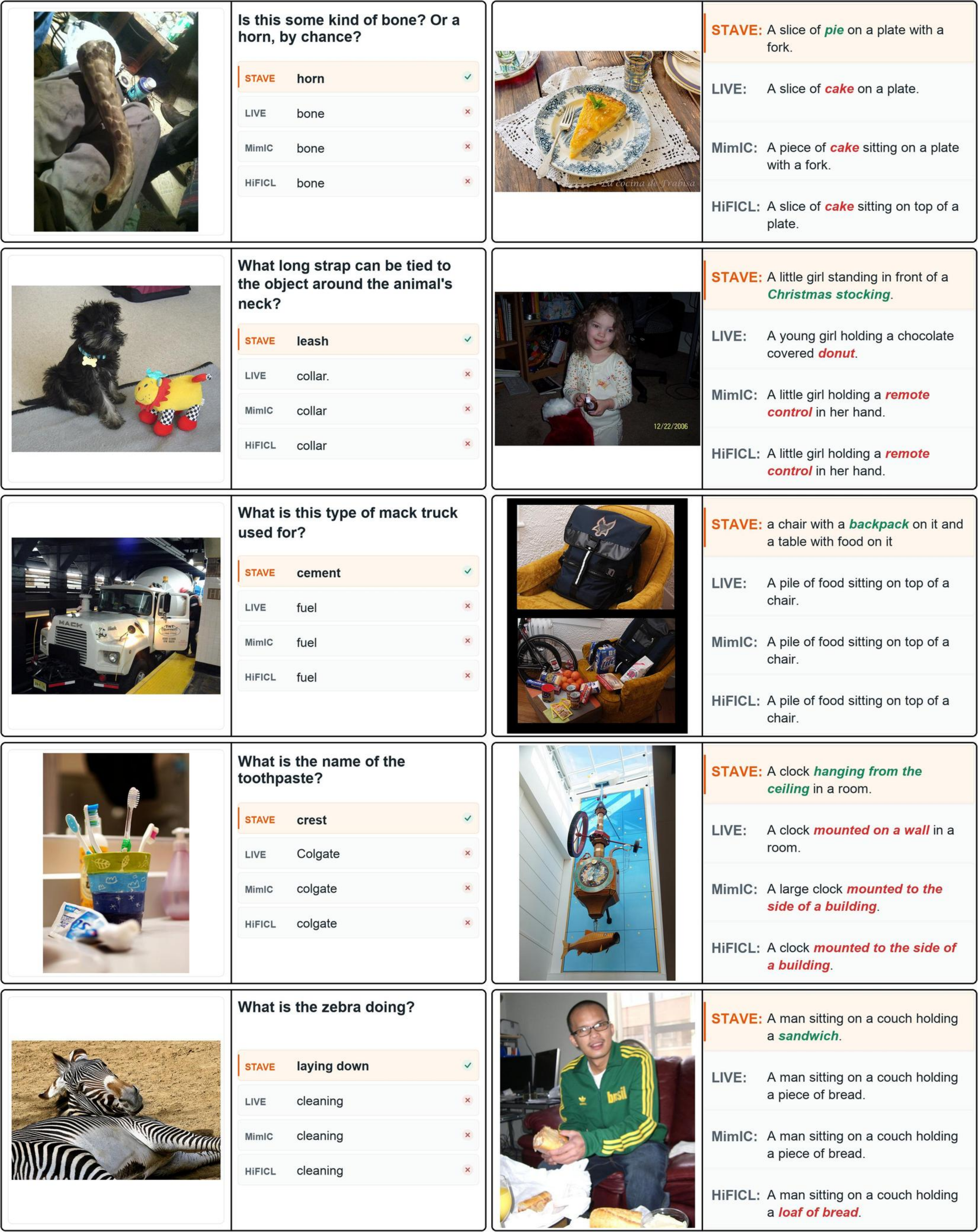}\par
\vspace{-3.2pt}
\caption{More qualitative examples on VQA (left) and COCO (right), set 2.}
\label{fig:qualitative-appendix-2}
\end{figure*}

\clearpage
\section{Limitations and future work}
\label{app:limitations}

STAVE runs at zero-shot inference cost, and its two vectors are trained once per task by backpropagation through the frozen decoder, so training memory and time grow with the backbone. STAVE learns each task from labeled training samples. With a tenth of them it loses a few points, less than MimIC and HiFICL on Idefics2 (Figure~\ref{fig:sample-size}). Training and injection act on the input embeddings, so STAVE requires access to the model weights.

Our LMM experiments cover prompts with one query image and short answers or captions. A natural next step is to apply STAVE to interleaved multi-image and video prompts~\citep{li2024llava,zhang2025learning} and to tasks that require temporal understanding~\citep{ding2025language}. Another application is long-form reasoning, where steering vectors already change the behavior of thinking language models~\citep{venhoff2025understanding}. More broadly, a task stored in two vectors adds no prompt tokens and only $2d$ parameters, so one deployed backbone can serve many downstream tasks, including on devices with limited memory.

\section{Impact statement}
\label{app:impact}

This work aims to make task adaptation of frozen LLMs and LMMs cheaper. STAVE removes demos from the prompt at inference, which lowers the compute, memory, and energy spent on each query, and it stores a task in two vectors, so many tasks can share one deployed backbone. Because the backbone stays frozen, STAVE inherits the biases and failure modes of the pretrained model, including hallucinated content in generated captions, and the same low cost also applies to tasks trained on harmful data. Deployments should therefore keep the safeguards and usage policies of the underlying model.

\end{document}